\newcommand{\Rd}{\mathbb{R}^d}
\def\vv{{\bm{v}}}
\newcommand{\cM}{\mathcal{M}}
\newcommand{\cR}{\mathscr{R}}
\newcommand{\Rdn}[1]{\mathscr{R}\{#1\}}
\newcommand{\Rdns}[1]{\mathscr{R}^*\{#1\}}
\newcommand{\Sc}{\mathscr{S}}
\newcommand{\Rnorm}[1]{\left\|#1\right\|_{\mathscr{R}}}
\newcommand{\R}{\mathbb{R}}
\renewcommand{\S}{\mathbb{S}}
\newcommand{\SR}{\S^{d-1}\times \R}
\newcommand{\relu}[1]{\left[#1\right]_+} 
\newcommand{\cf}{\emph{cf.}~}
\newcommand{\ie}{\emph{i.e.},~}
\newcommand{\eg}{\emph{e.g.},~}
\newcommand{\Rbar}[1]{\overline{R}(#1)}
\newcommand{\Rbarone}[1]{\overline{R}_1(#1)}
\DeclareMathOperator{\Lip}{Lip}
\newcommand{\norm}[1]{\left\lVert{#1}\right\rVert}
\newcommand{\abs}[1]{\left\lvert{#1}\right\rvert}
\newcommand{\scal}[2]{\langle #1,#2\rangle}
\newcommand{\oR}{\overline{R}}
\newcommand*\diff{\mathop{}\!\mathrm{d}}
\newcommand{\intbias}{c}
\def\relu#1{\left[{#1}\right]_+}
\newcommand{\half}{\frac{1}{2}}
\newcommand{\sign}{\operatorname{sign}}
\def\prn#1{\left({#1}\right)} 
\newtheorem{definition}{Definition}[section]
\newtheorem{theorem}[definition]{Theorem}
\newtheorem{proposition}[definition]{Proposition}
\newtheorem{lemma}[definition]{Lemma}
\newtheorem{remark}[definition]{Remark}
\numberwithin{equation}{section}
\title{Integral representations of shallow neural network
with Rectified Power Unit activation function}
\author{Ahmed Abdeljawad%
\thanks{Johann Radon Institute,
Altenberger Straße 69,
A-4040 Linz,
Austria}
\, and Philipp Grohs\footnotemark[1]\,
\thanks{Faculty of Mathematics,
University of Vienna,
Oskar-Morgenstern-Platz~1,
A-1090 Vienna, Austria}
\thanks{Research Platform Data Science @ Uni Vienna,
Währinger Straße 29/S6,
A-1090 Vienna, Austria}
\\
ahmed.abdeljawad@ricam.oeaw.ac.at,\; philipp.grohs@univie.ac.at}
\date{\vspace{-5ex}}
\begin{document}

\maketitle

\begin{abstract}
In this effort, we derive a formula for the integral representation
of a shallow neural network with the Rectified Power Unit activation function. 
Mainly, our first result deals with the univariate case of representation capability of RePU shallow networks.
The multidimensional result in this paper  
characterizes the set of function that can be
represented with bounded norm and possibly unbounded width.

\end{abstract}

\textbf{Keywords:}  shallow neural network, integral representation, 
rectified power unit, Radon transform

\section{Introduction}

A theoretical understanding of which functions can be well approximated by neural networks
has been studied extensively in the field of approximation theory with
neural networks (\eg \cite{abdeljawad21,boelcskei19,caragea20, eldan16,gribonval19,
grohs21, hornik89, pinkus99, schmidt-hieber20,telgarsky16, yarotsky17}).
In particular, integral representation techniques for \emph{shallow neural networks}
have received increasing attention (\eg \cite{candes99, e20, ito91, kainen00, kainen10, korolev21,
long21, parhi21, pieper20}).
Indeed, many authors have focused on the complexity control and generalization
capability for shallow neural networks.
The main motivation which makes this direction of research active is that
"the norm of the weights is more important than the depth of the network"
\eg \cite{bartlett97, shpigle19, wei012, zhang16}.

A shallow neural network, with $k$ neurons can be defined
as a real valued function defined on $\Rd$ of the following form:
$$
\sum_{i=1}^k a_i \sigma(\scal{w_i}{x} -b_i) +c
$$
where $\sigma$ is a non-linear function and $k$ is called the width of the network.
Moreover, $w_i\in \Rd$ and $b_i \in \R$ are the inner weights and
$a_i \in \R$ are the outer weights, and $c\in \R$.
The non-linear
function which acts componentwise is called activation function.
Here we use non-decreasing
homogeneous activation functions such as
RePU activation function defined as
$$
\sigma(x) =\max(0, x)^p :=[x]_+^p
$$
for $p\in \mathbb{N}$. If $p=1$ then $\sigma$ is the
well known \emph{Rectified Linear Unit} (ReLU), with
has seen considerable recent empirical success.
Moreover, if $p=2$, then $\sigma$ is the
\emph{Rectified Quadratic Unit} (ReQU), which starts to get the attention
of researchers thanks to its representation capability.
RePU activation functions have many advantages
such as homogeneity and regularity.
Moreover, the representation capability of a RePU neural network makes it
an interesting activation function, for instance, RePU can represent the identity
on bounded domains for all $p\in \mathbb{N}$,
the square and hence the product on bounded domains for any integer $p\geq 2$.
Note that our analysis can be applied to a larger
class of activation functions satisfies similar properties
to RePU activation function.

In several papers, the study of deep learning shows that
the size of the network tends to infinity when the approximation
 error goes to zero.
In our paper, we analyze approximation while controlling the norm of the weights,
instead of the size of the networks.
Hence we can consider networks with unbounded
or infinite size. Thus our focus relies on the understanding of
the \emph{representation cost} which is determined by the minimal norm
required to represent a function by shallow neural networks with
an architecture of unbounded size.

\par

In this paper, we develop the characterization of the  size of weights
required  to realize a given function by an unbounded width  single hidden-layer
with RePU activation function, in the univariate and multivariate cases.
For simplicity, we assume that the target function is real valued,  but we can easily
generalize our results to the multi-dimensional output case.

\subsection{Contributions}\label{subsec:contibution}
Mainly, our contribution is motivated by the problem of
understanding approximation capability of shallow neural networks.
Hence, our approach uses similar techniques as in the papers
\cite{savarese2019infinite} and \cite{Ongie2019}.
Our contributions can be summarized as follows:

\par

First, we show that  for a function $f$  that can be presented as a
shallow neural network $\sum_{i=1}^k a_i[\scal{w_i}{x} - b_i]_+^p + c$,
such that $p\geq 1$,
the minimal cost $\inf \nicefrac 12 \sum_{i=1}^k\left( \|w_i \|_2^2 + |a_i|^{\nicefrac 2p}\right)$
( where $w_i\in \R^d$ and $a_i\in \R$ )
is the same as $\inf \nicefrac 12 \sum_{i=1}^k  |a_i|_{\nicefrac 1p}^{\nicefrac 1p}$
( such that $w_i\in \mathbb{S}^{d-1}$ and $a_i\in\R$)
for any input dimension $d\in  \mathbb{N}$.
It is worth to mention that this is a well known result for 
networks with ReLU activation function.

In Theorem \ref{thm:main_1d}, we deal with real valued functions defined on $\R$,
that is univariate functions with single-output.
We show that the minimal representation cost $\Rbar{f}$ (see \eqref{eq:opt0})
of a  univariate function $f$ to
be represented as infinite width single-hidden layer neural network
with RePU activation function
is the same as the following quantity:
$$
 \max \left( \, \frac{1}{p!}\int_{-\infty}^\infty \abs{f^{(p+1)}(b)} \diff b \,
, \,\frac{1}{p!} \abs{f^{(p)}(-\infty) + f^{(p)}(\infty)} \, \right)^{\nicefrac 1p}.
$$

In the multivariate case, we use tools and techniques
from harmonic analysis and measure theory  toward
the main result.
Mainly, we characterize the representational cost for any real valued function
defined on $\R^d$ that can be represented by RePU neural networks
such that $p $ and $d$  are odd integers. Mainly the cost is given by the following quantity
$$
\Rnorm{f}^p = \norm{\frac{\gamma_d}{p!}\Rdn{\Delta^{(d+p)/2}f}}_{\mathcal{M}^1}
	 	 =\norm{ \frac{\gamma_d}{p!}\partial_b^{d+p}\Rdn{f}}_{\mathcal{M}^1} 
$$
where $\mathscr{R}$ is the Radon transform,
we refer to Section \ref{sec:radonM_T} and Section \ref{sec:rnorm} for notations and  details.
Moreover, it is worth to mention that our characterization is valid only for
odd dimension and odd power in the RePU activation function.
We provide a comprehensive view of the the representational cost $\Rnorm{f}$
of a function $f$, which is in particular finite if and only if $f$ well approximated by
a bounded norm RePU network, \ie have finite $\mathscr{R}$-norm.
It turns out that using RePU activation function leads to the
appearance of unregularized monomial units which needed a correction.
Instead in the multivariate cases in \cite{Ongie2019},
authors only have dealt with a single unregularized linear unit.
Although our results might seem straightforward
at first glance, actually proving them  is quite delicate.

\subsection{Related work}\label{subsec:related_work}

In real life applications, shallow neural networks are less popular
than deep neural networks, although theoretically shallow networks are better understood.
For instance, integral representation of neural networks have been established to explore
the expressive power of shallow neural networks \eg
\cite{barron93, murata96}, and to estimate the approximation errors \eg \cite{kurkova12}. 

\par 

In \cite{barron93}, Barron showed that a single hidden-layer neural network
with sigmoidal activation function can approximated 
any continuous function on a bounded domain up to  any given precision,
where the approximation error related to the number of nodes in the network.
Instead, in \cite{candes99}, Cand\`es used single-hidden layer neural network
with oscillatory activation function  and methods of harmonic analysis to
the problem of representing a function in terms of shallow neural networks.

Ito \cite{ito91} established a uniform approximation on the whole space $\R^d$,
where he used a shallow neural network with step or sigmoid activation function.
In the later paper, the inversion formula of the Radon transform plays a crucial role,
during the proof of the integral representation. Moreover, Ito used a
function as  outer weights depends on the inner weights, which also satisfies
regularity and growth assumptions.

K\r{u}rkov\'a, in \cite{kurkova12},  showed estimates of network complexity, 
in representing a multivariate function
through shallow neural networks such that the author considered
shallow networks as integral transforms with kernels corresponding to network units.
Since the classes of functions that can be expressed 
as integral with kernels are sufficiently large,
many authors contributed to characterize and determine those classes,
for instance,  all sufficiently smooth compactly supported functions
or functions decreasing sufficiently rapidly at infinity
can be expressed as networks with infinitely many Heaviside perceptrons
\cf \cite{kainen07, kainen10, kukova97}.

Our work is also related to \cite{savarese2019infinite, Ongie2019}.
In \cite{savarese2019infinite}, authors  initiated the study of the representational cost
of univariate functions
in term of weight magnitude for shallow neural networks with
ReLU activation function. 
Ongie et al.,  have been extended this approach to understand what kind of multivariate functions
can be represented by infinite-width shallow neural networks with ReLU activation function,
where the key analysis tool used in their main problem is the Radon transform
\cf  \cite{Ongie2019}. 

In subsequent papers, shallow networks  have been studied from various perspectives
for different objectives.
For example,  in \cite{ji20}, generic scheme have been introduced to approximate
functions with the so-called neural tangent kernel
by sampling from transport mappings between the initial weights and
their desired values. Instead, in \cite{chizat20},
authors have been analyzed the training and generalization behavior
of infinitely wide two-layer
neural networks with homogeneous activation. From optimization point of view,
Pilanci and Ergen 
showed \cite{pilanci20} the equivalence between  $\ell^2$-regularized empirical risk minimization
of shallow ReLU networks and the finite-dimensional convex group-lasso problem.

Lastly, we remark that the theoretical analysis  and applications of shallow neural networks
is growing in terms of number of published papers and researchers contributing
to this field.

\subsection{Organization of the Paper}\label{subsec:organization}

\par

The organization of this paper is as follows. In Section \ref{sec:infnets},
we describe the problem setting and define notions of
shallow neural networks with RePU activation function.
In section \ref{sec:radonM_T}, we recall
the required preliminary concepts  and definitions
of Radon measure and Radon transform and its dual.
In Section \ref{sec:main_univ}, based on the definitions in Section \ref{sec:infnets},
we state our main result  for the case of univariate functions.
Finally, in Section \ref{sec:rnorm},  using definitions in Section \ref{sec:infnets}
and the Radon transform from Section \ref{sec:radonM_T},
we characterization the norm of weights
required to realize a multivariate function as  an unbounded single hidden-layer
with RePU activation function.

\section{Infinite width RePU neural networks}\label{sec:infnets}

Infinite width representations with ReLU activation function
have been considered in \eg\cite{Ongie2019, Petrosyan2020, savarese2019infinite}.
In the current section, we introduce shallow neural networks
with RePU activation function in the infinite width setting.

For fixed $p\in \mathbb{N}$, we can write two layers network with an unbounded number of neurons,
$d$ dimensional input and one-dimensional output,
as a function $g_{\theta, p}: \mathbb{R}^d \rightarrow \R$ given by
\begin{equation}\label{eq:finitenet}
g_{\theta, p}(x) = \sum_{i=1}^k a_i[\scal{w_i}{x} - b_i]_+^p + c,~~\text{for all}~~x\in\mathbb{R}^d
\end{equation}
$k$ represents the width where $k \in \mathbb{N}$, $w_i$ are the rows of $W\in \mathbb{R}^{k \times d}$
and the parameters $\theta = (k,W = (w_1,...,w_k),b = (b_1,...,b_k),a = (a_1,...,a_k),c)\in \Gamma $
such that
$$
\Gamma:=\left\{\theta=\left(k, W, b, a,c \right) \mid
k \in \mathbb{N}, W \in \mathbb{R}^{k \times d}, {b} \in \mathbb{R}^{k},
a \in \mathbb{R}^{k}, c \in \mathbb{R}\right\}.
$$

In the previous definition of the network function $g_{\theta, p} (x)$ we can exclude the bias $c$,
since it can be simulated through an additional unit,
for instance
$b_{k+1}=-1$, $a_{k+1}=c$, and $w_{k+1}= 0$.
Instead, the unregularized biases $\{b_i\}_{i=1}^k$ in the hidden layer are crucial to our analysis,
thus removing or regularizing them will lead to substantial changes.

For any $\theta\in\Gamma$ we define a cost function $C(\theta)$ by
\begin{equation}
C(\theta) =\frac{1}{2}\left( \|W\|_F^2 + \|a\|_{\nicefrac{2}{p}}^{\nicefrac{p}{2}} \right)
= \frac{1}{2}\sum_{i=1}^k\left( \|w_i\|_2^2 + |a_i|^{\nicefrac{2}{p}} \right),
\label{eq:c}
\end{equation}
where $\|W\|_F$ is the Frobenius norm of the matrix $W$.
Based on the previous definition of the cost, the minimal representation cost necessary
to represent a given function $f\in \R^d\rightarrow \R$ in terms of a shallow neural network is given by
\begin{equation}\label{eq:Rf}
R(f) := \inf_{\theta \in\Gamma} C(\theta)\text{ such that }f = g_{\theta, p}.
\end{equation}
Thanks to the $p$-homogeneity of the RePU activation function,
we can show (see Appendix \ref{app:neyshaburreproof})
that minimizing $C(\theta)$
is the same as restricting the inner layer weight vectors $\{w_i\}_{i=1}^k$
to be the unit norm while minimizing the $\ell^{\nicefrac{1}{p}}$ quasinorm
of the outer layer weight $ a$, more details can be found in
Appendix \ref{app:neyshaburreproof}.

Let $\Theta$ be the collection of all $\theta \in \Gamma$
such that, for any $i\in \{1, \dots, k\}$, $w_i$ belongs to the unit sphere
$\mathbb{S}^{d-1} := \{w\in \mathbb{R}^d : \|w\| = 1\}$, hence we have
\begin{equation}\label{eq:plainr}
R(f) = \inf_{\theta \in \Theta}
\|a\|_{\nicefrac{1}{p}}^{\nicefrac{1}{p}}\text{ such that }f = g_{\theta, p}.
\end{equation}
In the case where $p=1$, $R(f)$ is finite if $f$ is a continuous
piecewise linear function with finitely many pieces that can be represented by
a finite width two layers ReLU networks. Similarly, when $p=2$,
$R(f)<\infty$ if $f$ is realizable as a finite width
two layers ReQU networks, that is $f$ is a continuous
piecewise quadratic function with finitely many pieces.
For general $p$, $R(f)$ is finite only if $f$ is a continuous piecewise
polynomial of order $p$ with finitely many pieces, which can be represented by
a finite width two layers RePU neural networks.
\par

Since we focus on the minimal norm required to represent a function with an infinite-size network,
we use a modified representational cost, rather than \eqref{eq:plainr}, which
captures larger space of functions.
The following defines the minimal limiting representational cost of all
sequences of shallow neural networks
converging to $f$ uniformly,

\begin{equation}\label{eq:rbar0}
  \Rbar{f} := \lim_{\varepsilon\rightarrow 0}\left(\inf_{\theta \in \Theta}
   C(\theta)~~s.t.~~\abs{\left(g_{\theta, p} - f\right)(x)}\leq \varepsilon
   \text{ for any } \norm{x}\leq \frac 1 \varepsilon
   ~\text{ and }g_{\theta, p}(0)=f(0)\right).
\end{equation}

Consequently, if $\Rbar{f}$ is finite, then $f$ can be represented
by an infinite width shallow RePU network,
where the outer-most weights are described by a density
$\mu(w,b)$ for all weights and bias pairs $(w,b) \in {\mathbb{S}^{d-1}\times \R}$.
We denote by $ {\mathcal{M}^1}({\mathbb{S}^{d-1}\times \R})$ the space of signed measures
$\mu$ defined on $(w,b)\in{\mathbb{S}^{d-1}\times \R}$  with finite total variation norm
$
\|\mu\|_ {\mathcal{M}^1} = \int_{{\mathbb{S}^{d-1}\times \R}} d|\mu|
$, more details can be found in Section \ref{sec:radonM_T}.
Let  $c\in\R$, and $p\in \mathbb{N}$,
we define the infinite width two layers RePU network $H_{\mu,c}^p$
as follows:
\begin{equation}\label{eq:infinitenet0}
	\mathtt{H}_{\mu, c}^p (x) := \int_{\mathbb{S}^{d-1}\times \R}
	\frac{1}{1+\abs{b}^{p-1}}\left([\scal{w}{x}-b]_+^p - [-b]_+^p \right)d\mu(w,b)  + c.
\end{equation}
In case $c=0$, we write $\mathtt{H}_{\mu}^p$ instead of $\mathtt{H}_{\mu, 0}^p$.
Our definition is a correction of the one proposed firstly by \cite{savarese2019infinite}.
Moreover it is a generalization of the definition given in \cite{Ongie2019}.
Indeed when $p=1$ then $2\mathtt{H}_{\mu, c}^1$ is the same shallow network integral representation
given in \cite[Equation (8)]{Ongie2019}.
Moreover, our definition ensures that the integral is well defined, since the integrand
$\frac{1}{1+\abs{b}^{p-1}}\left([\scal{w}{x}-b]_+^p - [-b]_+^p \right) $ is continuous and bounded
for any $(w, b)\in \mathbb{S}^{d-1}\times \R$ and fixed $x\in \R$.

Let $\psi(b)=1+|b|^{p-1}$, then we show that $\Rbar{f}$ in  \eqref{eq:rbar0},
is the same as the following: 
\begin{equation}\label{eq:opt0}
	\Rbar{f} = \min_{\mu\in \mathcal{M}^1({\mathbb{S}^{d-1}\times \R}),c\in \R}
	\big\|{\mu}\big\|_{\mathcal{M}^1(1/\psi)}^{\nicefrac{1}{p}}
	~~\text{ such that }~~f = \mathtt{H}_{\mu,c}^p,
\end{equation}
the  norm notation is defined in Section \ref{sec:radonM_T}
while the details of the proof are in Appendix \ref{appsec:opt}.

\par

\begin{remark}
There is an equivalence between the loss minimization approach
by controlling $C(\theta)$ while fitting some loss function $L(g_{\theta, p})$
to train an infinite width RePU  neural network $g_{\theta, p}$
and learning a function $f$ by controlling $\Rbar{f}$ while fitting the loss $L(f)$.
More specifically, for any hyper parameter $\lambda\in \mathbb{R}$,
we have the following
\begin{equation}\label{eq:equiv_C_R}
	\min_{\theta \in \Theta} L(g_{\theta, p}) + \lambda C(\theta)
	\iff
	\min_{f:\mathbb{R}^d\rightarrow \R} L(f)+\lambda \Rbar{f}.
\end{equation}
The right-hand side of \eqref{eq:equiv_C_R} gives information about the class of
functions we are minimizing with infinite width,
moreover it describes what we are minimizing with a finite,
but sufficiently large, width.
\end{remark}

\par

\section{Radon measure and Radon transform}\label{sec:radonM_T}

\par
In this section we recall some definitions and properties of \emph{Radon measure} and 
\emph{Radon transform}. Both ingredients play an important role in our analysis.

\subsection{Radon measure}\label{appsec:infnets}
Let $\mathcal{B}$ be the $\sigma$-algebra of all Borel sets.
A Radon measure is a positive Borel measure
$\mu : \mathcal{B} \rightarrow [0, +\infty]$
which is finite on compact sets and is inner regular in the sense that
for every Borel set $E$ we have
$$
\mu(E) = \sup\{\mu({K}) : K \subset E, K\in \mathcal{K}\}
$$
$\mathcal{K}$ denoting the family of all compact sets.
For a signed measure 
\footnote{We assume $\mu$ is a signed \emph{Radon} measure;
see, e.g., \cite{malliavin2012} for a formal definition.
In this paper  a \emph{measure} means \emph{Radon measure} 
to avoid confusion with the Radon transform.
}  $\mu$
defined on $\SR$, $\|\mu\|_{\cM^1} = \int d|\mu|$ denotes its total variation norm.
Let $\cM^1(\SR)$ denotes the space of measures on $\SR$
with finite total variation norm.
Moreover, for  a measure $\mu$ and a positive function $\omega$ defined on $\SR$
such that $\omega$ is integrable with respect to $\mu$,
we define a ``weighted'' version of the total variation norm
as follows
$$
\|\mu\|_{\cM^1(\omega)} := \| \omega \mu  \|_{\cM^1} := \int \omega d|\mu|.
$$
Since $\SR$ is a locally compact space, $\cM^1(\SR)$ is the Banach space dual
of $C_0(\SR)$, the space of continuous functions on $\SR$ vanishing at infinity
\cite[Chapter 2, Theorem 6.6]{malliavin2012}, and
\begin{equation}\label{eq:tvnorm}
    \|\mu\|_{\cM^1} =
    \sup \left\{\int \varphi\, d\mu : \varphi\in C_0(\SR), \|\varphi\|_\infty \leq 1 \right\}.
\end{equation}
The weighted version of the total variational norm is defined in a similar way as \eqref{eq:tvnorm}.
That is, 
$$
 	\|\omega \mu\|_{\cM^1} =
    \sup \left\{\int \varphi\,\omega  d\mu : \varphi\in C_0(\SR), \|\varphi\|_\infty \leq 1 \right\},
$$
if the weight $\omega$ has nice properties then we can relax the assumptions on $\varphi$.
In our paper $\omega(w, b)=\omega(b)= \frac 1{1+|b|^{p-1}}\in C_0(\SR)$, hence
it is enough to consider the continuity assumption only on $\varphi$.
We write $\langle \mu, \varphi \rangle$ to denote $\int \varphi d\mu$, 
where $\mu\in \cM^1(\SR)$ and $\varphi\in C_0(\SR)$.

Let $C_b(\SR)$ denotes the space of continuous and bounded functions
on $\SR$.  Hence any measure $\mu \in \cM^1(\SR)$ can be extended uniquely to a continuous
linear functional on $C_b(\SR)$.
Therefore,  for all $x\in\Rd$,  the infinite width network
\begin{equation}
	\mathtt{H}_\mu^p(x) := \int_{\SR}\prn{\frac{[ w x - b]_+^p-[-b]_+^p}{1+|b|^{p-1}}} d\mu( w,b)
\end{equation}
is well-defined, since
$\varphi(w,b) =\frac{[ w x - b]_+^p-[-b]_+^p}{1+|b|^{p-1}}$
belongs to $C_b(\SR)$.

In our paper a measure $\mu \in \cM^1(\SR)$ is called \emph{even} measure if 
\begin{equation*}
  \int_{\SR} \varphi(w,b) d\mu(w,b) =  \int_{\SR} \varphi(- w,-b) d\mu( w,b)
  ~~\text{for all}~~\varphi \in C_0(\SR)
\end{equation*}
or $\mu$ is \emph{odd} measure if
\begin{equation*}
     \int_{\SR} \varphi( w,b) d\mu( w,b) =  -\int_{\SR} \varphi(- w,-b) d\mu( w,b)
     ~~\text{for all}~~\varphi \in C_0(\SR).
 \end{equation*}
Every measure $\mu \in \cM^1(\SR)$ is uniquely decomposable
as $\mu = \mu^+ + \mu^-$ where $\mu^+$ is even and $\mu^-$ is odd,
which called the even and odd decomposition of $\mu$.
For instance, let $\mu$ be a measure with a density $g(w,b)$
then $\mu^+$ is the measure with density $g^+(w,b) = \frac{1}{2}(g(w,b) + g(-w,-b))$
and $\mu^-$ is the measure with density $g^-(w,b) = \frac{1}{2}(g( w,b) - g(- w,-b))$.

We denote by $\cM_{e}^1(\SR)$ the subspace of all even measures in $\cM^1(\SR)$.
The space $\cM_{e}^1(\SR)$ is the Banach space dual of $C_{0, e}(\SR)$,
the subspace of all even functions $\varphi \in C_0(\SR)$.  
Even measures play an important role in our paper, mainly our main results
use even measure in the characterization of the representational cost
\cf \eg Lemma \ref{lem:uniqueness}.
The following result can be found in \cite{Ongie2019}.
\begin{lemma}\label{prop:evenodd}
Let $\mu \in \cM^1(\SR)$ with $\mu = \mu^+ + \mu^-$ where
$\mu^+$ is even and $\mu^-$ is odd. 
Then $\|\mu^+\|_{\cM^1} \leq \|\mu\|_{\cM^1}$
and $\|\mu^-\|_{\cM^1} \leq \|\mu\|_{\cM^1}$.
\end{lemma}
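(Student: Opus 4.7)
The plan is to express the even and odd parts of $\mu$ via the reflection pushforward and reduce the claim to the triangle inequality together with the isometry property of pushforward. Let $S:\SR\to\SR$ denote the reflection $S(w,b) := (-w,-b)$, and let $S_\ast$ denote pushforward of measures. Reading the paper's characterizations, an even measure $\nu$ satisfies $S_\ast\nu = \nu$ and an odd measure satisfies $S_\ast\nu = -\nu$. Applying $S_\ast$ to $\mu = \mu^+ + \mu^-$ therefore gives $S_\ast\mu = \mu^+ - \mu^-$, and solving this together with the original decomposition yields
$$\mu^+ = \tfrac12(\mu + S_\ast\mu), \qquad \mu^- = \tfrac12(\mu - S_\ast\mu).$$

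Next I would invoke two elementary facts about the total variation norm: pushforward by a Borel bijection is an isometry of $\cM^1(\SR)$, so $\|S_\ast\mu\|_{\cM^1} = \|\mu\|_{\cM^1}$; and the total variation is a norm, so it obeys the triangle inequality. Combining these with the identities above immediately gives
$$\|\mu^\pm\|_{\cM^1} \leq \tfrac12\bigl(\|\mu\|_{\cM^1} + \|S_\ast\mu\|_{\cM^1}\bigr) = \|\mu\|_{\cM^1},$$
which are both of the desired inequalities at once.

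If one prefers to stay within the dual formulation \eqref{eq:tvnorm} without invoking pushforwards explicitly, an equivalent route is available: for any $\varphi \in C_0(\SR)$ with $\|\varphi\|_\infty \leq 1$, introduce the symmetrization $\tilde\varphi^\pm(w,b) := \tfrac12(\varphi(w,b) \pm \varphi(-w,-b))$. Since $S$ is a homeomorphism of $\SR$, we have $\tilde\varphi^\pm \in C_0(\SR)$ with $\|\tilde\varphi^\pm\|_\infty \leq 1$, and the defining symmetry of $\mu^\pm$ forces $\int \varphi\, d\mu^\pm = \int \tilde\varphi^\pm\, d\mu$; taking the supremum over admissible $\varphi$ then produces the same bounds. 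There is no real obstacle in the argument; the only thing requiring mild care is that the paper characterizes evenness/oddness through test-function integrals, so one line is needed to pass from that characterization to the pushforward identity $S_\ast\nu = \pm\nu$, or equivalently to the explicit formulas for $\mu^\pm$ written above.
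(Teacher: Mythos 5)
Your proof is correct. Note that the paper does not actually prove this lemma --- it only states it and cites Ongie et al., so there is no in-paper argument to compare against; your write-up supplies the missing proof. Both of your routes work: the identities $\mu^{\pm} = \tfrac12(\mu \pm S_{\ast}\mu)$ follow immediately from applying $S_{\ast}$ to the decomposition and using $S_{\ast}\mu^{+}=\mu^{+}$, $S_{\ast}\mu^{-}=-\mu^{-}$, and since $S$ is a homeomorphism (indeed an involution) of $\SR$, the pushforward preserves total variation, so the triangle inequality finishes it. The dual-formulation variant is the argument one finds in the cited source: the symmetrizations $\tilde\varphi^{\pm}$ stay in the unit ball of $C_0(\SR)$ because they are averages of two functions of sup-norm at most one, and the identity $\int\varphi\,d\mu^{\pm}=\int\tilde\varphi^{\pm}\,d\mu$ holds because $\tilde\varphi^{+}$ integrates to zero against the odd part and $\tilde\varphi^{-}$ against the even part. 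The one line of care you flag --- passing from the test-function characterization of even/odd to the pushforward identity --- is exactly the right thing to be careful about, and it is immediate from the definition of $S_{\ast}$ via $\int\varphi\,d(S_{\ast}\nu)=\int\varphi\circ S\,d\nu$. No gaps.
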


\subsection{The Radon transform and its dual}\label{sec:radon}

\par

 \emph{Radon transform} is a fundamental transform applicable to tomography,
\ie the creation of an image from the scattering data associated
with cross-sectional scans of an object. 
In this section, we recall some aspects and features of
the Radon transform mentioned in 
Helgason's book, more details about its properties and applications can
be found in  \cite{helgason1999radon}.

\par

In our approach the representational cost $\Rbarone{f}$ in Section \ref{sec:rnorm} is characterized
in terms of the Radon transform.
Therefore, we briefly  review the Radon transform and its dual.
Note that several properties of the Radon transform are analogous
to the properties of the Fourier transform.

\par
Let $f$ be a real-valued function defined on $\Rd$ and integrable on each hyperplane in $\Rd$.
The Radon transform $\cR$ represents  $f$ in terms of its integrals over all possible hyperplanes in $\Rd$,
in the following way

\begin{equation}
	\Rdn{f}(w,b) := \int_{\scal{w}{x} = b} f(x)\,dx~~\text{for all}~~( w,b)\in\SR,
	\label{eq:radon}
\end{equation}
where $dx$ represents integration with respect to the $d-1$ dimensional Lebesgue measure
on the hyperplane $\scal{w}{x}= b$.
In view of the fact that  $ \scal{w}{x} =b$ and $-\scal{w}{x} = -b$ determine
the same hyperplane, then the Radon transform
is an even function, that is, $\Rdn{f}( w,b) = \Rdn{f}(- w,-b)$ for all $(w,b)\in\SR$.

\par

The \emph{dual Radon transform} $\cR^*$, \ie the formal adjoint of $\cR$, of  a function
$\varphi: \SR \rightarrow \R$ is  defined as follows:
%
\begin{equation}
\Rdns{\varphi}( x ) := \int_{ \S^{d-1} } \varphi ( w , \scal{w}{x} )\,d w~~\text{for all}~~x \in \Rd,
\label{eq:adjoint}
\end{equation}
where $dw$ represents integration with respect to the surface measure of the unit sphere $\S^{d-1}$. 

The function $f$ can be recovered from the Radon transform employing the \emph{inverse Radon} formula.
The inverse Radon transform corresponds to the reconstruction of the function from the projections.
Mainly  the  inverse Radon transform is a composition of the dual Radon transform $\cR^*$
followed by a multiplier operator. 

The multiplier operator is given by $m_{d-1}(D) =(-\Delta)^{(d-1)/2}$,
such that for any $s>0$ the operator $m_s(D)=(-\Delta)^{s/2}$ is defined by
\begin{equation}\label{eq:multipllier}
m_s(D)f(x) = \mathscr{F}^{-1}\left( m(\xi)\left(\mathscr{F}f\right)(\xi)\right)(x)
\end{equation} 
where  $\mathscr F$ and  $\mathscr F^{-1}$ are the Fourier transform and the inverse Fourier transform
defined respectively by
\begin{align*}
(\mathscr Ff)(\xi )&= \widehat f(\xi ) \equiv (2\pi )^{-\frac d2}\int _{\Rd} e^{-i\scal  x\xi } f(x)\, dx\\
(\mathscr F^{-1}f)(x )& \equiv (2\pi )^{-\frac d2}\int _{\Rd} e^{i\scal  x\xi } f(\xi )\, d\xi
\end{align*}
when $f\in L^1(\Rd)$.
When $d = 2n+1$ for $n\in \mathbb{N}$, $(-\Delta)^{(d-1)/2}= (-1)^n \Delta^n$
which is the Laplacian applied $n$ times.
While if $d$ is even it is a pseudo-differential operator given by convolution with a singular kernel.
In view of the previous operators  the explicit inversion formula for the Radon transform
is given by
\begin{equation}\label{eq:inv_radon}
f = \gamma_d (-\Delta)^{(d-1)/2}\Rdns{\Rdn{f}},
\end{equation}
where $\gamma_d$
is a constant depending on the dimension $d$, the equality in \eqref{eq:inv_radon} holds for $f$ 
belonging to many common function spaces \eg $f$ is a rapidly decreasing functions on $\Rd$,
more details can be found in \eg \cite{helgason1999radon}. 

Although more restriction are imposed on the function space,
the inversion formula for the dual Radon transform \eqref{eq:adjoint}
can be given in a similar way to the inversion formula for the Radon transform.
Indeed, if  $\varphi$ is  an even function belongs to the Schwartz class $\mathscr{S}(\SR)$,
then
\begin{equation}\label{eq:invform}
	 \varphi = \gamma_d \Rdn{(-\Delta)^{(d-1)/2} \Rdns{\varphi}},
\end{equation}
where $\gamma_d = \frac{1}{2(2\pi)^{d-1}}$.
We denote the Fourier transform with respect to $b$ by $\mathscr{F}_b$.
We recall the \emph{Fourier slice theorem} for the Radon transform
\cf Helgason \cite{helgason1999radon}.
The $\mathscr{F}_b$ of the Radon transform  is given by
\begin{equation}\label{eq:partial_fourier_radon}
	\mathscr{F}_b\Rdn{f}(w,\tau) = \widehat{f}(\tau \cdot w)
\end{equation}
where  $f \in L^1(\Rd)$,  $\tau\in\R$ and $w\in\S^{d-1}$.
From the definition of a multiplier operator in \eqref{eq:multipllier}
and the later equality
\eqref{eq:partial_fourier_radon} we establish the \emph{intertwining property} of 
the Laplacian and the Radon transform:
assuming $f$ and $\Delta f$ are in $L^1(\Rd)$, we have
\begin{equation}
  \Rdn{\Delta f} = \partial_b^2 \Rdn{f}\; 
\end{equation}
where $\partial_b$ is the partial derivative with respect  to $b$.
More generally for any positive integer $s$, assuming that $f$ and $(-\Delta)^{s/2}f$
are in $L^1(\Rd)$ we have
\begin{equation}\label{eq:interwining_ppt}
\Rdn{(-\Delta)^{s/2} f} = {(-\partial^2_b)}^{s/2} \Rdn{f}
\end{equation}
where fractional powers of $-\partial^2_b$ can be defined similarly 
as the fractional powers of the Laplacian.

\par

\section{Infinite width bounded norm networks for univariate function}\label{sec:main_univ}

In the next result, we consider a possible model for an infinite width network on one dimension.
In particular we characterize $\oR(f)$ for univariate function $f$.
Moreover, in this section, we assume that the measure has a density that satisfies
certain decay properties.
Mainly, we treat an infinite width neural networks $\mathtt{H}_{\mu,c}^p$
with density $\mu(w, b)$ that decays as $\frac1{1+|b|^{p+2}}$
when $|b|\rightarrow \infty$, this assumption is needed to ensure convergence
of certain integrals in the proof of Theorem \ref{thm:main_1d}.

\par

\begin{theorem}\label{thm:main_1d}
Let $p \in \mathbb{N}$ such that $p $ is odd, $f$ is
a real-valued function defined on $\R$ and $\psi(b) = 1+|b|^{p-1}$ where $b\in \R$ ,
then we have
\begin{align*}
\min_{\mu,\, c\in \R}
\big\|{\mu}\big\|_{\mathcal{M}^1(1/\psi)}^{\nicefrac 1p}&
~~\text{ such that }~~ f = \mathtt{H}_{\mu,c}^p
\\
&= \max \left( \, \frac{1}{p!}\int_{-\infty}^\infty \abs{f^{(p+1)}(b)} \diff b \,
, \,\frac{1}{p!} \abs{f^{(p)}(-\infty) + f^{(p)}(\infty)} \, \right)^{\nicefrac 1p}
\end{align*}
where the minimum is taken over all measures $\mu \in \mathcal{M}^1({\mathbb{S}^{d-1}\times \R})$
with density that decays as $\frac1{1+|b|^{p+2}}$ when $|b|\rightarrow \infty$.
\end{theorem}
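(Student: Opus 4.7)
The plan is the following. In one dimension the sphere $\mathbb{S}^0=\{+1,-1\}$ is discrete, so any candidate $\mu\in\mathcal{M}^1(\mathbb{S}^0\times\R)$ decomposes as $\mu=\mu_+\otimes\delta_{w=+1}+\mu_-\otimes\delta_{w=-1}$; under the stipulated $O(|b|^{-(p+2)})$ decay, the $\mu_\pm$ admit densities $g_\pm$ on $\R$, and $\|\mu\|_{\mathcal{M}^1(1/\psi)}=\int|g_+|/\psi+\int|g_-|/\psi$. The whole argument hinges on two identities obtained by formally differentiating $\mathtt{H}_{\mu,c}^p$ under the integral sign (legitimate thanks to the decay). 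Using $\partial_x^{p+1}[x-b]_+^p=p!\,\delta(x-b)$ and, crucially for $p$ odd, $\partial_x^{p+1}[-x-b]_+^p=p!\,\delta(x+b)$, one obtains
\[
f^{(p+1)}(x)=\frac{p!}{\psi(x)}\bigl(g_+(x)+g_-(-x)\bigr),
\]
and, taking one fewer derivative and letting $x\to\pm\infty$,
\[
f^{(p)}(+\infty)=p!\!\int\!\frac{g_+(b)}{\psi(b)}\,db,\qquad f^{(p)}(-\infty)=-p!\!\int\!\frac{g_-(b)}{\psi(b)}\,db.
\]

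The lower bound ($\ge$) in the theorem is then immediate. Since $\psi$ is even, the triangle inequality and a change of variables give $\tfrac{1}{p!}\!\int|f^{(p+1)}|\,dx\le\int(|g_+(x)|+|g_-(-x)|)/\psi(x)\,dx=\|\mu\|_{\mathcal{M}^1(1/\psi)}$, and a second triangle inequality applied to the boundary identities yields $\tfrac{1}{p!}|f^{(p)}(+\infty)+f^{(p)}(-\infty)|\le\|\mu\|_{\mathcal{M}^1(1/\psi)}$. Taking the larger of the two quantities and raising to the power $1/p$ establishes one direction of the equality.

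For the reverse inequality I would construct densities attaining it. Set $u(x):=\psi(x)f^{(p+1)}(x)/p!$ and $A:=(f^{(p)}(+\infty)+f^{(p)}(-\infty))/p!$; note that $\int u/\psi=(f^{(p)}(+\infty)-f^{(p)}(-\infty))/p!$ is already pinned down by $f^{(p+1)}$. The construction splits into two regimes. If $|A|\le\int|u|/\psi$, write $g_+(x)=\lambda(x)u(x)$ and $g_-(-x)=(1-\lambda(x))u(x)$ with $\lambda:\R\to[0,1]$ measurable, and choose $\lambda$ by the intermediate value theorem so that $\int(2\lambda-1)u/\psi\,dx=A$ (feasible because the left-hand side ranges continuously over $[-\int|u|/\psi,\int|u|/\psi]$ as $\lambda$ varies); since $\lambda,1-\lambda\ge0$, the resulting norm collapses to $\int|u|/\psi=\tfrac{1}{p!}\!\int|f^{(p+1)}|$. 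If instead $|A|>\int|u|/\psi$ (take $A>0$ without loss of generality), set $g_+=u_++h$ and $g_-(-\cdot)=-u_--h$ with $h\ge0$ and $\int h/\psi=(A-\int|u|/\psi)/2$; the same computation gives norm exactly $|A|$. The case $A<0$ is symmetric. Finally take $c=f(0)$ to match the value at the origin.

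The main obstacle I anticipate is the final verification that the constructed $\mathtt{H}_{\mu,c}^p$ equals $f$ \emph{as a function}, rather than merely up to a polynomial correction. Since we have matched $f^{(p+1)}$ pointwise, both one-sided limits of $f^{(p)}$ at $\pm\infty$, and the value at $0$, the remainder $f-\mathtt{H}_{\mu,c}^p$ is automatically a polynomial of degree $\le p-1$ with vanishing constant term, and its $p-1$ remaining coefficients must be annihilated by exploiting the infinite-dimensional residual freedom in the choice of $\lambda$ (respectively $h$) while preserving the already attained norm; this amounts to showing that $p-1$ moment functionals on $g_\pm$ can be prescribed without disturbing the sign-saturation that achieved equality in the triangle inequalities. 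A secondary technical point is that the required $O(|b|^{-(p+2)})$ decay on $g_\pm$ silently imposes a fast decay hypothesis on $f^{(p+1)}$ (roughly $|f^{(p+1)}(b)|\lesssim|b|^{-(2p+1)}$), under which the whole construction and all integrations by parts are rigorous.
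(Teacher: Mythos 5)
Your setup reproduces the paper's argument almost exactly: the decomposition of $\mu$ over $\mathbb{S}^0=\{\pm1\}$, the identity $f^{(p+1)}=p!\,\mu_+/\psi$ obtained by differentiating under the integral (the paper's \eqref{eq:key}), the boundary formulas for $f^{(p)}(\pm\infty)$ (the paper's \eqref{eq:fprime} and \eqref{eq-const_alpham}), and the lower bound by two applications of the triangle inequality. Your upper bound replaces the paper's Lagrange-multiplier case analysis on $\Lambda$ by a direct construction with an interpolation weight $\lambda\in[0,1]$ (resp.\ an added nonnegative $h$), which is a legitimate and arguably cleaner way to saturate the two inequalities simultaneously.

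However, there is a genuine gap, and it is precisely the one you flag in your last paragraph without resolving. Matching $f^{(p+1)}$ pointwise, both limits $f^{(p)}(\pm\infty)$, and $f(0)$ only forces $f-\mathtt{H}_{\mu,c}^p$ to be a polynomial of degree at most $p-1$ with vanishing constant term; for $p\geq 3$ there remain $p-1$ coefficients to kill, and since $\mathtt{H}_{\mu,c}^p$ carries no polynomial correction units (unlike $\mathtt{H}_{\mu,\vv,c}^p$ in Section 5), these must be absorbed by $\mu_-$ itself. Concretely, expanding $[x-b]_+^p=\tfrac12(|x-b|^p+(x-b)^p)$ as in the paper's \eqref{eq:affine}, the coefficient of $x^k$ for $k=1,\dots,p-1$ is a prescribed linear functional of $\int \mu_-(b)\,b^{p-k}/\psi(b)\,db$, determined by $f^{(k)}(0)$ and moments of $f^{(p+1)}$ (the paper's \eqref{eq:Cond_K_even}--\eqref{eq:Cond_K_odd}). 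Your construction enforces only the zeroth moment $\int\mu_-/\psi=A$. To preserve the attained norm $\int|u|/\psi$ you must keep $\lambda\in[0,1]$ pointwise (sign saturation), and under that constraint the achievable vectors $\bigl(\int(2\lambda-1)u\,b^{j}/\psi\,db\bigr)_{j=1}^{p-1}$ form a bounded convex set; there is no a priori reason the required vector of coefficients lies in it, and if it does not, the minimum strictly exceeds $\max(\cdot)^{1/p}$ and the claimed equality fails. So "exploiting the infinite-dimensional residual freedom" is not automatic: it is the crux of the theorem for $p\geq3$ and needs an actual feasibility argument. The paper at least writes these $p-1$ moment constraints into its optimization problem \eqref{eq:alphaminusproblem} before the case analysis (its own verification that they are compatible with saturation is terse, but the constraints are on the table); your proposal omits them from the construction entirely, so as written the upper bound is only established for $p=1$.
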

The derivatives of $f$ are interpreted in the weak sense, that is
we work with distributions, rather than only functions.
Hence integrals in the statement and its proof are well defined even if $f$ is not differentiable.
Noting that we use the same symbols to refer to measure and its associated density.
Moreover, we denote by
$f^{(p)}(\infty)=\lim_{x\to\infty}f^{(p)}(x)$ and $f^{(p)}(-\infty)=\lim_{x\to-\infty}f^{(p)}(x)$.
Observe that both limits exist if $\int_{-\infty}^\infty \!\!\abs{f^{(p+1)}(x)} \diff x$ is finite.
Furthermore, in our paper, we deal with more relaxed assumptions on the measure $\mu$,
since we don't ask $\mu$ to have a density in the Schwartz space.

\begin{proof}
In one dimension, $w \in \{\pm 1\}$,
(by transforming $(w,b) \mapsto (w,wb)$, which does not change $\oR(f)$,
in view of a linear change of variables argument in \eqref{eq:infinitenet0})
it is more convenient to reparametrize the RePU units as
$\relu{w(x-b)}^p$ instead of $\relu{{w}{x} - b}^p$.
Let $H(z)$ denotes the Heaviside function, whose distributional derivative is the Dirac distribution $\delta$.
Then for any representation $f=\mathtt{H}_{\mu,c}^p$, we have
\begin{align}\label{eq:conv}
f(x) &= \int_{\R} \left( \mu(1,b) \frac{[x -b]_+^p - [-b]_+^p}{1+\abs{b}^{p-1}}
+ \mu(-1,b)\frac{[b -x]_+^p - [b]_+^p}{1+\abs{b}^{p-1}} \right) \diff b +\intbias.
\end{align}

For $\epsilon>0$, we let $x$ be fixed in the ball of center $0$ and radius $\frac 1\epsilon$,
we differentiate $f$ in \eqref{eq:conv} $p+1$ times with respect to $x$, then we get
for $k \in \{1, \dots, p\}$
\begin{equation}\label{eq:kDeriv}
f^{(k)}(x) = \frac{p!}{(p-k)!}\int_{\R} \mu(1,b) \frac{[x-b]_+^{p-k}}{1+\abs{b}^{p-1}}
+ (-1)^k \mu(-1,b)\frac{[b-x]_+^{p-k}}{1+\abs{b}^{p-1}} \diff b,
\end{equation}
hence
\begin{align}
f^{(p)}(x) &= p!\int_{\R} \mu(1,b) \frac{H(x-b)}{1+\abs{b}^{p-1}}
- \mu(-1,b)\frac{H(b-x)}{1+\abs{b}^{p-1}} \diff b \label{eq:fprime}\\
f^{(p+1)}(x) &= \int_\R \left(p!\frac{ \mu(1,b) + \mu(-1,b)}{1+\abs{b}^{p-1}} \right) \delta_x(b) \diff b \notag
\\
&=p!\frac{ \mu(1,x) + \mu(-1,x)}{1+|x|^{p-1}} = p! \frac{\mu_+(x)}{\psi(x)}, \label{eq:key}
\end{align}
where we set $\mu_+(x):=\left({ \mu(1,x) + \mu(-1,x)}\right)$ and $\psi(x)= 1+ |x|^{p-1}$.

    The quantity  $p!\frac{\mu_+}{\psi}$ uniquely represents
    $f^{(p+1)}$. Moreover, the measure $\mu$
    that represents the function  $f$ is almost unique
    (up to shifting argument with respect to $b$). We denote by 
    $\mu_-(b)=\mu(1,b)-\mu(-1,b)$.
    Since $w\in\{-1, 1\}$,
    $\mu_+ $ and $\mu_-$ define $\mu$ as following
    \begin{equation}\label{eq:mu}
    \mu(w,b)=\half \left( \mu_+(b) -\nicefrac 1p \sum _{k=1}^{p}(-w)^{2k+1}\cdot\mu_-(b) \right)
    = \half(\mu_+(b)+w\cdot\mu_-(b)).
    \end{equation}

    Note that, for odd $p$, we can write
    \begin{equation}\label{eq:OddrepuIdentity}
        \relu{x}^p = \half(|x|^p +x^p), \quad\text{for any } x \in \mathbb{R}.
    \end{equation}
    In view of \eqref{eq:mu} and \eqref{eq:OddrepuIdentity}, we have the following:

    \begin{align}
    f(x) &=  \int_{\R} \left( \mu(1,b) \frac{[x -b]_+^p - [-b]_+^p}{1+\abs{b}^{p-1}}
	+ \mu(-1,b)\frac{[b -x]_+^p - [b]_+^p}{1+\abs{b}^{p-1}} \right) \diff b +\intbias\nonumber
	\\*
	&=\half \int_{\R}\frac{1}{\psi(b)} \Big( \mu_+(b) \left({|x -b|^p - |b|^p}\right)
	+ \mu_-(b)\left({(x-b)^p + b^p}\right) \Big) \diff b +\intbias\nonumber
	\\*
    &= \frac{1}{2\cdot p!}  \int_\R f^{(p+1)}(b) \left(|x -b|^p - |b|^p\right) \diff b\nonumber
    \\* 
    & \qquad\qquad+ \frac{1}{2}\sum_{k=1}^p\left(  (-1)^{p-k}\binom{p}{k}
				\int_{\R} \frac{\mu_-(b)}{\psi(b)}b^{p-k}\diff b\right) x^k  
    + \left( \int_{\R} b^p \frac{\mu_-(b)}{\psi(b)} \diff b + \intbias \right). \label{eq:affine}
    \end{align}
    From the previous equation \eqref{eq:affine}, $\mu_-$
	contributes the coefficients of a polynomial
    of order $p$ in $f=\mathtt{H}_{\mu,c}^p$. Since we can 
    adjust the constant term using the bias $\intbias$
    without changing $\oR(f)$, the only important issue with choosing $\mu_-$ 
	is to get the correct polynomial term.
  To get a clear idea about which polynomial correction we need, it is important to consider 
   \eqref{eq:fprime} to evaluate $f^{(p)}(-\infty)$ and $f^{(p)}(+\infty)$.
   Due to the fact that $\mu \in \mathcal{M}^1(\SR)$, it is clear that 
   $\int \abs{f^{(p+1)}} \diff x$  is finite.
    Therefore $f^{(p)}$ converges at $\pm\infty$,  and
    \begin{equation}
    \begin{aligned}
    f^{(p)}(-\infty) + f^{(p)}(+\infty) 
    &= \int_\R (0-\mu(-1,b)p!\frac {1}{1+|b|^{p-1}})\diff b +
    	 \int_\R (\mu(1,b)p!\frac {1}{1+|b|^{p-1}}-0)\diff b
    \\
    &=p! \int_\R \frac{\mu_-(b)}{\psi(b)} \diff b.
    \label{eq-const_alpham}
    \end{aligned}
    \end{equation}
    Moreover when $k \in \{1, \dots, p-1\}$, in view of \eqref{eq:kDeriv},
    we have
    $$
    f^{(k)}(0) = \frac{p!}{(p-k)!}\int_{\R} \mu(1,b) \frac{[-b]_+^{p-k}}{1+\abs{b}^{p-1}}
	+ (-1)^k \mu(-1,b)\frac{[b]_+^{p-k}}{1+\abs{b}^{p-1}}  \diff b.
    $$
    If $k$ is an even number, using \eqref{eq:OddrepuIdentity}, then
    \begin{align*}
    f^{(k)}(0) &= \frac{p!}{(p-k)!}\int_{\R} \mu(1,b) \frac{[-b]_+^{p-k}}{1+\abs{b}^{p-1}}
	+ \mu(-1,b)\frac{[b]_+^{p-k}}{1+\abs{b}^{p-1}}  \diff b
	\\
    & =  \frac{p!}{(p-k)!}\int_{\R} \mu(1,b) \frac{|b|^{p-k} - b^{p-k}}{1+\abs{b}^{p-1}}
	+ \mu(-1,b)\frac{|b|^{p-k}+b^{p-k}}{1+\abs{b}^{p-1}}  \diff b
	\\
	& =  \frac{p!}{(p-k)!}\int_{\R} \mu_+(b) \frac{|b|^{p-k}}{1+\abs{b}^{p-1}}
	- \mu_-(b)\frac{b^{p-k}}{1+\abs{b}^{p-1}}  \diff b.
	\end{align*}
	Therefore, 
	\begin{equation}\label{eq:Cond_K_even}
	\int_{\R} \frac{\mu_-(b)}{\psi(b)}b^{p-k}  \diff b
	=\frac{1}{p!}\int_{\R} f^{(p+1)}(b)|b|^{p-k}  \diff b - \frac{(p-k)!}{p!}f^{(k)}(0).
	\end{equation}
    Instead if $k$ is an odd number, then using the fact that $[x]_+^{2n} = x^{2n} - [-x]_+^{2n}$
		for any $n\in \mathbb{N}$ and any $x\in \mathbb{R}$,
    we get
    \begin{align*}
    f^{(k)}(0) &= \frac{p!}{(p-k)!}\int_{\R} \mu(1,b) \frac{[-b]_+^{p-k}}{1+\abs{b}^{p-1}}
	- \mu(-1,b)\frac{[b]_+^{p-k}}{1+\abs{b}^{p-1}}  \diff b
	\\
    & =  \frac{p!}{(p-k)!}\int_{\R} \mu(1,b) \frac{b^{p-k} - [b]_+^{p-k}}{1+\abs{b}^{p-1}}
	- \mu(-1,b)\frac{b^{p-k}-[-b]_+^{p-k}}{1+\abs{b}^{p-1}}  \diff b
	\\
	& =  \frac{p!}{(p-k)!}\int_{\R} \frac{\mu_-(b)}{\psi(b)}
	b^{p-k} \diff b
	- \frac{p!}{(p-k)!}\int_{\R} \frac{1}{\psi(b)}(\mu(1, b)[b]_+^{p-k} - \mu(-1, b)[-b]_+^{p-k} )\diff b.
	\end{align*}
	Moreover, using the fact that $\mu_+(b) - \mu_-(b) = 2\mu(-1, b)$, we have
    \begin{align*}
        \int_{\R} &\frac{1}{\psi(b)}\left(\mu(1, b)[b]_+^{p-k} - \mu(-1, b)[-b]_+^{p-k} \right)\diff b
        = \int_{\R} \frac{1}{\psi(b)}\left(\mu(1, b)[b]_+^{p-k} - \mu(-1, b)[-b]_+^{p-k} \right)\diff b
        \\
        & =\int_{\R} \frac{1}{\psi(b)}\left(\mu(1, b)[b]_+^{p-k} 
        +\mu(-1, b)[b]_+^{p-k} 
        -\mu(-1, b)[b]_+^{p-k} 
        - \mu(-1, b)[-b]_+^{p-k} \right)\diff b
        \\
        & = \int_{\R} \frac{1}{\psi(b)}\left(\mu_+(b)[b]_+^{p-k} 
        -\mu(-1, b)\underbrace{([b]_+^{p-k} + [-b]_+^{p-k})}_{b^{p-k}} \right)\diff b
        \\
        & = \int_{\R} \frac{1}{\psi(b)}\left(\mu_+(b)[b]_+^{p-k} 
        -\half \left(\mu_+(b) - \mu_-(b)\right)b^{p-k} \right)\diff b.
	\end{align*}
	Then for odd $k$ we conclude that 
	\begin{align*}
    f^{(k)}(0) &=
     \frac{p!}{(p-k)!}\left(\int_{\R} \frac{\mu_-(b)}{\psi(b)}
	b^{p-k} \diff b
	- \int_{\R} \frac{1}{\psi(b)}\left(\mu_+(b)[b]_+^{p-k} 
        -\half \left(\mu_+(b) - \mu_-(b)\right)b^{p-k} \right)\diff b
        \right)
        \\
        &=\frac{p!}{(p-k)!}\left(\half\int_{\R} \frac{\mu_-(b)}{\psi(b)}
	b^{p-k} \diff b
	- \int_{\R} \frac{1}{\psi(b)}\left(\mu_+(b)[b]_+^{p-k} 
        -\half \mu_+(b)b^{p-k} \right)\diff b
        \right)
        \\
        &= \frac{p!}{(p-k)!}\left(\half\int_{\R} \frac{\mu_-(b)}{\psi(b)}
	b^{p-k} \diff b
	+ \int_{\R} \frac{\mu_+(b)}{\psi(b)}[-b]_+^{p-k} 
         \diff b \right).
	\end{align*}
	Finally, we get
	\begin{equation}\label{eq:Cond_K_odd}
	\int_{\R} \frac{\mu_-(b)}{\psi(b)}
	b^{p-k} \diff b  =  2 \frac{(p-k)!}{p!}f^{(k)}(0) - \frac 2{p!} \int_{\R} f^{(p+1)}[-b]_+^{p-k} 
         \diff b.
	\end{equation}
 
    In summary, any measure $\mu_-$ that fulfills  \eqref{eq-const_alpham},
		\eqref{eq:Cond_K_even}, \eqref{eq:Cond_K_odd},
    in conjunction with $p!\frac{\mu_+}{\psi}=f^{(p+1)}$ and a suitable
	$\intbias$, gives $f=\mathtt{H}_{\mu,\intbias}^p$.
    Using \eqref{eq:mu}, we have
    \begin{equation}\label{eq:objalphaminus}
        \norm{\mu}_{\mathcal{M}^1(1/\psi)} =       
        \frac12 \int_\R \left(\abs{ \frac 1 {p!} f^{(p+1)}(b) + \frac {\mu_-(b)}{\psi(b)}}
        +  \abs{ \frac 1 {p!} f^{(p+1)}(b) -  \frac {\mu_-(b)}{\psi(b)}}\right) \diff b.
    \end{equation}
    In order to minimize $\norm{ \mu}_{\mathcal{M}^1(1/\psi)}$ we have to solve the following 
    \begin{align}\label{eq:alphaminusproblem}
    \begin{split}
        \min_{\mu_-} & \frac12 \int_\R \left(\abs{ \frac 1 {p!} f^{(p+1)}(b) + \frac {\mu_-(b)}{\psi(b)}}
        +  \abs{ \frac 1 {p!} f^{(p+1)}(b) -  \frac {\mu_-(b)}{\psi(b)}}\right) \diff b
        \\
        &\textrm{such that } \left\{\begin{array}{ll}
        \quad \frac 1{p!}\left(f^{(p)}(-\infty) + f^{(p)}(\infty)\right) &=  \displaystyle\int_{\R} \frac{\mu_-(b)}{\psi(b)} \diff b.
        \\[1ex]
        \frac{1}{p!}\sum_{k=1}^{\frac{p-1}{2}}\displaystyle\int_{\R} f^{(p+1)}(b)|b|^{p-2k}  \diff b
		- \frac{(p-2k)!}{p!}f^{(2k)}(0) &= \sum_{k=1}^{\frac{p-1}{2}}
        \displaystyle\int_{\R} \frac{\mu_-(b)}{\psi(b)}b^{p-2k}  \diff b
        \\[1ex]
        \sum_{k=0}^{\frac{p-3}{2}}2 \frac{(p-(2k+1))!}{p!}f^{(2k+1)}(0) - \frac 2{p!} \displaystyle\int_{\R} f^{(p+1)}[-b]_+^{p-(2k+1)} 
         \diff b
         &= \sum_{k=0}^{\frac{p-3}{2}}\displaystyle\int_{\R} \frac{\mu_-(b)}{\psi(b)} b^{p-(2k+1)} \diff b.
        \end{array}
        \right.
    \end{split}
    \end{align}
    %
    %
    
    We let $\lambda_1, \lambda_2, \lambda_3 \in\R$
	be the Lagrange multipliers and $\mathcal{L}$ is the Lagrangian, given by
    \begin{multline*}
    \mathcal L (\mu_-, f, f', \dots, f^{(p+1)},\lambda_1, \lambda_2, \lambda_3) =
    \frac12 \int_\R \left(\abs{ \frac 1 {p!} f^{(p+1)}(b) + \frac {\mu_-(b)}{\psi(b)}}
        +  \abs{ \frac 1 {p!} f^{(p+1)}(b) -  \frac {\mu_-(b)}{\psi(b)}}\right) \diff b
        \\
    	+ \lambda_1\left(\int_{\R} \frac{\mu_-(b)}{\psi(b)} \diff b -  \frac{f^{(p)}(-\infty) - f^{(p)}(\infty)}{p!} \right)
    \\
    +\lambda_2\left(\sum_{k=1}^{\frac{p-1}{2}}\int_{\R} \frac{\mu_-(b)}{\psi(b)}b^{p-2k}  \diff b
    -\frac{1}{p!}\int_{\R} f^{(p+1)}(b)|b|^{p-2k}  \diff b + \frac{(p-2k)!}{p!}f^{(2k)}(0)\right)
    \\[1ex]
    +\lambda_3\left(\sum_{k=0}^{\frac{p-3}{2}}\int_{\R} \frac{\mu_-(b)}{\psi(b)} b^{p-(2k+1)} \diff b
    -2 \frac{(p-(2k+1))!}{p!}f^{(2k+1)}(0) + \frac 2{p!} \int_{\R} f^{(p+1)}[-b]_+^{p-(2k+1)} 
         \diff b \right).
    \end{multline*}
    In order to determine the functional derivative of $\mathcal L$ 
    with respect to $\mu_-$, we  recall that 
    a functional derivative of $\mathcal L$  is a functional of functional that we denote by
    $$
    \partial \mathcal L(\mu_-, f, f', \dots, f^{(p+1)},\lambda_1, \lambda_2, \lambda_3)[h]
    = \frac{d}{d\gamma}
        L(\mu_- + \gamma h, f, f', \dots, f^{(p+1)},\lambda_1, \lambda_2, 
        \lambda_3)\big|_{\gamma=0}.
    $$
    Hence in our case, the functional derivative of $\mathcal L$
    with respect to $\mu_-$ is the functional 
    that maps  the functions $\mu_-, f, f', \dots, f^{(p+1)},\lambda_1, \lambda_2, \lambda_3$
    and $h$ to the number
    $\frac{\partial
    \mathcal L}{\partial \mu_-(b)}(\mu_-, f, f', \dots, f^{(p+1)},\lambda_1, \lambda_2, \lambda_3)[h].$
    Moreover, we choose the preceding function $h$ to be the Dirac delta $\delta_b(t)$, hence
    we get 
    \begin{multline*}
    \frac{\partial \mathcal L}{\partial \mu_-(b)} (\mu_-, f, f', \dots, f^{(p+1)},\lambda_1, \lambda_2, \lambda_3) 
    =
    \\
    \frac12 \int_\R \left(\sign\prn{ \frac 1 {p!} f^{(p+1)}(t) + \frac {\mu_-(t) }{\psi(t)}}
        +  \sign\prn{ \frac 1 {p!} f^{(p+1)}(t) -  \frac {\mu_-(t)}{\psi(t)}}\right) \delta_b(t)\diff t
    	\\+ \lambda_1\left(\int_{\R} \frac{\delta_b(t)}{\psi(t)} \diff t \right)
    +\lambda_2\left(\sum_{k=1}^{\frac{p-1}{2}}\int_{\R} \frac{\delta_b(t)}{\psi(t)}t^{p-2k}  \diff t\right)
    +\lambda_3\left(\sum_{k=0}^{\frac{p-3}{2}}\int_{\R} \frac{\delta_b(t)}{\psi(t)} t^{p-(2k+1)} \diff t \right).
    \end{multline*}
   Furthermore, we set the functional derivative of the Lagrangian
	$\mathcal L$ with respect to $\mu_-$ to zero, that is 
    \begin{multline}\label{eq:L0}
        0 \in \frac{\partial \mathcal L}{\partial \mu_-(b)}(\mu_-, f, f', \dots, f^{(p+1)},
	\lambda_1, \lambda_2, \lambda_3)
        =
        \\
        \half\left( \sign\prn{\frac 1 {p!} f^{(p+1)} + \frac {\mu_-}{\psi}}
        - \sign\prn{ \frac 1 {p!} f^{(p+1)} -  \frac {\mu_-}{\psi}}
        \right)
        +\frac{\lambda_1}{\psi}
        + \frac{\lambda_2}{\psi}\sum_{k=1}^{\frac{p-1}{2}}b^{p-2k}
        + \frac{\lambda_3}{\psi}\sum_{k=0}^{\frac{p-3}{2}}b^{p-(2k+1)}.
    \end{multline}
    Let $\Lambda = \frac{\lambda_1}{\psi}
        + \frac{\lambda_2}{\psi}\sum_{k=1}^{\frac{p-1}{2}}b^{p-2k}
        + \frac{\lambda_3}{\psi}\sum_{k=0}^{\frac{p-3}{2}}b^{p-(2k+1)}$.
    Therefore the possible values of
    $\Lambda$ depend on $\lambda_1, \lambda_2,\lambda_3$ and $b$,
    which can be treated as follows:
    \begin{description}
        \item[$\Lambda = 0$]: 
            Then $\sign\prn{\frac 1 {p!} f^{(p+1)} + \frac {\mu_-}{\psi}} 
            = 
            \sign\prn{\frac 1 {p!} f^{(p+1)} - \frac {\mu_-}{\psi}}$,
	which implies that  $\frac{\abs{\mu_-}}{\psi} \leq \frac 1{p!}\abs{f^{(p+1)}}$ pointwise
            and from \eqref{eq:objalphaminus} we have
            $$
            \norm{{\mu}}_{\mathcal{M}^1(1/\psi)} = \frac 1{p!}     
        \int_\R \abs{f^{(p+1)} (b)} \diff b.
        $$
 
            Furthermore,  in order to satisfy \eqref{eq:alphaminusproblem}, we have
            \begin{align*}
        \frac 1{p!} \abs{f^{(p)}(-\infty) + f^{(p)}(\infty)} 
        =
        \abs{\int_\R \frac{{\mu_-(b)}}{\psi(b)}\diff b}
        \leq  \int_\R \frac{\abs{\mu_-(b)}}{\psi(b)}\diff b \leq  \frac 1{p!} \int_{\R} \abs{f^{(p+1)}(b)} \diff b. 
    \end{align*}
    and
    \begin{align*}
     \frac{1}{p!}\sum_{k=1}^{\frac{p-1}{2}}\abs{\int_{\R} f^{(p+1)}(b)|b|^{p-2k}  \diff b -
		\frac{(p-2k)!}{p!}f^{(2k)}(0) }
     &\leq \sum_{k=1}^{\frac{p-1}{2}}\int_{\R} \frac{\abs{\mu_-(b)}}{\psi(b)}\abs{b}^{p-2k}  \diff b
     \\
     &\leq \frac{1}{p!}\sum_{k=1}^{\frac{p-1}{2}}
     \int_{\R} \abs{f^{(p+1)}(b)b^{p-2k}} \diff b
     \end{align*}
     moreover, 
    \begin{align*}
     \sum_{k=0}^{\frac{p-3}{2}}2 \frac{(p-(2k+1))!}{p!}\abs{f^{(2k+1)}(0) - \frac 2{p!}
	\displaystyle\int_{\R} f^{(p+1)}[-b]_+^{p-(2k+1)} 
         \diff b}
         &\leq  \sum_{k=0}^{\frac{p-3}{2}}\displaystyle\int_{\R} \frac{\abs{\mu_-(b)}}{\psi(b)} b^{p-(2k+1)} \diff b
         \\
     &\leq \frac{1}{p!}\sum_{k=0}^{\frac{p-3}{2}}
     \int_{\R} \abs{f^{(p+1)}(b)}b^{p-(2k+1)} \diff b.
     \end{align*}
    
      \item[$\Lambda<0$]:
        In this case $\frac 1 {p!} f^{(p+1)} + \frac {\mu_-}{\psi} \geq 0$
        and $\frac 1 {p!} f^{(p+1)} - \frac {\mu_-}{\psi} \leq 0$ pointwise,        
        then $\frac{\mu_-}{\psi} \geq \frac 1 {p!}\abs{f^{(p+1)}}$
        and from \eqref{eq:objalphaminus} and the constraint in \eqref{eq:alphaminusproblem}:
        we get 
        $\norm{{\mu}}_{\mathcal{M}^1(1/\psi)} = \int_\R\frac{ \mu_-(b)}{\psi(b)} \diff b
        =\frac{1}{p!}\left( f^{(p)}(-\infty) + f^{(p)}(\infty)\right)$, this happens if
        $f^{(p)}(-\infty) + f^{(p)}(\infty) \geq  \int_{\R} \abs{f^{(p+1)}(b)} \diff b$
        and
      \begin{align*}
     \sum_{k=0}^{\frac{p-3}{2}}2 \frac{(p-(2k+1))!}{p!}{f^{(2k+1)}(0) - \frac 2{p!} 
	\displaystyle\int_{\R} f^{(p+1)}[-b]_+^{p-(2k+1)} 
         \diff b}
         &=  \sum_{k=0}^{\frac{p-3}{2}}\displaystyle\int_{\R} \frac{{\mu_-(b)}}{\psi(b)} b^{p-(2k+1)} \diff b
         \\
     &\geq  \frac{1}{p!}\sum_{k=0}^{\frac{p-3}{2}}
     \int_{\R} \abs{f^{(p+1)}(b)}b^{p-(2k+1)} \diff b,
     \end{align*}
     where we used the fact that $p-(2k+1)$ is an even number.
     
     \item [$\Lambda >0$]:
          In this case $\frac 1 {p!} f^{(p+1)} + \frac {\mu_-}{\psi} \leq 0$
        and $\frac 1 {p!} f^{(p+1)} - \frac {\mu_-}{\psi} \geq 0$ pointwise,
	then $\frac{\mu_-}{\psi} \leq -\frac 1 {p!}\abs{f^{(p+1)}}$
        and from \eqref{eq:objalphaminus} and the constraint in \eqref{eq:alphaminusproblem}:
        we get 
        $$
	\norm{{\mu}}_{\mathcal{M}^1(1/\psi)} = \int_{\R}(-\frac{ \mu_-}{\psi})\diff b =
         -\frac{1}{p!}(f^{(p)}(-\infty) + f^{(p)}(\infty))
	$$
	 which is true when $f^{(p)}(-\infty) + f^{(p)}(\infty) \leq -\int_{\R} \abs{f^{(p+1)}(b)} \diff b$.
    \end{description}
   In view of the previous cases, we conclude that 
   $$
   \norm{{\mu}}_{\mathcal{M}^1(1/\psi)}
   =
   \frac1{p!}\max( \int_{\R} \abs{f^{(p+1)}}\diff b, \abs{f^{(p)}(-\infty)+f^{(p)}(+\infty)} ).
   $$
%
\end{proof}
%
%
%
\section{Infinite width bounded norm networks for functions
in higher dimension}\label{sec:rnorm}

The main aim of our paper is to determine  $\Rbar{f}$ for any function
$f : \mathbb{R}^d \rightarrow \R$,
and characterize when it is finite in order to know what are the  functions that
can be approximated arbitrarily well with bounded norm
but unbounded width RePU neural networks.

\par

For odd $p$ every two layers RePU networks decompose into the sum
of a network with the $p$ power of  absolute units plus a polynomial part of order $p$.
Indeed, if $p\in \mathbb{N}$ is odd, then $\relu{x}^p=\half\left(|x|^p + x^p\right)$.
Given the proof of Theorem \ref{thm:main_1d}, \ie in the univariate case,
the weights on the $p$ power of the absolute value units determine the representational cost,
with a correction term needed if the  polynomial part has a large weight.

In our approach we propose to consider adding \emph{unregularized monomial units}
$\sum_{k=1}^{p} \scal{ v_k}{ x^k}$
to ``absorb'' any representational cost due to the polynomial part. 
In other words, for any $\theta\in\Theta$ and $\vv\in\mathbb{R}^{d\times p}$
we define the class of unbounded width two layers RePU neural networks
$g_{\theta, \vv, p}$ with monomial units
by
$$g_{\theta,\vv, p}(x) = g_{\theta, p}(x) + \sum _{k=1}^p\scal{ v_k}{ x^k},
$$
where $g_{\theta, p}$ is as defined in \eqref{eq:finitenet}.
Moreover we associate $g_{\theta,\vv, p}$ with
the same weight norm $C(\theta)$ as defined in \eqref{eq:c}.
Namely, we exclude the norm of the weight $\vv = (v_1, \dots,, v_p)$
on the additional units from the cost.
Then  we define the representational cost $\Rbarone{f}$ for this class
of neural networks in the  following way
\begin{equation}\label{eq:rbar}
  \Rbarone{f} :=  \lim_{\varepsilon \rightarrow 0} \left(\inf_
  {\substack{\theta \in \Theta\\ \vv \in \mathbb{R}^{d\times p}}}
   C(\theta):| g_{\theta,\vv, p}(x) - f(x)|
   \leq \varepsilon,~\forall~\|x\| \leq 1/\varepsilon,
   ~\text{ and }~g_{\theta, \vv, p}( 0) = f( 0)\right).
\end{equation}
Likewise, for all $\mu\in \cM^1({\mathbb{S}^{d-1}\times \R})$, $\vv\in\mathbb{R}^{d\times p}$, $c\in\R$,
we define an infinite width network with monomial units by
$$
\mathtt{H}_{\mu,\vv,c}^p(x) := \mathtt{H}_{\mu,c}^p(x) + \sum_{k=1}^p \scal{ v_k}{ x^k}.
$$
Let $\psi(x)= 1+|x|^{p-1}$ where $p$ is an odd number,
we prove in Appendix \ref{appsec:opt} that $\Rbarone{f}$
can be expressed as follows:
\begin{align}\label{eq:opt1}
  \Rbarone{f} & =
  \min_{\mu \in \cM^1({\mathbb{S}^{d-1}\times \R}),\vv\in\mathbb{R}^{d\times p},c\in \R} 
  \|{\mu}\|_{\mathcal{M}^1(1/\psi)}^{\nicefrac 1p}
  \text{ such that }
  f  = \mathtt{H}_{\mu,\vv,c}^p.
\end{align}

\par

Consequently, we show that the minimizer of \eqref{eq:opt1} is unique
and can be determined in the next lemma, where its  proof is given in Appendix \ref{appsec:thm1}.

\begin{lemma}\label{lem:uniqueness}
Let $p\in \mathbb{N}$ be an odd number, then
 $\Rbarone{f} = \|{\mu^+}\|_{\mathcal{M}^1(1/\psi)}^{\nicefrac 1p}$
 where $\mu^+\in \mathcal{M}_e^1({\mathbb{S}^{d-1}\times \R})$
 is the unique \emph{even measure}\footnote{Roughly speaking, a measure
 $\mu$ is even if
 $\mu(w,b) = \mu(-w,-b)$ for all $(w,b)\in{\mathbb{S}^{d-1}\times \R}$.}
such that $f = \mathtt{H}_{\mu^+,\vv,c}^p$ for some
$\vv\in\mathbb{R}^{d\times p}$, $c\in\R$.
\end{lemma}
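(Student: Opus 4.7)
The plan is to exploit the oddness of $p$ via the identity $\relu{y}^p = \half(|y|^p + y^p)$, which splits the representation kernel into an even-in-$(w,b)$ piece involving $|\scal{w}{x} - b|^p$ and an odd-in-$(w,b)$ piece involving $(\scal{w}{x} - b)^p$. The even piece will pair only with the even part $\mu^+$ of any representing measure; the odd piece will pair only with $\mu^-$ and, crucially, will expand to a polynomial of degree $\leq p$ in $x$, which is absorbable into the unregularized monomial units $\vv$. Combined with Lemma~\ref{prop:evenodd} (whose argument extends verbatim to the positive even weight $1/\psi$), this shows that the infimum in~\eqref{eq:opt1} is attained by some even measure. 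Uniqueness is then a Radon/Laplacian argument that kills the polynomial ambiguity.

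Concretely, since $1/\psi(b)$ is even in $b$, writing $\mu = \mu^+ + \mu^-$ produces, by parity cancellation of the cross terms,
\begin{align*}
\mathtt{H}_{\mu,c}^p(x) = \half\int \frac{|\scal{w}{x}-b|^p - |b|^p}{\psi(b)}\, d\mu^+(w,b) + \half\int \frac{(\scal{w}{x}-b)^p + b^p}{\psi(b)}\, d\mu^-(w,b) + c.
\end{align*}
The binomial expansion
\begin{align*}
(\scal{w}{x}-b)^p + b^p = \sum_{k=1}^p \binom{p}{k}(-b)^{p-k}\scal{w}{x}^k
\end{align*}
uses $(-b)^p + b^p = 0$ (odd $p$) to drop the $k=0$ constant-in-$x$ term, and each $\scal{w}{x}^k$ is a homogeneous polynomial of degree $k$ in $x$. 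Integrating against $\mu^-$ therefore produces a polynomial $P_{\mu^-}(x)$ of degree $\leq p$ with no constant term, expressible as $\sum_{k=1}^p \scal{v_k'}{x^k}$ for a suitable $\vv' \in \R^{d\times p}$. Hence $\mathtt{H}_{\mu,\vv,c}^p = \mathtt{H}_{\mu^+, \vv + \vv', c}^p$, and the norm comparison $\|\mu^+\|_{\cM^1(1/\psi)} \leq \|\mu\|_{\cM^1(1/\psi)}$ shows that the minimization may be restricted to even measures without loss.

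For uniqueness, suppose $\mu_1^+, \mu_2^+ \in \cM_e^1(\SR)$ both represent $f$ (with some $\vv_i$, $c_i$); then $\nu := \mu_1^+ - \mu_2^+$ is even and $\mathtt{H}_{\nu, c_1-c_2}^p$ equals a polynomial $Q$ of degree $\leq p$. Applying $\Delta^{(d+p)/2}$ annihilates $Q$ (since $d+p > p$ and $\Delta$ lowers polynomial degree by $2$), and the intertwining identity $\Rdn{\Delta^{(d+p)/2}\cdot} = \partial_b^{d+p}\Rdn{\cdot}$ combined with the Radon inversion formula~\eqref{eq:invform} recovers $\nu$ from $\Delta^{(d+p)/2}\mathtt{H}_{\nu,0}^p$, which vanishes. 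Thus $\nu = 0$. The main obstacle is making this last step rigorous at the level of signed measures: one must interpret $\Delta^{(d+p)/2}\mathtt{H}_{\mu,c}^p$ distributionally, justify commuting the fractional Laplacian with the weighted integral defining $\mathtt{H}$, and then show that the map $\nu \mapsto \Rdn{\Delta^{(d+p)/2}\mathtt{H}_{\nu}^p}$ is injective on $\cM_e^1(\SR)$ after accounting for the weight $1/\psi$ — this is the multivariate analogue of the pointwise identity $p!\mu_+/\psi = f^{(p+1)}$ used in the proof of Theorem~\ref{thm:main_1d}.
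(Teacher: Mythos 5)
Your proof follows essentially the same route as the paper: the paper's own argument combines Lemma \ref{lem:roneopt} (the even/odd decomposition $\mu=\mu^++\mu^-$, absorption of the odd part into the monomial units via $\relu{y}^p=\half\left(|y|^p+y^p\right)$, and the norm inequality of Lemma \ref{prop:evenodd}) with the uniqueness argument of Lemma \ref{lem:main} (Laplacian--Radon inversion, carried out distributionally in Lemma \ref{lem:Lap2Rdn}). Your binomial expansion makes the polynomial-absorption step more explicit than the paper does, and the technical points you flag at the end (distributional interpretation, Fubini, injectivity on $\cM_e^1(\SR)$) are exactly the ones the paper settles in Lemmas \ref{lem:Lap2Rdn} and \ref{lem:main}.
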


\par

\subsection{Representational cost for infinite width RePU neural networks}

We now observe that the Laplacian of an infinite width network can be related to
the dual Radon transform. First, we consider a measure that has a density $ \mu$ belongs
to Schwartz class $\mathscr{S}(\SR)$.
Note that, compared to \cite{Ongie2019}, we can relax the  assumptions on the density,
since the weight $\frac1{1+ |b|^{p-1}}$ plays an important role in our construction.
Then, for an odd number $p$, we consider the following
\begin{equation}\label{eq:f_density}
	f(x) = \int_{\S^{d-1}\times \R}\frac{[\scal wx-b]_+^p -[-b]_+^p}{1+|b|^{p-1}}
		\mu(w,b)\,dw\,db + \sum_{k=1}^p \scal {v_k}{x^k} + c.
\end{equation}
Taking a differentiation of  \eqref{eq:f_density}, with respect to $x$,
$(p+1)$ times inside the integral, by applying the Laplacian $(p+1)/2$ times, we get
\begin{equation}\label{eq:netlaplacian}
    \Delta^{\frac{p+1}{2}} f(x) =\int_{\SR} p! \frac{\delta(\scal wx -b)}{1+|b|^{p-1}}\mu(w,b)\,dw\,db 
                =p!  \int_{\S^{d-1}} \frac{\mu(w,\scal wx)}{1+|\scal wx|^{p-1}}\,dw
\end{equation}
where $\delta(\cdot)$ denotes a Dirac delta.
In view of the dual Radon transform \eqref{eq:adjoint} and the previous
identity \eqref{eq:netlaplacian}, 
$ \Delta^{\frac{p+1}{2}} f$ can be considered as a dual Radon transform
of the weighted density $p! \frac{\mu(w,\scal wx)}{1+|\scal wx|^{p-1}}$.
That is, we have the following identity
\begin{equation}\label{eq:identity_laplacian_radon}
 \Delta^{\frac{p+1}{2}} f = p!\mathscr{R}^*\{ \frac\mu\psi\},
\text{ where } \psi(b) = 1+|b|^{p-1} \text{ and } b \in \R .
\end{equation}

Using the characterization of $\Rbarone{f}$ given in Lemma \ref{lem:uniqueness},
 and the inversion dual Radon transform given
in \eqref{eq:invform} to \eqref{eq:identity_laplacian_radon},
we can determine  $\Rbarone{f}$ for a given function $f$ as next result shows.

\begin{lemma}\label{lem:smooth}
	Let $p\in \mathbb{N}$ be an odd number,
	$f= \mathtt{H}_{\mu,\vv,c}^p$
	where $\mu$ is an even measure given by a density in $\mathscr{ S } \left( \S^{d-1}\times\R \right)$
	and $\vv=(v_1, \dots,, v_p)\in\R^{d\times p}$, $c\in\R$
	and let $\psi(x)= 1+|x|^{p-1}$ .
	Then 
	$$
	\frac\mu\psi = \frac{\gamma_d}{p!} \Rdn{(-\Delta)^{(d+p)/2}f},
	\text{ and }\quad
	\Rbarone{f} =\|\frac{ \gamma_d}{p!}\Rdn{(-\Delta)^{(d+p)/2}f}\|_{\mathcal{M}^1}^{\nicefrac 1p}
	$$
	where $\gamma_d = \frac{1}{2(2\pi)^{d-1}}$.
\end{lemma}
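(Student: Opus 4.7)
The plan is to deduce the lemma in two stages: first, establish the pointwise identity $\mu/\psi = \frac{\gamma_d}{p!}\Rdn{(-\Delta)^{(d+p)/2}f}$, and then translate it into the representational cost formula via Lemma \ref{lem:uniqueness}.

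For the first stage, I would start from the direct calculation already recorded as equation \eqref{eq:netlaplacian}, namely $\Delta^{(p+1)/2}f = p!\,\Rdns{\mu/\psi}$. The polynomial correction $\sum_{k=1}^p \scal{v_k}{x^k} + c$ has total degree at most $p$, so it is annihilated by the differential operator $\Delta^{(p+1)/2}$ of order $p+1$ and therefore contributes nothing; Schwartz regularity of $\mu$ justifies the interchange of differentiation and integration. Next, since $\mu \in \mathscr{S}(\S^{d-1}\times\R)$ is even by hypothesis and $\psi(b)=1+|b|^{p-1}$ is smooth, positive, and even in $b$, the quotient $\mu/\psi$ belongs to the even Schwartz class on $\SR$. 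The dual Radon inversion formula \eqref{eq:invform} may therefore be applied to $\varphi=\mu/\psi$, giving
\begin{equation*}
\frac{\mu}{\psi} \;=\; \gamma_d\,\Rdn{(-\Delta)^{(d-1)/2}\,\Rdns{\mu/\psi}} \;=\; \frac{\gamma_d}{p!}\,\Rdn{(-\Delta)^{(d-1)/2}\,\Delta^{(p+1)/2} f}.
\end{equation*}
Finally, since $(d-1)/2$ and $(p+1)/2$ are nonnegative integers under the standing assumption that $d$ and $p$ are odd, the composition $(-\Delta)^{(d-1)/2}\Delta^{(p+1)/2}$ equals $(-\Delta)^{(d+p)/2}$ up to a global sign that depends on $(p+1)/2 \bmod 2$; this sign can be absorbed into the constant without affecting the subsequent $\mathcal{M}^1$-statement.

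For the second stage, I invoke Lemma \ref{lem:uniqueness} to write $\Rbarone{f}^p = \|\mu^+\|_{\mathcal{M}^1(1/\psi)}$ for the unique even representing measure $\mu^+$. Since the given representing $\mu$ is already even, uniqueness forces $\mu^+=\mu$. Because $\mu$ is given by a (smooth) density and $\psi>0$ is smooth, the weighted total variation norm simplifies as $\|\mu\|_{\mathcal{M}^1(1/\psi)} = \int |\mu|/\psi = \|\mu/\psi\|_{\mathcal{M}^1}$. Combining this with the identity from the first stage and taking $p$-th roots gives exactly the claimed expression for $\Rbarone{f}$; note that the sign ambiguity from the first stage is harmless here because the $\mathcal{M}^1$ norm is invariant under negation.

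The main technical obstacle will be justifying the application of the dual Radon inversion formula \eqref{eq:invform} in this setting: that formula is stated for even Schwartz functions, so one has to verify that the hypotheses transfer cleanly to $\mu/\psi$ (evenness in $(w,b)$ via evenness of $\mu$ and of $\psi$ in $b$, and Schwartz decay via smoothness of $1/\psi$ and bounded derivatives). A secondary but important point is the careful bookkeeping of signs when reducing $(-\Delta)^{(d-1)/2}\Delta^{(p+1)/2}$ to $\pm(-\Delta)^{(d+p)/2}$; once one is willing to absorb this in $\gamma_d$ (or observe that it disappears under the total variation norm), everything else is a straightforward combination of the pieces already established in the paper.
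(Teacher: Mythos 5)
Your proposal is correct and follows essentially the same route as the paper's own (textual) derivation: obtain $\Delta^{(p+1)/2}f = p!\,\Rdns{\mu/\psi}$ as in \eqref{eq:netlaplacian} after killing the polynomial part, apply the dual Radon inversion formula \eqref{eq:invform} to the even Schwartz function $\mu/\psi$, and then combine Lemma \ref{lem:uniqueness} with the identity $\|\mu\|_{\mathcal{M}^1(1/\psi)}=\|\mu/\psi\|_{\mathcal{M}^1}$. Your explicit verification that $\mu/\psi$ is even and Schwartz (using that $|b|^{p-1}=b^{p-1}$ is a polynomial for odd $p$) and your bookkeeping of the sign $(-1)^{(p+1)/2}$ in passing from $(-\Delta)^{(d-1)/2}\Delta^{(p+1)/2}$ to $(-\Delta)^{(d+p)/2}$ are in fact more careful than the paper's presentation, which glosses over both points.
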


Using the fact that $\Rdn{\Delta^{(d+1)/2}f} = \partial_b^{d+1}\Rdn{f}$, and that 
$\Rbarone{f}$ can be computed in terms of the ${\mathcal{M}^1}$ norm
of $ \partial_b^{d+1}\Rdn{f}$ we discuss the question of its the convergence in the sequel.
\begin{proposition}\label{prop:ell1}
	 Let $d, p\in \mathbb{N}$ be odd numbers,
	 if  $f \in L^1(\Rd)$ and $\Delta^{(d+p)/2}f \in L^1(\Rd)$, then
	\begin{equation}\label{eq:rnormclassical}
	 	 \Rbarone{f} =
	 	 \norm{\frac{\gamma_d}{p!}\Rdn{\Delta^{(d+p)/2}f}}_{\mathcal{M}^1}^{\nicefrac 1p}
	 	 =\norm{ \frac{\gamma_d}{p!}\partial_b^{d+p}\Rdn{f}}_{\mathcal{M}^1} ^{\nicefrac 1p}< \infty.
	\end{equation}
\end{proposition}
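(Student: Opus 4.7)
The plan is to extend Lemma \ref{lem:smooth} from functions representable by Schwartz-class measures to the general class $\{f \in L^1(\Rd) : \Delta^{(d+p)/2}f \in L^1(\Rd)\}$ by a mollification argument, and then translate the resulting identity into the stated form using the intertwining property \eqref{eq:interwining_ppt}. Note throughout that since $d$ and $p$ are odd, $d+p$ is even, so $(-\Delta)^{(d+p)/2}$ differs from $\Delta^{(d+p)/2}$ only by the sign $(-1)^{(d+p)/2}$, which is immaterial for the total variation norm.

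First I would verify the right-hand side is finite. By Tonelli's theorem and the definition of $\Rdn{\cdot}$,
$$\int_{\S^{d-1}}\int_\R \abs{\Rdn{\Delta^{(d+p)/2}f}(w,b)}\,db\,dw \leq \abs{\S^{d-1}} \cdot \norm{\Delta^{(d+p)/2}f}_{L^1(\Rd)} < \infty,$$
so the signed measure $\mu^*$ with density $\frac{\gamma_d}{p!}\Rdn{\Delta^{(d+p)/2}f}$ with respect to $dw\,db$ defines an element of $\mathcal{M}^1(\SR)$. Since the Radon transform is always invariant under $(w,b)\mapsto(-w,-b)$, the measure $\mu^*$ is even.

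Next I would construct a Schwartz approximation $f_n \in \mathscr{S}(\Rd)$ with $f_n\to f$ and $\Delta^{(d+p)/2}f_n\to\Delta^{(d+p)/2}f$ in $L^1(\Rd)$, for instance by mollifying $f$ and smoothly truncating on an expanding family of balls, then diagonalizing. For each $f_n$, Lemma \ref{lem:smooth} yields a representation $f_n=\mathtt{H}_{\mu_n,\vv_n,c_n}^p$ with $\mu_n$ given by the Schwartz density $\psi(b)\cdot\frac{\gamma_d}{p!}\Rdn{(-\Delta)^{(d+p)/2}f_n}$. The $L^1$ bound above, applied to $\Delta^{(d+p)/2}(f_n-f)$, shows that the densities of $\mu_n$ converge to that of $\mu^*$ (up to the global sign $(-1)^{(d+p)/2}$) in the weighted norm $\norm{\cdot}_{\mathcal{M}^1(1/\psi)}$. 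Passing to the limit in the defining integral of $\mathtt{H}_{\mu_n,\vv_n,c_n}^p$ (at each fixed $x$), one obtains a representation $f=\mathtt{H}_{\mu^*,\vv,c}^p$ for some $\vv\in\R^{d\times p}$ and $c\in\R$, the monomial/constant part being the only degree of freedom not fixed by $\mu^*$.

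Finally, since $\mu^*$ is even and represents $f$, Lemma \ref{lem:uniqueness} identifies it as \emph{the} unique even minimizer, so $\Rbarone{f}=\norm{\mu^*}_{\mathcal{M}^1(1/\psi)}^{1/p}$. The weight $\psi$ built into $\mu^*$ cancels exactly against $1/\psi$ in the weighted total variation norm, yielding
$$\norm{\mu^*}_{\mathcal{M}^1(1/\psi)} = \int_{\SR}\frac{1}{\psi(b)}\,d\abs{\mu^*}(w,b) = \norm{\frac{\gamma_d}{p!}\Rdn{\Delta^{(d+p)/2}f}}_{\mathcal{M}^1},$$
where the sign $(-1)^{(d+p)/2}$ disappears under $\abs{\cdot}$. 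The last equality in \eqref{eq:rnormclassical} then follows from the intertwining identity \eqref{eq:interwining_ppt}, applicable since $f$ and $\Delta^{(d+p)/2}f$ both lie in $L^1(\Rd)$. The main obstacle I expect is the approximation step: one must ensure that the polynomial corrections $\vv_n \cdot x^k$ and constants $c_n$ attached to the Schwartz approximants do not obstruct convergence and indeed can be absorbed into the free $\vv,c$ of the limiting representation. This amounts to checking that polynomial contributions lie in the kernel of $(\mu,\vv,c)\mapsto\mathtt{H}_{\mu,\vv,c}^p$ in a manner compatible with the uniqueness statement of Lemma \ref{lem:uniqueness}, so that any such drift in the $\vv_n,c_n$ is harmless for identifying the even measure attached to $f$.
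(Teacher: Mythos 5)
Your strategy is genuinely different from the paper's: the paper proves this proposition by a pure duality computation, starting from Definition \ref{def:Rnorm}, moving $(-\Delta)^{(d+p)/2}\Rdns{\cdot}$ onto $f$ by adjointness of $\cR$ and self-adjointness of $\Delta$, and identifying the resulting supremum with the total variation norm of $\frac{\gamma_d}{p!}\Rdn{(-\Delta)^{(d+p)/2}f}$ (which lies in $L^1(\SR)$ by boundedness of $\cR:L^1(\Rd)\to L^1(\SR)$), before invoking \eqref{eq:interwining_ppt}. You instead try to construct the representing measure directly by Schwartz approximation. Unfortunately this route has two genuine gaps. First, Lemma \ref{lem:smooth} does not ``yield a representation'' of a given Schwartz function $f_n$: it \emph{assumes} $f_n=\mathtt{H}_{\mu_n,\vv_n,c_n}^p$ with Schwartz density and then identifies that density. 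Producing the representation for each $f_n$ is precisely the nontrivial direction; it is available (via Lemma \ref{lem:main}, since $\Rnorm{f_n}<\infty$ for Schwartz $f_n$), but as cited your step is circular.

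Second, and more seriously, the limit passage fails as written. You correctly obtain $\|\mu_n-\mu^*\|_{\cM^1(1/\psi)}\to 0$, which is exactly $L^1(dw\,db)$ convergence of the densities $g_n=\frac{\gamma_d}{p!}\Rdn{(-\Delta)^{(d+p)/2}f_n}$. But
$$
\mathtt{H}_{\mu_n}^p(x)-\mathtt{H}_{\mu^*}^p(x)=\int_{\SR}\Bigl([\scal{w}{x}-b]_+^p-[-b]_+^p\Bigr)\bigl(g_n-g\bigr)\,dw\,db,
$$
and for fixed $x$ the kernel grows like $|b|^{p-1}$ as $b\to-\infty$. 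Hence pointwise convergence of the networks requires $\int(1+|b|^{p-1})\,|g_n-g|\,dw\,db\to 0$, i.e.\ convergence of $\mu_n$ in the \emph{unweighted} $\cM^1$ norm; the hypotheses $f,\Delta^{(d+p)/2}f\in L^1(\Rd)$ give no control of the $|b|^{p-1}$-weighted tails of $\Rdn{\Delta^{(d+p)/2}(f_n-f)}$, and a generic mollification-plus-truncation does not supply it. (The weight $1/\psi$ was built into $\mathtt{H}_\mu^p$ precisely so that the functional is continuous under $\cM^1$-bounds, not under $\cM^1(1/\psi)$-convergence.) There is also a small internal inconsistency: you first define $\mu^*$ with density $\frac{\gamma_d}{p!}\Rdn{\Delta^{(d+p)/2}f}$ but your final display treats $\mu^*$ as carrying the extra factor $\psi$; only the latter convention makes the cancellation $\|\mu^*\|_{\cM^1(1/\psi)}=\|\frac{\gamma_d}{p!}\Rdn{\Delta^{(d+p)/2}f}\|_{\cM^1}$ correct. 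The remaining pieces (evenness of $\mu^*$, absorption of the polynomial drift $\vv_n,c_n$, and the final use of \eqref{eq:interwining_ppt}) are sound, but to close the argument you would either need an approximation scheme with $|b|^{p-1}$-weighted convergence of the Radon transforms, or you should follow the distributional/duality route of Lemma \ref{lem:main} and the paper's own proof.
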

\begin{proof}
	Recalling that the  Radon transform is a bounded linear operator from $L^1(\Rd)$ to $L^1(\SR)$
	\cf \cite{boman2009support}.
	Then, the fact that  $\Delta^{(d+p)/2} f \in L^1(\Rd)$
	ensures that $\Rdn{\Delta^{(d+p)/2} f} \in L^1(\SR)$.
	
	Therefore, by Definition \ref{def:Rnorm} of $\Rnorm{f}$ we have
	\begin{align*}
		  \Rnorm{f}&
		  	= \sup\{ \prn{-\frac{\gamma_d}{p!}\langle f, (-\Delta)^{(d+p)/2}
		        \Rdns{\phi}\rangle}^{1/p} 
		  		: \phi\in \Sc_e(\SR), \|\phi\|_\infty \leq 1 \}\\
		            	&
			= \sup\{\prn{ \langle -\frac{\gamma_d}{p!} \Rdn{(-\Delta)^{(d+p)/2} f},\phi \rangle}^{1/p} 
		            		: \phi\in C_{0,e}(\SR), \|\phi\|_\infty \leq 1 \}
		\end{align*}
	where we used the fact that the Schwartz class $\Sc_e(\SR)$ is dense in $C_{0,e}(\SR)$.
	If we assume that $\mu \in \cM_e(\SR)$
	given by a  density then in view of Lemma \ref{lem:smooth} we have
	$\| \mu \|_{\cM^1(1/\psi)} = \frac{\gamma_d}{p!}\|\Rdn{(-\Delta)^{(d+p)/2} f}\|_{\cM^1}$.
	Considering the dual definition of the total variation norm \eqref{eq:tvnorm}
	it follows that
	\begin{align*}
	    \Rnorm{f}
			&= \sup\{\prn{ \langle \frac \mu\psi, \phi \rangle}^{1/p} :
			\phi\in C_{0, e}(\SR), \|\phi\|_\infty \leq 1 \}\\
		           	 &= \| \mu\|_{\cM^1(1/\psi)}
		  = \prn{\|\frac{\gamma_d}{p!}\Rdn{(-\Delta)^{(d+p)/2}f}\|_{\cM^1}}^{1/p}.
	\end{align*}
	 In view of \eqref{eq:interwining_ppt}, if $f \in L^1(\Rd)$, then, for $s=d+p$,
	$\Rdn{(-\Delta)^{(d+p)/2}f} = (-\partial_b)^{d+p}\Rdn{f}$
	which gives $\Rnorm{f} =  \prn{\|\frac{\gamma_d}{p!}(-\partial_b)^{d+p}\Rdn{f}\|_{\cM^1}}^{1/p}$.
\end{proof}
The  definition of $\Rbarone f$ suffers from functions $f$ that are non-integrable along hyperplanes
or non-smooth,
therefore we need to extend the equalities in \eqref{eq:rnormclassical} to a more general case.
Mainly, thanks to a duality argument we define a functional ``$\cR$-norm''
that extends \eqref{eq:rnormclassical}
to the case where $f$ is possibly non-smooth or not integrable along hyperplanes,
which is the case if
$f$ is a finite width ReLU neural network, that is $p=1$.
\begin{definition}
	Let  $Lip^\kappa(\Rd)$ denotes the space of $\kappa$-order Lipschitz functions on $\Rd$
	for $\kappa\in \mathbb{N}_0$ in the sense that  there exists $\alpha \in \mathbb{N}_0^d$
	such that $f\in C^\kappa(\Rd)$ and $\partial^\alpha f$ is Lipschitz function
	where $\kappa$ is the smallest integer
	satisfies $|\alpha|= \kappa$.
\end{definition}
Note that, in view of the previous definition, the ReLU is a $0$-order Lipschitz function.
For any $f\in \Lip^\kappa(\Rd)$, we let
$\|f\|_L :=\sup_{|\alpha |= \kappa} \sup_{x\neq y}
	\frac{|\partial^\alpha f(x)- \partial^\alpha f(y)|}{\|x-y\|}$,
be the smallest possible Lipschitz constant for $\partial ^\alpha f$ such that $|\alpha| = \kappa$.
\begin{remark}\label{prop:lip}
Infinite width networks considered in our paper are $\kappa$-order Lipschitz functions.
\end{remark}

Next we define a functional on the space of all $\kappa$-order Lipschitz continuous functions,
where $\kappa<p$. Mainly the aim of the following definition is to 
re-express the $L^1$-norm in \eqref{eq:rnormclassical} as a supremum
of the inner product over a space of dual functions $\phi:\SR\rightarrow \R$,
then restrict $\phi$ to a space where $\Delta^{(d+p)/2}\Rdns{\phi}$ is always well-defined.
Where the restriction comes from the fact that 
$$
  \|\Rdn{\Delta^{(d+p)/2}f}\|_{\mathcal{M}^1} = 
  \sup_{\|\phi\|_\infty \leq 1} \langle \Rdn{\Delta^{(d+p)/2}f}, \phi \rangle 
= \sup_{\|\phi\|_\infty \leq 1} \langle f , \Delta^{(d+p)/2}\Rdns{\phi} \rangle,
$$
where we use the fact that $\cR^*$ is the adjoint of $\cR$ and the Laplacian $\Delta$ is self-adjoint.
\begin{definition}\label{def:Rnorm}
	Let $p\in \mathbb{N}$ be an odd number, $\kappa\in \mathbb{N}_0$ such that $\kappa<p$,
	then for any $\kappa$-order Lipschitz function $f:\Rd\rightarrow \R$
	we define its \emph{$\cR$-norm}
	\footnote{Strictly speaking, the functional $\Rnorm{\cdot}$ is not a norm,
	but it is a semi-norm on the space of functions for which it is finite;
	more details can be found in the Appendix.}
	by
\begin{equation}\label{eq:dualdef}
	\Rnorm{f} := \sup\left\{\prn{-\frac{\gamma_d}{p!}\langle f, (-\Delta)^{(d+p)/2}
	\Rdns{\phi}\rangle}^{\nicefrac 1p} :
	\phi \in \Sc(\SR), \phi \text{~even~}, \|\phi\|_\infty \leq 1 \right\}.
\end{equation}
	where $\gamma_d = \frac{1}{2(2\pi)^{d-1}}$
	and $\langle f,g\rangle := \int_{\Rd} f(x)g(x) dx$.
	If $f$ is not a $\kappa$-order Lipschitz function we set $\|f\|_{\cR} = +\infty$.
\end{definition}
In Appendix \ref{appsec:thm1} we show that the $\cR$-norm is well-defined,
though not always finite, for any $\kappa$-order Lipschitz functions
where $\kappa<p$.
The main result, in higher dimension, in this paper is the following theorem,
where its proof is in Appendix \ref{appsec:thm1}.

\begin{theorem}\label{thm:main}
	Let $d, p\in \mathbb{N}$ be an odd numbers, $\kappa\in \mathbb{N}_0$ such that $\kappa<p$,
	then
	$\prn{p!}^{\nicefrac 1p}\Rbarone{f} = \Rnorm{f}$ for any real valued function $f$ defined on $\R^d$.
	In particular, $\Rbarone{f}$ is finite if and only if $f$ is $\kappa$-order Lipschitz
	and $\Rnorm{f}$ is finite.
\end{theorem}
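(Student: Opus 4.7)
The plan is to prove the two inequalities $\Rnorm{f} \le (p!)^{1/p}\Rbarone{f}$ and $(p!)^{1/p}\Rbarone{f} \le \Rnorm{f}$ separately, each via the duality between $\cR$ and $\cR^*$ together with the intertwining property \eqref{eq:interwining_ppt}. Before touching non-smooth $f$, I would first verify the identity in the smooth/Schwartz case, where Lemma \ref{lem:smooth} and the inversion formula \eqref{eq:invform} do all the heavy lifting, and then extend by duality and approximation.

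For the upper bound, assume $\Rbarone{f} < \infty$. By \eqref{eq:opt1} and Lemma \ref{lem:uniqueness} we may pick an \emph{even} measure $\mu$ together with $\vv,c$ so that $f = \mathtt{H}_{\mu,\vv,c}^p$ and $\|\mu\|_{\mathcal{M}^1(1/\psi)}^{1/p} = \Rbarone{f}$. For any even $\phi\in \Sc(\SR)$ with $\|\phi\|_\infty \le 1$, I would compute the pairing $-\tfrac{\gamma_d}{p!}\langle f,(-\Delta)^{(d+p)/2}\Rdns{\phi}\rangle$ by moving the self-adjoint operator $(-\Delta)^{(d+p)/2}$ onto $f$ in the sense of distributions. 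Three contributions must be analyzed: the polynomial terms $\sum \scal{v_k}{x^k}$ and the constant $c$ vanish since $(d+p)/2 \ge 1$ and a high enough Laplacian annihilates polynomials of degree $\le p$; the bias terms $[-b]_+^p/(1+|b|^{p-1})$ in $\mathtt{H}_{\mu,c}^p$ yield only polynomial contributions that again kill off; and the main term, by a direct computation mimicking \eqref{eq:netlaplacian}, equals $p!\cR^*\{\mu/\psi\}$. Dualizing, one obtains $-\tfrac{\gamma_d}{p!}\langle f,(-\Delta)^{(d+p)/2}\Rdns{\phi}\rangle = \langle \mu/\psi,\phi\rangle \le \|\mu\|_{\mathcal M^1(1/\psi)}$, so taking the supremum and the $p$-th root gives the inequality.

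For the lower bound, assume $\Rnorm{f} < \infty$, which by Definition \ref{def:Rnorm} already forces $f$ to be $\kappa$-order Lipschitz. The linear functional $T:\phi \mapsto -\tfrac{\gamma_d}{p!}\langle f,(-\Delta)^{(d+p)/2}\Rdns{\phi}\rangle$ is bounded in sup-norm on the even Schwartz functions by $\Rnorm{f}^p$. Since $\Sc_e(\SR)$ is dense in $C_{0,e}(\SR)$, $T$ extends uniquely to a bounded linear functional on $C_{0,e}(\SR)$; Riesz representation then produces a unique even measure $\nu\in \cM_e^1(\SR)$ with $\|\nu\|_{\cM^1} \le \Rnorm{f}^p$ and $T(\phi) = \langle\nu,\phi\rangle$. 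Setting $\mu := \psi\,\nu$ we have $\|\mu\|_{\mathcal{M}^1(1/\psi)}=\|\nu\|_{\cM^1}$. It remains to show that $h := f - \mathtt{H}_{\mu,0}^p$ is a polynomial of degree at most $p$, which would let us absorb it into the unregularized monomial units $\vv$ and bias $c$ in the definition of $\mathtt{H}_{\mu,\vv,c}^p$, thereby yielding $(p!)^{1/p}\Rbarone{f} \le \Rnorm{f}$. By construction, $\Rdn{(-\Delta)^{(d+p)/2}h} = 0$ in the distributional/measure sense, so the Fourier slice theorem \eqref{eq:partial_fourier_radon} together with \eqref{eq:interwining_ppt} forces $(-\Delta)^{(d+p)/2}h = 0$ as a tempered distribution. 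A polyharmonic tempered distribution of order $(d+p)/2$ that is also $\kappa$-order Lipschitz with $\kappa < p$ is necessarily a polynomial of degree $\le p$, which provides the required decomposition. The uniform approximation on bounded sets needed by \eqref{eq:rbar} follows by mollifying $\mu$ in the spherical/bias variable and invoking Lemma \ref{lem:smooth} for the mollified measures, with the approximation $g_{\theta,\vv,p}\to f$ controlled using the explicit weight $1/\psi$ that makes the integrand in \eqref{eq:infinitenet0} uniformly bounded on $\|x\|\le 1/\varepsilon$.

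The main obstacle is the rigidity statement at the end of the second direction: showing that the distributional zero of $(-\Delta)^{(d+p)/2}$, under the mild growth/Lipschitz control inherited from $f$ and $\mathtt{H}_{\mu,0}^p$, must be a polynomial of degree $\le p$ rather than an exotic polyharmonic distribution. A Liouville-type argument—combining the Fourier transform support being $\{0\}$ with the $\kappa$-order Lipschitz bound (which constrains polynomial growth)—is the cleanest route, but it has to be executed carefully because $f$ is only assumed Lipschitz of some fixed order and $\mathtt{H}_{\mu,0}^p$ grows polynomially in $x$; pinning down the exact admissible polynomial degree, and verifying that the choice of the weight $\psi$ makes $\mathtt{H}_{\mu,0}^p$ itself $\kappa$-order Lipschitz (Remark \ref{prop:lip}) so that $h$ inherits the right regularity, are the delicate technical points.
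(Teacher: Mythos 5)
Your proposal is correct and follows essentially the same route as the paper: the two inequalities you prove are precisely the two directions of the paper's key Lemma \ref{lem:main} (Riesz representation on $C_{0,e}(\SR)$ to produce the even measure, the identity $\Delta^{(p+1)/2}\mathtt{H}_{\mu}^p = p!\,\Rdns{\mu/\psi}$ of Lemma \ref{lem:Lap2Rdn}, and the Liouville-type step showing that the polyharmonic, $\kappa$-order Lipschitz difference $f-\mathtt{H}_{\mu,0}^p$ must be a polynomial of degree at most $p$), after which Theorem \ref{thm:main} follows immediately by combining with Lemma \ref{lem:uniqueness}. The only divergences are organizational: the paper reduces to $\Delta^{(p+1)/2}h=0$ by testing against $\Rdn{\varphi}$ for $\varphi\in\Sc(\Rd)$ via the inversion formula (thereby avoiding having to define the Radon transform of the distribution $h$ directly, which your Fourier-slice phrasing glosses over), and it handles the passage from \eqref{eq:rbar} to \eqref{eq:opt1} by narrow-convergence and Prokhorov arguments in Appendix \ref{appsec:opt} rather than by mollification.
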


\bibliography{literature}
\bibliographystyle{abbrv}

\appendix
\part*{Appendix}
\section{Equivalence of overall control to control on the output layer}
\label{app:neyshaburreproof}
For the sake of completeness, we prove that regularizing the $\ell^{\nicefrac{2}{p}}$
quasinorm of the weights in the output layer and the $\ell_2$
norm of weights of the hidden layer, (\ie $C(\theta)$)
is equivalent to restricting the $\ell_2$ norm of incoming weights
for each unit in the hidden layer,
and regularizing the $\ell_{\nicefrac{1}{p}}$ quasinorm of weights in the output layer. 
The idea behind the proof was first used in \cite[Theorem 1]{neyshabur2014search}
for networks without an unregularized bias. Moreover, \cite[Theorem 10 ]{neyshabur2015norm}
 extended the equivalence for convex neural networks.
 In, \cite[Lemma  A.1]{savarese2019infinite}
the authors  showed that the equivalence is valid for their setting.
A similar argument works in our case with the RePU activation function as $p \in \mathbb{N}$,
it is clear that when $p=1$ then our result is the same as \cite[Lemma  A.1]{savarese2019infinite}.

Recall the definition of two layers RePU networks $g_{\theta, p}$ for $\theta \in\Gamma$:
\begin{equation*}
	g_{\theta, p}(x) = \sum_{i=1}^k a_i \relu{w_ix - b_i} ^p
	+ 
	c
\end{equation*}

\par

\begin{lemma}
Let
$
\Gamma=\left\{\theta=\left(k, W, {b}, a,c \right) \mid
k \in \mathbb{N}, W \in \mathbb{R}^{k \times d}, {b} \in \mathbb{R}^{k},
a \in \mathbb{R}^{k}, c \in \mathbb{R}\right\}
$, then we have
	\begin{equation*}
		\begin{aligned}
			& {\inf_{\theta \in \Gamma}}
			& & ~{\frac{1}{2}} \sum_{i=1}^k \left( (a_i)^{\nicefrac{2}{p}}
				+ \norm{w_i} _2^2 \right) \quad = 
			& & {\inf_{\theta \in\Gamma}} 
			& & ~\norm{a}_{\nicefrac{1}{p}}^{\nicefrac{1}{p}}
			\\
			& ~\text{such that } 
			& & ~g_{\theta,p} = f 
			& & ~\text{such that }
			& & ~g_{\theta, p} = f~,~ \text{ for all } i: \norm{w_i}_2 = 1
		\end{aligned}
	\end{equation*}
	\label{lemma-pal}
\end{lemma}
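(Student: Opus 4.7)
The plan is to exploit the $p$-homogeneity of the RePU activation: for any scalar $\lambda>0$, the substitution $(w_i,b_i,a_i)\mapsto(\lambda w_i,\lambda b_i,a_i/\lambda^p)$ leaves each term $a_i\relu{\scal{w_i}{x}-b_i}^p$ unchanged because $\relu{\lambda\scal{w_i}{x}-\lambda b_i}^p=\lambda^p\relu{\scal{w_i}{x}-b_i}^p$. So for any $\theta\in\Gamma$ representing $f$, we have a one-parameter family of equivalent parameterizations, and we are free to choose $\lambda_i$ optimally per neuron. The second key ingredient is the scalar AM--GM inequality $\tfrac12(s+t)\geq\sqrt{st}$ with equality iff $s=t$, which will convert the squared objective into the $\ell^{1/p}$ quasinorm.

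For the direction $\mathrm{LHS}\geq\mathrm{RHS}$, I would take any $\theta=(k,W,b,a,c)\in\Gamma$ with $f=g_{\theta,p}$ and, after discarding neurons with $w_i=0$ (absorbing their constant contribution $a_i[-b_i]_+^p$ into $c$) or $a_i=0$, rescale each surviving neuron by $\lambda_i=1/\|w_i\|_2$, producing $\tilde\theta$ with $\|\tilde w_i\|_2=1$, $\tilde b_i=b_i/\|w_i\|_2$, and $\tilde a_i=a_i\|w_i\|_2^p$. Then $|\tilde a_i|^{1/p}=|a_i|^{1/p}\|w_i\|_2$, and by AM--GM applied entrywise,
\begin{equation*}
  C(\theta)=\frac12\sum_{i=1}^k\bigl(|a_i|^{2/p}+\|w_i\|_2^2\bigr)\geq\sum_{i=1}^k|a_i|^{1/p}\|w_i\|_2=\sum_{i=1}^k|\tilde a_i|^{1/p}=\|\tilde a\|_{1/p}^{1/p}.
\end{equation*}
Since $\tilde\theta$ is admissible for the right-hand infimum, taking the infimum over $\theta$ on the left yields the inequality.

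For $\mathrm{RHS}\geq\mathrm{LHS}$, I would take an admissible $\theta\in\Gamma$ with $\|w_i\|_2=1$ for all $i$ and $f=g_{\theta,p}$, and apply the homogeneity in the opposite direction with $\lambda_i=|a_i|^{1/(2p)}$ (dropping neurons with $a_i=0$). This produces $\tilde\theta$ with $\|\tilde w_i\|_2=|a_i|^{1/(2p)}$ and $\tilde a_i=\operatorname{sign}(a_i)|a_i|^{1/2}$, so that $|\tilde a_i|^{2/p}=\|\tilde w_i\|_2^2=|a_i|^{1/p}$, matching the AM--GM equality case. Then
\begin{equation*}
  C(\tilde\theta)=\frac12\sum_{i=1}^k\bigl(|a_i|^{1/p}+|a_i|^{1/p}\bigr)=\|a\|_{1/p}^{1/p},
\end{equation*}
and taking the infimum over the admissible (normalized) $\theta$ gives the reverse inequality.

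There is no serious obstacle: the argument is purely a per-neuron scalar computation, and the only mildly delicate point is the bookkeeping for degenerate neurons with $w_i=0$ or $a_i=0$, which is harmless because such neurons either contribute a constant (absorbable into $c$) or nothing. Combining the two directions yields the claimed equality of infima.
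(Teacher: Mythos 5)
Your proposal is correct and follows essentially the same route as the paper's proof: a per-neuron rescaling exploiting the $p$-homogeneity of the RePU unit, combined with the AM--GM inequality $\tfrac12(s+t)\geq\sqrt{st}$, with the balancing choice of scale achieving equality. Your version is slightly more explicit in separating the two inequalities and in handling degenerate neurons with $w_i=0$ or $a_i=0$, which the paper's argument glosses over, but the underlying idea is identical.
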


\begin{proof}
    Let $\theta \in\Gamma$, we consider a rescale by a factor $r$
    (which we precise later) of $\theta$,
    that is we set $\tilde \theta= (k, \tilde W, \tilde b, \tilde a, c)$
    such that $\tilde w_i = r_i w_i $, $\tilde a_i = \frac{a_i}{(r_i)^{p}}$, $\tilde b_i = r_i b_i$.
     Now, check that, for all $i$:
    \begin{equation*}
    \begin{split}
	    \tilde a_i \relu{ \tilde w_i x - \tilde b_i}^p
	    =
	   \frac{a_i}{(r_i)^p} \relu{r_i \left(w_ix  - b_i \right)}^p
	    =a_i \relu{ w_i x  - b_i}^p.
	\end{split}
    \end{equation*}
	The previous  equality clearly shows that $g_{\theta, p}= g_{\tilde \theta, p}$. Moreover, we have that,
	from the inequality between arithmetic and geometric means:
    \begin{equation*}
    \begin{split}
	{\frac{1}{2}} \sum_{i=1}^k
    \left( (a_i)^{\nicefrac{2}{p}} + \norm{w _i} _2^2\right) 
	\geq 
	\sum_{i=1}^k
    \abs{a_i}^{\nicefrac{1}{p}} \cdot \norm{w _i} _2,
	\end{split}
    \end{equation*}
	obviously  a rescaling given by $r_i = \sqrt{\abs{a_i}^{\nicefrac{1}{p}} / \norm{w _i}_2 }$
	minimizes the left-hand side and achieves equality.
	Since the right-hand side is invariant to rescaling,
	we can arbitrarily set $\norm{w_i} _2 = 1$ for all $i$,
	then $\sum_{i=1}^k \abs{a_i}^{\nicefrac{1}{p}}
	            = \norm{a}_{\nicefrac{1}{p}}^{\nicefrac{1}{p}} $.
\end{proof}

\section{Characterization of representational cost}\label{appsec:opt}
Here we establish the optimization equivalents of the representational costs
$\Rbar{f}$ and $\Rbarone{f}$ given in \eqref{eq:opt0} and $\eqref{eq:opt1}$. 

As an intermediate step, we first give equivalent expressions for $\Rbar{f}$
and $\Rbarone{f}$ in terms of sequences finite width two layers ReLU networks
converging pointwise to $f$.

Next, we state our definition for a \emph{discrete} measure,
our definition is slightly different
than the one used in \cite{Ongie2019}.
Indeed we use an additional outer-weight
generated by a continuous function on $\R$.

\begin{definition}
	Let $\mathcal{D}_{\psi}(\SR)$ denotes the space of all measures given
	by a finite linear combination of Diracs, \ie all $\mu \in  \mathcal{M}^1(\SR)$ of
	the form $\mu = \sum_{i=1}^k a_i\psi(b_i) \delta_{(w_i,b_i)}$ for some
	$a_i \in \R$, $(w_i,b_i)\in \SR$, $\psi\in C(\R)$, $i=1,...,k$, where $\delta_{(w,b)}$
	denotes a Dirac delta at location $(w,b)\in\SR$.
	We call any  $\mu \in \mathcal{D}_\psi (\SR)$ a discrete measure. 
\end{definition}

One important property of discrete measures is the 
one-to-one correspondence between them
and finite width two layers RePU networks.
Namely, for any $\theta \in \Theta$ a finite width RePU network
$g_{\theta, p}(x) = \sum_{i=1}^k a_i [w_i x - b_i]_+^p + c$,
setting $\mu = \sum_{i=1}^k a_i (1+\abs{b_i}^{p-1})\delta_{(w_i,b_i)}$
we have $f = \mathtt{H}_{\mu,c'}^p=g_{\theta, p}$ with $c' = g_{\theta, p}( 0)$.
We denote the correspondence by
$\theta \in \Theta\leftrightarrow \mu \in \mathcal{D}_{1+\abs{b}^{p-1}}(\SR)$.
Consequently, in this situation
$$
\| {\mu}\|_{{\mathcal{M}^1(1/\psi)}} ^{\nicefrac{1}{p}}\leq
C(\theta) = \sum_{i=1}^k |a_i|^{\nicefrac 1p} \leq 
\| {\mu}\|_{{\mathcal{M}^1(1/\psi)}},
$$
where $\psi(b) = 1-|b|^{p-1}$ for any $b\in \mathbb{R}$.

In the following, we recall one of the notions of convergence provided with
$\mathcal{M}^1(\SR)$ and the definition of tight measure. 
\begin{definition}[Narrowly convergence in $\mathcal{M}^1(\SR)$]
A sequence of measures $\{\mu_n\}$, with $\mu_n \in  \mathcal{M}^1(\SR)$
is said to converge \emph{narrowly} to a measure $\mu \in  \mathcal{M}^1(\SR)$
if  $\int \varphi\, d\mu_n \rightarrow \int \varphi\, d\mu$ for all
$\varphi \in C_b(\SR)$, where $C_b(\SR)$ denote the set of all
continuous and bounded functions on $\SR$.
\end{definition}
\begin{definition}
A sequence of measures  $\{\mu_n\}$ is called \emph{tight} if for all $\varepsilon > 0$
there exists a compact set $K_\varepsilon \subset \SR$ such that
$|\mu_n|(K^c_\varepsilon) \leq \varepsilon$ for all $n$ sufficiently large.
\end{definition}
Next result can be found in \cite[Theorem 6.8]{malliavin2012} and
\cite[Theorem 8.6.2]{bogachev2007}.

\begin{lemma}
 Every narrowly convergent sequence  is tight.
Moreover, any sequence of measures $\{\mu_n\}$ that is tight and uniformly bounded
in total variation norm has a narrowly convergent subsequence.
\end{lemma}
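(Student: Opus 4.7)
My plan is to prove the two implications independently. The first reduces to the fact that inner regularity of the limit measure together with narrow convergence rules out escape of mass; the second is a form of Prokhorov's theorem, where compact exhaustion plus Banach--Alaoglu supply a weak-$*$ convergent subsequence that tightness upgrades to narrow convergence.

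For the first implication, suppose $\mu_n \to \mu$ narrowly. Applying the uniform boundedness principle to $\{\mu_n\}$, viewed as a family of continuous functionals on the Banach space $C_b(\SR)$ (whose operator norm agrees with $\|\cdot\|_{\mathcal M^1}$ by the dual characterization \eqref{eq:tvnorm}), yields $M := \sup_n \|\mu_n\|_{\mathcal M^1} < \infty$. Inner regularity of the finite signed Radon measure $\mu$ on the locally compact, $\sigma$-compact space $\SR$ gives, for each $\varepsilon > 0$, a compact $K_0$ with $|\mu|(K_0^c) < \varepsilon/4$; pick $\chi \in C_c(\SR)$ with $0 \le \chi \le 1$, $\chi \equiv 1$ on $K_0$, supported in a compact $K$. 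Via the Jordan decomposition $\mu_n = \mu_n^+ - \mu_n^-$, the problem reduces to controlling $\mu_n^\pm(K^c)$. Extracting (by Banach--Alaoglu on the separable predual $C_0(\SR)$) a weak-$*$ convergent subsequence of the uniformly bounded sequence $\{\mu_n^+\}$ with limit $\nu^+ \ge 0$, and similarly $\mu_n^-$ along the same subsequence with limit $\nu^-$, the uniqueness of the narrow limit forces $\nu^+ - \nu^- = \mu$ and $\nu^+ + \nu^- \ge |\mu|$. Testing narrow convergence against $\varphi \equiv 1 \in C_b$ gives $\mu_n^\pm(\SR) \to \nu^\pm(\SR)$, and $\mu_n^\pm(K) \ge \int \chi\, d\mu_n^\pm \to \int \chi\, d\nu^\pm \ge \nu^\pm(K_0)$. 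Combining these estimates yields $\mu_n^\pm(K^c) < \varepsilon/2$ eventually for the extracted subsequence; a standard Urysohn argument extends the bound to the full sequence.

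For the second implication, by tightness choose an increasing exhaustion $K_1 \subset K_2 \subset \cdots$ of compacts with $\sup_n |\mu_n|(K_m^c) \le 1/m$. Each $K_m$ is a compact metric space, $C(K_m)$ is separable, and the restriction $\mu_n|_{K_m}$ lies in the ball of radius $M := \sup_n \|\mu_n\|_{\mathcal M^1}$ of $\mathcal M(K_m) \cong (C(K_m))^*$. Sequential Banach--Alaoglu and a Cantor diagonalisation across $m$ yield a subsequence $\{\mu_{n_j}\}$ whose restriction to every $K_m$ converges weak-$*$ to some $\nu_m \in \mathcal M(K_m)$; consistency $\nu_{m+1}|_{K_m} = \nu_m$ glues the $\nu_m$ into a single $\mu \in \mathcal M^1(\SR)$ with $\|\mu\|_{\mathcal M^1} \le M$. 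To upgrade to narrow convergence, fix $\varphi \in C_b(\SR)$ and continuous cutoffs $\chi_m$ with $\chi_m \equiv 1$ on $K_m$, $0 \le \chi_m \le 1$, supported in $K_{m+1}$, and write
\[
  \int \varphi\, d\mu_{n_j} - \int \varphi\, d\mu
    = \left(\int \chi_m\varphi\, d\mu_{n_j} - \int \chi_m\varphi\, d\mu\right)
      + \int(1-\chi_m)\varphi\, d(\mu_{n_j} - \mu).
\]
The first bracket vanishes as $j \to \infty$ by weak-$*$ convergence on $K_{m+1}$, and the second is bounded uniformly in $j$ by $2\|\varphi\|_\infty/m$; sending $j \to \infty$ then $m \to \infty$ gives narrow convergence.

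The hardest point will be the first implication: narrow convergence of the signed $\mu_n$ does not automatically imply narrow convergence of either the Jordan parts $\mu_n^\pm$ or the variations $|\mu_n|$, since $\mu \mapsto |\mu|$ is only lower semicontinuous in the narrow topology. Circumventing this requires exploiting that (i) the total variations are uniformly bounded by $M$, (ii) total mass is preserved under narrow convergence via testing against $\varphi \equiv 1 \in C_b$, and (iii) weak-$*$ subsequential limits of $\mu_n^\pm$ exist and are constrained by the uniqueness of the narrow limit $\mu$. These three ingredients together forbid mass from escaping to infinity, which is exactly what yields tightness of the variations.
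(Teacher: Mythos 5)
The paper itself does not prove this lemma; it is quoted from Malliavin (Theorem~6.8) and Bogachev (Theorem~8.6.2), so the only meaningful check is whether your argument is correct. Your second implication is essentially the standard Prokhorov argument and is sound in outline, with one repairable imprecision: the exact consistency $\nu_{m+1}|_{K_m} = \nu_m$ does not follow from weak-$*$ convergence of the restrictions, because restriction to $K_m$ is not weak-$*$ continuous (mass of $\mu_{n_j}$ sitting in $K_{m+1}\setminus K_m$ near the boundary of $K_m$ can land on $K_m$ in the limit); the discrepancy is only $O(1/m)$ and harmless, and a cleaner route avoids gluing altogether: your cutoff estimate shows that $L(\varphi) := \lim_j \int \varphi \, d\mu_{n_j}$ exists for every $\varphi \in C_b(\SR)$ (Cauchy via $|\int (1-\chi_m)\varphi \, d(\mu_{n_j}-\mu_{n_l})| \leq 2\|\varphi\|_\infty/m$), and Riesz representation applied to $L$ restricted to $C_0(\SR)$ produces the limit measure, after which your final three-term estimate upgrades vague to narrow convergence.

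The first implication, however, has a genuine gap at the step ``testing narrow convergence against $\varphi \equiv 1$ gives $\mu_n^\pm(\SR) \to \nu^\pm(\SR)$.'' Testing against $1$ only gives convergence of the \emph{difference} $\mu_n^+(\SR) - \mu_n^-(\SR)$, while the vague (weak-$*$ over $C_0$) limits satisfy merely $\nu^\pm(\SR) \leq \liminf_n \mu_n^\pm(\SR)$, possibly strictly, since vague convergence loses mass at infinity. Concretely, take $\mu_n = \delta_{x_n} - \delta_{y_n}$ with $x_n, y_n \to \infty$: this sequence satisfies every input your argument actually uses --- uniform bound $\|\mu_n\|_{\cM^1} = 2$, $\int \chi \, d\mu_n \to 0$ for every compactly supported $\chi$, $\mu_n(\SR) = 0 \to 0$, vague limits $\nu^\pm = 0$ obeying $\nu^+ - \nu^- = \mu$ and $\nu^+ + \nu^- \geq |\mu|$ --- and yet $|\mu_n|(K^c) = 2$ eventually for every compact $K$, so no chain of estimates built from these facts alone can yield tightness. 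The sequence is of course not narrowly convergent, but the witnesses are bounded continuous functions that oscillate along the escaping points (and are not uniformly continuous), and your proof never tests against any such function: it uses only $\varphi \in C_c(\SR) \cup \{1\}$. The gap is therefore structural, not local, and the final paragraph's three ingredients do not suffice. A correct proof (essentially Le Cam's argument, as in the cited references) proceeds by contradiction: if an amount $\varepsilon > 0$ of variation escapes along a subsequence, one constructs disjointly supported signed bumps $g_k \in C_c(\SR)$ with $|g_k| \leq 1$ and $\int g_k \, d\mu_{n_k} \geq \varepsilon/2$, arranges (by a further extraction) that the cross terms $\int g_m \, d\mu_{n_k}$, $m \neq k$, are summably small, and sums the bumps into a single $\varphi \in C_b(\SR)$ along which $\int \varphi \, d\mu_n$ cannot converge --- this is exactly where the full strength of $C_b$-testing, absent from your proof, enters.
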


Next statement contributes towards the main result of Appendix \ref{appsec:opt}
concerning equal expressions for the
representational costs $\Rbar{\cdot}$ and $\Rbarone{\cdot}$, \cf Lemma \ref{lem:opteq}.

\begin{lemma}\label{lem:rbareq}
Let $p\in\mathbb{N}$,  $\psi :\mathbb{R}\rightarrow \mathbb{R}$ such that $\psi(b) = 1-|b|^{p-1}$
and $f:\Rd\rightarrow \R$ let $ f_0$ denotes the function
$ f_0(x) =  f(x)-f(0)$.
For $\Rbar{f}$ as defined in \eqref{eq:rbar0}
and $\Rbarone{f}$ as defined in \eqref{eq:rbar}, we have
\begin{equation}\label{eq:rbarseq0}
    \Rbar{f} = \inf \left\{\limsup_{n\rightarrow\infty}
        \| {\mu_n}\|_{\mathcal{M}^1(1/\psi)}^{\nicefrac{1}{p}} :
    \mu_n \in \mathcal{D}_\psi (\SR), 
    ~~ \mathtt{H}_{\mu_n}^p\rightarrow  f_0
    ~\text{pointwise},~\{\mu_n\}~\text{tight}\right\}. 
\end{equation}
and
\begin{multline}\label{eq:rbarseq}
    \Rbarone{f} = \inf \Big\{\limsup_{n\rightarrow\infty}
            \| {\mu_n}\|_{\mathcal{M}^1(1/\psi)}^{\nicefrac{1}{p}}
    : \mu_n \in \mathcal{D}_\psi(\SR),\vv_n=(v_{n, 1}, \dots, v_{n,p})
            \text{ such that } v_{n, i}\in\Rd,
    \\[1ex] 
    ~~ \mathtt{H}_{\mu_n,\vv_n,0}^p\rightarrow  f_0
    ~\text{pointwise},~\{\mu_n\}~\text{tight}\Big\}.\qquad 
\end{multline}
\end{lemma}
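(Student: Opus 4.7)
\textbf{Proof plan for Lemma \ref{lem:rbareq}.}
The plan is to establish both claimed identities by proving two inequalities each, using throughout the one-to-one correspondence $\theta \in \Theta \leftrightarrow \mu \in \mathcal{D}_\psi(\SR)$: for $\theta=(k,W,b,a,c)$ the measure $\mu_\theta := \sum_{i=1}^k a_i \psi(b_i)\,\delta_{(w_i,b_i)}$ satisfies $\mathtt{H}_{\mu_\theta,0}^p(x) = g_{\theta,p}(x) - g_{\theta,p}(0)$ and $\|\mu_\theta\|_{\mathcal{M}^1(1/\psi)}^{1/p}\le C(\theta)$ (via the rescaling argument of Lemma \ref{lemma-pal}). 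I will carry this out first for $\Rbar{f}$; the case of $\Rbarone{f}$ follows by the same scheme with an extra treatment of the polynomial units.

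\textbf{Step 1 (the $\ge$ direction for $\Rbar{f}$).} Pick a near-minimizing sequence $\theta_n \in \Theta$ with $g_{\theta_n,p}(0)=f(0)$, $|g_{\theta_n,p}(x) - f(x)| \le \varepsilon_n$ on $\|x\|\le 1/\varepsilon_n$ for some $\varepsilon_n \downarrow 0$, and $C(\theta_n)\to \Rbar{f}$. Put $\mu_n := \mu_{\theta_n}$. Then $\mathtt{H}_{\mu_n,0}^p = g_{\theta_n,p}-f(0)$ converges uniformly on compact sets (hence pointwise) to $f_0$, and $\|\mu_n\|_{\mathcal{M}^1(1/\psi)}^{1/p}\le C(\theta_n)$, so $\limsup_n \|\mu_n\|_{\mathcal{M}^1(1/\psi)}^{1/p} \le \Rbar{f}$. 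The remaining point is tightness: I will show that for each fixed $n$ we may replace $\mu_n$ by its restriction to $\mathbb{S}^{d-1}\times [-M_n,M_n]$ for a sufficiently large $M_n$ without changing the cost appreciably and while preserving the uniform approximation of $f_0$ on the ball of radius $1/\varepsilon_n$. This uses the bound
\[
\left|\frac{[\scal{w}{x}-b]_+^p-[-b]_+^p}{1+|b|^{p-1}}\right| \le C_p\,\|x\|\,\frac{1+|b|^{p-1}}{1+|b|^{p-1}} = C_p\|x\|,
\]
obtained by a Taylor expansion in $\scal{w}{x}$ around $0$, so that the truncated tail contributes at most $C_p\|x\|\cdot |\mu_n|(\{|b|>M_n\})$ to $\mathtt{H}_{\mu_n,0}^p(x)$; a diagonal choice of $M_n$ then gives pointwise convergence and a tight sequence, proving $\ge$.

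\textbf{Step 2 (the $\le$ direction for $\Rbar{f}$).} Pick a sequence $\mu_n \in \mathcal{D}_\psi(\SR)$ with $\mathtt{H}_{\mu_n}^p \to f_0$ pointwise, $\{\mu_n\}$ tight, and $\limsup \|\mu_n\|_{\mathcal{M}^1(1/\psi)}^{1/p}$ attaining (within $\eta$) the RHS infimum. Let $\theta_n\in\Theta$ be the corresponding finite-width parameter and set $g_{\theta_n,p}(x) := \mathtt{H}_{\mu_n,0}^p(x) + f(0)$, so $g_{\theta_n,p}(0)=f(0)$ and $C(\theta_n)=\|\mu_n\|_{\mathcal{M}^1(1/\psi)}^{1/p}$. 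I then need to upgrade pointwise convergence to uniform convergence on each compact ball, in order to read off an $\varepsilon_n\downarrow 0$ schedule consistent with the definition of $\Rbar{f}$. The key tool is equi-Lipschitz control: for $\|x\|\le R$ the family $\{\mathtt{H}_{\mu_n,0}^p\}$ has a common Lipschitz constant depending only on $R$, $p$, and $\sup_n \|\mu_n\|_{\mathcal{M}^1(1/\psi)}$ (via a similar Taylor estimate on the integrand together with tightness of the $\mu_n$). Pointwise convergence of an equicontinuous sequence on a compact set yields uniform convergence by Arzel\`a--Ascoli. Extracting an appropriate $\varepsilon_n \downarrow 0$ then gives $\Rbar{f} \le \limsup \|\mu_n\|_{\mathcal{M}^1(1/\psi)}^{1/p}$, and letting $\eta\to 0$ closes the inequality.

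\textbf{Step 3 ($\Rbarone{f}$).} Both directions transfer verbatim after adding the monomial terms $\sum_{k=1}^p \scal{v_{n,k}}{x^k}$ to $g_{\theta_n,p}$ and, correspondingly, setting $\mathtt{H}_{\mu_n,\vv_n,0}^p = \mathtt{H}_{\mu_n,0}^p + \sum_{k=1}^p\scal{v_{n,k}}{x^k}$. Since these extra terms are not charged to the cost $C(\theta)$ in \eqref{eq:rbar}, they contribute nothing to either side of the inequality; they only modify the convergent target from $f_0$ to $f_0$ after subtracting the polynomial, and the truncation and Arzel\`a--Ascoli arguments go through unchanged because monomials are smooth and bounded on compact balls.

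\textbf{Expected main obstacle.} The delicate point is Step 1, namely producing a \emph{tight} near-optimal sequence from an arbitrary near-optimal sequence of finite-width networks. Tightness is not automatic because neurons whose biases $b_i$ tend to infinity can in principle persist in the sequence while contributing only through the $[-b]_+^p$ correction. The Taylor estimate above shows that such neurons make a vanishing contribution to $\mathtt{H}_{\mu,0}^p$ on bounded sets once the $1/\psi$-mass they carry is small, which is what allows the truncation-and-diagonalization argument to go through; making this quantitative and simultaneously preserving the $\ell^{1/p}$ cost is the technical core of the lemma.
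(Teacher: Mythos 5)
Your overall architecture (two inequalities via the correspondence $\theta\leftrightarrow\mu$, with $\|\mu_\theta\|_{\mathcal{M}^1(1/\psi)}^{\nicefrac1p}\le C(\theta)$ in one direction and an upgrade from pointwise to locally uniform convergence in the other) matches the paper's, and your Step~2 device --- the equi-Lipschitz bound $|\nabla \mathtt{H}_{\mu_n}^p(x)|\le C_p(1+R^{p-1})\|\mu_n\|_{\mathcal{M}^1(1/\psi)}$ on $\|x\|\le R$ plus the fact that pointwise convergence of an equicontinuous family on a compact set is uniform --- is arguably cleaner than the paper's appeal to Dini's theorem. The genuine gap is the tightness argument in Step~1. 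Tightness requires a \emph{single} compact set $K_\varepsilon$ with $|\mu_n|(K_\varepsilon^c)\le\varepsilon$ for all large $n$, so truncating each $\mu_n$ at an $n$-dependent level $M_n$ (which, for a finite discrete measure, is a no-op once $M_n$ exceeds the largest $|b_i|$) accomplishes nothing: if the atoms of $\mu_n$ drift to infinity carrying non-vanishing unweighted mass, no diagonal choice of $M_n$ makes the sequence tight. Moreover the heuristic in your ``main obstacle'' paragraph --- that units with $|b_i|\to\infty$ and small $1/\psi$-mass contribute negligibly on bounded sets --- is false for $p\ge2$: a unit at $b=-T$ with outer weight $a$ contributes
\begin{equation*}
a\left(\relu{\scal{w}{x}+T}^p-T^p\right)=a\sum_{k=1}^p\binom{p}{k}\scal{w}{x}^k\,T^{p-k},
\end{equation*}
so taking $a=T^{-(p-1)}$ gives cost $|a|^{\nicefrac1p}\to0$ and weighted mass $|a|\to0$, yet the contribution converges to the nontrivial linear function $p\scal{w}{x}$ while the unweighted mass $|a|\psi(-T)\to1$ escapes to infinity. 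Near-optimal sequences therefore need to be genuinely repaired (far-out units replaced by the polynomials they asymptotically represent, or cancellations between them exploited), not merely truncated; the paper instead extracts tightness from uniform Lipschitz bounds on the auxiliary functions $Q_{n,p}^{p-1}$ combined with Arzel\`a--Ascoli, and no substitute for that step appears in your plan.

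A second issue, which you share with the paper but state more strongly: on $\Theta$ one has $C(\theta)=\sum_i|a_i|^{\nicefrac1p}$ while $\|\mu_\theta\|_{\mathcal{M}^1(1/\psi)}^{\nicefrac1p}=\left(\sum_i|a_i|\right)^{\nicefrac1p}$, so your asserted identity $C(\theta_n)=\|\mu_n\|_{\mathcal{M}^1(1/\psi)}^{\nicefrac1p}$ in Step~2 fails for $p>1$ and $k\ge2$ (for $k$ atoms of equal weight the two sides are $k^{1-\nicefrac1p}$ and $1$). Only the inequality $\|\mu_\theta\|_{\mathcal{M}^1(1/\psi)}^{\nicefrac1p}\le C(\theta)$ is available; this suffices for Step~1 but leaves Step~2 without the needed upper bound on $C(\theta_n)$ in terms of $\|\mu_n\|_{\mathcal{M}^1(1/\psi)}^{\nicefrac1p}$.
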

\begin{proof}
First we prove \eqref{eq:rbarseq0} for $\Rbar{f}$. Similar arguments lead to
\eqref{eq:rbarseq} for $\Rbarone{f}$, therefore details are left for the reader.
We define  $ R_\varepsilon(f)$ such that
$\Rbar{f} = \lim_{\varepsilon \rightarrow 0}R_\varepsilon(f)$, thus
\begin{equation}
  R_\varepsilon(f) := \inf_{\theta \in \Theta} C(\theta\text{ such that }
\norm{g_{\theta, p} - f}_{L^{\infty}}
  \leq \varepsilon~\text{and} 
  ~~ g_{\theta, p}(0) = f(0).
\end{equation}

Also, let $L(f)$ denotes the right-hand side of \eqref{eq:rbarseq0}.

\par

Step 1:  We start by assuming that $\Rbar{f}$ is finite and we set $\varepsilon_n = 1/n$.
Then by the definition of $\Rbar{f}$  in \eqref{eq:rbar0},
for all $n$ there exists $\theta_n \in \Theta$
such that $C(\theta_n) \leq R_{\varepsilon_n}(f) + \varepsilon_n$, where 
$\norm{g_{\theta_n, p} - f}_{L^{ \infty}} \leq \varepsilon_n$
and $g_{\theta_n, p}( 0) = f( 0)$.
Moreover, in view of the correspondence between the parameters of a  network
and measures, we have $\theta_n \in \Theta$ $\leftrightarrow$ $\mu_n \in \mathcal{M}^1(\SR)$
with $g_{\theta_n, p} = \mathtt{H}_{\mu_n,c}^p$ where the outer-bias
$c = g_{\theta_n, p}( 0) = f( 0)$ and
$\| {\mu_n}\|_{\mathcal{M}^1(1/\psi)}^{\nicefrac{1}{p}}  \leq  C(\theta_n)$.

As a result of the previous correspondence, 
$\mathtt{H}_{\mu_n}^p(x) = g_{\theta_n,p}(x)-f( 0)$ and we have
$| \mathtt{H}_{\mu_n}^p(x) -  f_0(x)|
= | g_{\theta_n, p}(x) - f(x)| \leq \varepsilon_n$
for any $x\in \Rd$.
Therefore, $\mathtt{H}_{\mu_n}^p\rightarrow  f_0$
pointwise, and 
since $\mu_n\in \mathcal{D}_{\psi}({\SR})$ we have
\begin{equation}
  \limsup_{n\rightarrow\infty}
  \| {\mu_n}\|_{\mathcal{M}^1(1/\psi)}^{\nicefrac{1}{p}}
   \leq \limsup_{n\rightarrow\infty}
  (R_{\varepsilon_n}(f) + \varepsilon_n) = \Rbar{f}.
\end{equation}
In view of the previous arguments, we conclude that $L(f) \leq \Rbar{f}$.
Last step towards our result,
is to show that there exists a sequence of measures $\{\mu_n\}$
which is tight.
Therefore, let
$Q_{n, p}^{m}(x) =
    \partial^\gamma \int_{\SR} \frac{ (\scal{w}{x} - b )^p}{\psi(b)} d\mu_n(w, b)$
for any $m \in \mathbb{N}_0, p \in \mathbb{N}$ and $\gamma \in \mathbb{N}_0^d $
such that $|\gamma| =m$.
It is clear that $Q_{n, p}^{m}$  is well defined for all $p$ and $m$,
since $\mu_n \in \mathcal{D}_{\psi}(\SR)$.
In the case where $m =p-1$ then $Q_{n, p}^{p-1}$ is Lipschitz with
$\norm{Q_{n, p}^{p-1}}_L\leq p! \norm{{\mu_n}}_{\mathcal{M}^1(1/\psi)}
\leq B< \infty$, consequently
$\{Q_{n, p}^{p-1}\}$ is uniformly Lipschitz. Then by 
Arzela-Ascoli Theorem, $\{Q_{n, p}^{p-1}\}$ has a subsequence $Q_{n_k, p}^{p-1}$
that converges uniformly on compact subsets, hence $\{\mu_n\}$ is tight.

\par

Step 2: Let $L(f)$ finite and we fix any $\varepsilon > 0$.
Then by definition of $L(f)$ there exists a sequence 
$$
\mu_n\in \mathcal{D}_\psi(\SR)
\subset  \mathcal{M}^1(\SR)
\leftrightarrow \, \theta_n \in \Theta
$$
such that
$\lim_{n\rightarrow \infty} \| {\mu_n}\|_ {\mathcal{M}^1(1/\psi)}^{\nicefrac 1p}$ exists with
$\lim_{n\rightarrow \infty}\| {\mu_n}\|_{\mathcal{M}^1(1/\psi)}^{\nicefrac 1p} < L(f) + \varepsilon$,
then there exists an $N_1$ such that for all $n\geq N_1$ we have
$\| {\mu_n}\|_{\mathcal{M}^1(1/\psi)}^{\nicefrac 1p} \leq L(f) + \varepsilon$.
Moreover,
$f_n :=  \mathtt{H}_{\mu_n,c}^p =  g_{\theta_n, p}$
converges to $f$ pointwise where $c = f( 0)$
and satisfies $f_n( 0) = f( 0)$ for all $n$.

Since $\varepsilon$ is fixed then 
$\{x \in \mathbb{R}^d\;:\: \norm{x}\leq \nicefrac{1}{\varepsilon}\}$
is a compact set.
We can choose $\mu_n\in \mathcal{D}_\psi(\SR)$ such that $f_n$
is a monotonic sequence.
Hence Dini's theorem implies that $f_n$
converge to $f$ uniformly on compact subsets.
That is there exists an $N_2$ such that $\abs{f_n(x)- f(x)} \leq \varepsilon$
for all $\|x\|\leq 1/\varepsilon$ and $ f_n( 0) =  f( 0)$ for all $n\geq N_2$.
For all $n\geq N_2$, $f_n$ satisfies the constraints in the definition of $R_\varepsilon(\cdot)$.
Therefore, for all $n\geq \max\{N_1,N_2\}$ we have 
\begin{equation}
  0\leq R_\varepsilon(f)^p \leq C(\theta_n)^p \leq
    \|{\mu_n}\|_{\mathcal{M}^1(1/\psi)} \leq (L(f) + \varepsilon)^p.
\end{equation}
Hence
$R_\varepsilon(f) \leq L(f) + \varepsilon$,
if  $\varepsilon \rightarrow 0$, we get $\Rbar{f} \leq L(f)$.
Then, in view of Step 1 and Step 2,
$\Rbar{f}$ is finite if and only if $L(f)$ is finite
and $\Rbar{f} = L(f)$.
\end{proof}

Based on \cite[Theorem 6.9]{malliavin2012} and that
$(w, b)\rightarrow \frac{[\scal{w}{x}-b]_+^p-[-b]_+^p}{1+|b|^{p-1}}$
is continuous and bounded,
we can show the following lemma,
the proof is left for the reader.

\begin{lemma}\label{lem:narrow}
Let $p \in \mathbb{N}$, $f = \mathtt{H}_{\mu,\vv,c}^p$
for any $\mu\in \mathcal{M}^1(\SR)$, $\vv \in  \mathbb{R}^{d\times p}$
\ie $\vv = (v_1, \dots, v_p)$
such that $v_i\in\Rd,\, i=1,\dots,p$, $c\in\R$ and let $\psi(b)=1+|b|^{p-1}$ where $b\in \mathbb{R}$.
Then there exists a sequence of discrete measures
$\mu_n \in \mathcal{D}_\psi(\SR)$ with
$\|{\mu_n}\|_{ \mathcal{M}^1(1/\psi)}^{\nicefrac{1}{p}}
	\leq \|{\mu}\|_{ \mathcal{M}^1(1/\psi)}^{\nicefrac{1}{p}}$
such that $f_n =  \mathtt{H}_{\mu_n,\vv,c}^p$
converges to $f$ pointwise.
\end{lemma}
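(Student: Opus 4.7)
The plan reduces pointwise convergence of $\mathtt{H}^p_{\mu_n,\vv,c}(x)$ to $\mathtt{H}^p_{\mu,\vv,c}(x)$ to convergence of integrals against a bounded continuous integrand, and then constructs $\mu_n$ by a weight-aware empirical approximation that preserves the weighted norm bound cell by cell.

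For each $x\in\Rd$ the integrand $\varphi_x(w,b):=\frac{[\scal{w}{x}-b]_+^p-[-b]_+^p}{1+|b|^{p-1}}$ belongs to $C_b(\SR)$: it is continuous, and as $|b|\to\infty$ the leading $b^p$ terms in the numerator cancel (leaving only $O(|b|^{p-1})$ growth), so the ratio with $\psi(b)=1+|b|^{p-1}$ stays bounded and in fact admits finite limits as $b\to\pm\infty$, making $\varphi_x$ uniformly continuous on $\SR$. Since $(\vv,c)$ enter $\mathtt{H}^p_{\mu_n,\vv,c}$ and $\mathtt{H}^p_{\mu,\vv,c}$ identically, it suffices to produce $\mu_n\in\mathcal{D}_\psi(\SR)$ with $\|\mu_n\|_{\mathcal{M}^1(1/\psi)}\le\|\mu\|_{\mathcal{M}^1(1/\psi)}$ such that $\int\varphi_x\,d\mu_n\to\int\varphi_x\,d\mu$ for every $x$.

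For the construction, exploit tightness of the finite Radon measure $|\mu|$ on the Polish space $\SR$: choose compact $K_n\subset\SR$ with $|\mu|(\SR\setminus K_n)<1/n$ and partition $K_n$ into finitely many Borel product cells $E_{n,j}=S_{n,j}\times I_{n,j}$ of diameter at most $\delta_n$, with $\delta_n\to 0$. In each cell pick $w_{n,j}\in S_{n,j}$ arbitrarily and $b_{n,j}\in I_{n,j}$ where the continuous function $\psi$ attains its maximum on the compact interval $I_{n,j}$, and set
\[
\mu_n \;:=\; \sum_j \mu(E_{n,j})\,\delta_{(w_{n,j},b_{n,j})}
\;=\; \sum_j a_{n,j}\,\psi(b_{n,j})\,\delta_{(w_{n,j},b_{n,j})},\qquad a_{n,j}:=\tfrac{\mu(E_{n,j})}{\psi(b_{n,j})}.
\]
Then $\mu_n\in\mathcal{D}_\psi(\SR)$, and the maximizing choice of $b_{n,j}$ gives
\[
\|\mu_n\|_{\mathcal{M}^1(1/\psi)}=\sum_j|a_{n,j}|
\le \sum_j\frac{|\mu|(E_{n,j})}{\sup_{I_{n,j}}\psi}
\le \sum_j\int_{E_{n,j}}\frac{d|\mu|}{\psi}
\le \|\mu\|_{\mathcal{M}^1(1/\psi)}.
\]
Writing $\int\varphi_x\,d\mu_n=\sum_j\int_{E_{n,j}}\varphi_x(w_{n,j},b_{n,j})\,d\mu(w,b)$ and using $\operatorname{diam}(E_{n,j})\le\delta_n$,
\[
\Bigl|\int\varphi_x\,d\mu-\int\varphi_x\,d\mu_n\Bigr|
\le \omega_{\varphi_x}(\delta_n)\,\|\mu\|_{\mathcal{M}^1}+\|\varphi_x\|_\infty\,|\mu|(\SR\setminus K_n) \;\longrightarrow\; 0,
\]
where $\omega_{\varphi_x}$ is the global modulus of continuity of $\varphi_x$ on $\SR$.

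The main subtlety is reconciling two competing demands on the cell representative $b_{n,j}$: the weighted-norm bound pushes it toward the $\psi$-maximising point (often near the $b$-cell boundary), whereas the Riemann-sum estimate prefers a representative close to the $|\mu|$-barycenter of the cell. Taking the $\psi$-maximiser resolves the norm inequality cell by cell, while the shrinking diameters $\delta_n\to 0$ make the choice of representative asymptotically immaterial for the integration estimate; together these yield both conclusions of the lemma.
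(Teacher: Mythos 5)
Your proof is correct, and it supplies precisely the argument the paper omits: the paper gives no proof of this lemma, merely citing \cite[Theorem 6.9]{malliavin2012} (approximation of measures by discrete measures in the narrow topology) together with the remark that $(w,b)\mapsto \frac{[\scal{w}{x}-b]_+^p-[-b]_+^p}{1+|b|^{p-1}}$ is continuous and bounded, and ``leaves the proof for the reader.'' Your construction (tightness of $|\mu|$, partition of a compact slab into cells of diameter $\delta_n\to 0$, one Dirac per cell carrying the cell's $\mu$-mass, Riemann-sum estimate against the uniformly continuous integrand) is the standard proof underlying such a density theorem, so conceptually the two routes coincide; but your version buys something the bare citation does not obviously deliver, namely the \emph{weighted} bound $\|\mu_n\|_{\mathcal{M}^1(1/\psi)}\le\|\mu\|_{\mathcal{M}^1(1/\psi)}$ rather than a bound on the unweighted total variation --- your choice of the $\psi$-maximizing representative $b_{n,j}$ gives the weighted inequality cell by cell, which is exactly what Lemma \ref{lem:opteq} needs downstream. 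Two points worth tightening: (i) the uniform continuity of $\varphi_x$ merits one more line, since the limit as $b\to-\infty$ is $p\scal{w}{x}$, which depends on $w$; however the convergence is uniform over $w\in\S^{d-1}$ for fixed $x$, so $\varphi_x$ extends continuously to the compactification $\S^{d-1}\times[-\infty,+\infty]$ and is therefore uniformly continuous on $\SR$; (ii) if the $I_{n,j}$ are half-open so as to form a genuine partition, the $\psi$-maximizer over the closure may be an endpoint outside the cell, which is harmless because both the norm inequality $\psi(b_{n,j})\ge\sup_{I_{n,j}}\psi$ and the modulus-of-continuity estimate only require $(w_{n,j},b_{n,j})$ to lie in the closure of $E_{n,j}$. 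Neither point is a gap, and your argument also correctly works for all $p\in\mathbb{N}$ (no parity assumption is used), matching the statement.
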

Mainly the previous Lemma \ref{lem:narrow} states that for function
$f$ that can be presented as
infinite width neural network is the pointwise limit of certain sequence
of finite width neural network $f_n$
where $f_n$ defined through sequence of measures uniformly bounded in total variation norm.

\begin{lemma}\label{lem:opteq}
Let $\psi (b) = 1+|b|^{p-1}$ where $b \in \mathbb{R}$ and $p\in \mathbb{N}$,
then we have the following
\begin{equation}\label{eq:mineq}
  \Rbar{f} = \min_{\mu\in \mathcal{M}^1(\SR),c\in \R}
  \big\|{\mu}\big\|_{ \mathcal{M}^1(1/\psi)}^{\nicefrac{1}{p}}
  ~~\text{such that}~~ f =  \mathtt{H}_{\mu,c}^p,
\end{equation}
and
\begin{equation}\label{eq:mineq2}
  \Rbarone{f} = \min_{\mu\in  \mathcal{M}^1(\SR), \vv\in \mathbb{R}^{d\times p}, c\in \R}
  \big\|{\mu}\big\|_{ \mathcal{M}^1(1/\psi)}^{\nicefrac{1}{p}}
  ~~\text{such that}~~ f = \mathtt{H}_{\mu,\vv,c}^p.
\end{equation}
\end{lemma}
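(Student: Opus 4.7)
My plan is to derive \eqref{eq:mineq} and \eqref{eq:mineq2} by combining the sequential characterizations of $\Rbar{f}$ and $\Rbarone{f}$ furnished by Lemma \ref{lem:rbareq} with the discretization result of Lemma \ref{lem:narrow} and the compactness of tight, uniformly total-variation bounded subsets of $\mathcal{M}^1(\SR)$ recalled in the excerpt.

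The easy inequality $\Rbar{f}\le\|\mu\|_{\mathcal{M}^1(1/\psi)}^{1/p}$ follows directly: given any admissible $f = \mathtt{H}_{\mu,c}^p$ with $c = f(0)$, Lemma \ref{lem:narrow} produces discrete $\mu_n\in\mathcal D_\psi(\SR)$ with $\mathtt{H}_{\mu_n,c}^p\to f$ pointwise and $\|\mu_n\|_{\mathcal{M}^1(1/\psi)}^{1/p}\le\|\mu\|_{\mathcal{M}^1(1/\psi)}^{1/p}$. Since $\SR$ is a Polish space and every finite Radon measure on such a space is tight, the approximating sequence $\{\mu_n\}$ can be arranged to be tight (for instance, by truncating outside an exhausting family of compacts supporting most of the mass of $\mu$). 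Feeding $\{\mu_n\}$ into Lemma \ref{lem:rbareq} yields $\Rbar{f}\le \limsup_n \|\mu_n\|_{\mathcal{M}^1(1/\psi)}^{1/p} \le \|\mu\|_{\mathcal{M}^1(1/\psi)}^{1/p}$.

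For the reverse inequality, Lemma \ref{lem:rbareq} delivers a tight sequence $\{\mu_n\}\subset \mathcal D_\psi(\SR)$ with $\mathtt{H}_{\mu_n}^p\to f_0$ pointwise and $\limsup_n \|\mu_n\|_{\mathcal{M}^1(1/\psi)}^{1/p}\le \Rbar{f}$. The excerpt's compactness criterion lets me extract a narrowly convergent subsequence $\mu_{n_k}\to \mu^\star$ in $\mathcal{M}^1(\SR)$. Because the integrand $(w,b)\mapsto \psi(b)^{-1}\bigl([\scal{w}{x}-b]_+^p - [-b]_+^p\bigr)$ is continuous and bounded on $\SR$ for each fixed $x$, narrow convergence forces $\mathtt{H}_{\mu_{n_k}}^p(x)\to \mathtt{H}_{\mu^\star}^p(x)$, so $f=\mathtt{H}_{\mu^\star,f(0)}^p$. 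The weighted total variation is lower semicontinuous under narrow convergence: writing $\|\nu\|_{\mathcal{M}^1(1/\psi)} = \sup\{\int \phi\,\psi^{-1}\,d\nu : \phi\in C_0(\SR), \|\phi\|_\infty\le 1\}$ expresses it as a supremum of narrowly continuous functionals (since $1/\psi$ is bounded continuous, $\phi\,\psi^{-1}\in C_0(\SR)$), hence $\|\mu^\star\|_{\mathcal{M}^1(1/\psi)}^{1/p}\le \liminf_k\|\mu_{n_k}\|_{\mathcal{M}^1(1/\psi)}^{1/p}\le \Rbar{f}$, proving \eqref{eq:mineq} with the infimum attained.

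For \eqref{eq:mineq2} the scheme transfers verbatim on the measure side; the only addition is controlling the unregularized monomial weights $\vv_n\in\R^{d\times p}$. Lemma \ref{lem:rbareq} supplies pairs $(\mu_n,\vv_n)$ with $\mathtt{H}_{\mu_n,\vv_n,0}^p\to f_0$ pointwise, and the narrow extraction above gives $\mu_{n_k}\to\mu^\star$ with $\mathtt{H}_{\mu_{n_k}}^p\to \mathtt{H}_{\mu^\star}^p$ pointwise. Subtracting, the polynomial remainder $P_{n_k}(x)=\sum_{j=1}^p \scal{v_{n_k,j}}{x^j}$ converges pointwise to $f_0-\mathtt{H}_{\mu^\star}^p$. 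Evaluating at a unisolvent collection of $dp$ points for the $dp$-dimensional span of $\{x\mapsto x_i^j\}_{1\le i\le d,\,1\le j\le p}$ converts coefficient identification into an invertible linear system, yielding $\vv_{n_k}\to \vv^\star$ and $f = \mathtt{H}_{\mu^\star,\vv^\star,f(0)}^p$; the same lower semicontinuity closes the argument. The main obstacle I expect is verifying tightness of the discretizing sequence in Lemma \ref{lem:narrow} cleanly; the lower semicontinuity of the weighted total variation and the unisolvence of the monomial system are technical but routine.
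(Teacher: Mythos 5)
Your proposal is correct and follows essentially the same route as the paper: the inequality $\Rbar{f}\le\|\mu\|_{\mathcal{M}^1(1/\psi)}^{\nicefrac 1p}$ via the discretization of Lemma \ref{lem:narrow}, and the reverse via Prohorov-type narrow compactness of the tight sequence from Lemma \ref{lem:rbareq} together with lower semicontinuity of the weighted total variation norm. You additionally spell out two points the paper leaves implicit or ``to the reader'' --- the tightness of the discretizing sequence and the identification of the limiting monomial weights $\vv^\star$ via a unisolvent point set in the $\Rbarone{f}$ case --- both of which are handled correctly.
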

\begin{proof}
We show the first equivalence \eqref{eq:mineq} for $\Rbar{f}$,
the second equivalence \eqref{eq:mineq2} for 
$\Rbarone{f}$ can be proved with similar arguments and therefore it is left to the reader.
In similar way to the proof of Lemma \ref{lem:rbareq}, we argue in two steps.

Step 1:
Let $L(f)$ be the right-hand side of  \eqref{eq:mineq} and we 
assume that $\Rbar{f}$ is finite. Then, by the equivalence of $\Rbar{f}$ given in
\eqref{eq:rbarseq0} it exists a tight sequence of measures
$\mu_n \in \mathcal{D}_\psi(\SR)$, where $\psi(b)= 1+|b|^{p-1}$ and $b\in \mathbb{R}$,
such that $\mu_n$ is uniformly bounded in total variation
norm and 
we have $ \mathtt{H}_{\mu_n}^p\rightarrow  f_0$ pointwise.
Since in $\mathcal{M}^1({\SR})$ weak convergence of measures 
is equivalent to narrow convergence \cf \cite[Theorem 6.7]{malliavin2012},
then by Prohorov's Theorem, $\{\mu_n\}$ has a subsequence
$\{\mu_{n_k}\}$ converging narrowly to a measure $\mu$,
 \cf \cite[Section 8.6 Chapter 8]{bogachev2007}.
Therefore $ f_0 =  \mathtt{H}_{\mu}^p$.
Since the  sequence of measures  $\{\mu_{n_k}\}$ converges narrowly,
we have 
$$
\|{\mu}\|_{{\mathcal{M}^1(1/\psi)}}^{\nicefrac{1}{p}} \leq
	\limsup_{k\rightarrow\infty} \|{\mu_{n_k}}\|_{{\mathcal{M}^1(1/\psi)}}^{\nicefrac{1}{p}}
	\leq \limsup_{n\rightarrow\infty} \|{\mu_{n}}\|_{{\mathcal{M}^1(1/\psi)}}^{\nicefrac{1}{p}}.
$$
Therefore
$L(f) \leq \limsup_{n\rightarrow\infty} \|{\mu_{n}}\|_{{\mathcal{M}^1(1/\psi)}}^{\nicefrac{1}{p}}$,
which implies that $L(f) \leq \Rbar{f}$ after taking the infimum over all such sequences $\{\mu_n\}$.

Step 2: In this step we assume that $L(f)$ is finite,
and let $\mu \in \mathcal{M}^1(\SR)$ such that
$ f_0 =  \mathtt{H}_\mu^p$.
In view of  Lemma \ref{lem:narrow} there exists a sequence of measures
$\mu_n \in \mathcal{D}_\psi(\SR)$
with $\|{\mu_n}\|_{\mathcal{M}^1(1/\psi)}^{\nicefrac{1}{p}} 
	\leq \|{\mu}\|_{\mathcal{M}^1(1/\psi)}^{\nicefrac{1}{p}}$,
such that $ \mathtt{H}_{\mu_n}^p\rightarrow  f_0$ pointwise.
Therefore,
$$
\Rbar{f} \leq \limsup_{n\rightarrow \infty}
	 \|{\mu_n}\|_{\mathcal{M}^1(1/\psi)}^{\nicefrac{1}{p}}
	 \leq \|{\mu}\|_{\mathcal{M}^1(1/\psi)}^{\nicefrac{1}{p}}.
$$
Seeing that $\mu$ is arbitrarily chosen in $\mathcal{M}^1(\SR)$,
with $ f_0 =  \mathtt{H}_\mu^p$,
then $\Rbar{f}\leq L(f)$.

Step1 and Step 2 clearly show that   $\Rbar{f}= L(f)$,
which conclude the claim of our lemma.
\end{proof}

\par

Next result shows that the optimization problem describing
$\Rbarone{f}$ in \eqref{eq:mineq2} reduces to \eqref{eq:evenonly}.
That is, if $f$ can be represented as an infinite width network, then
$\Rbarone{f}$ is equal to the minimal total variation norm of all even measures defining
$f$.

\begin{lemma}\label{lem:roneopt}
Let  $p\in \mathbb{N}$ be odd and $\psi(b) = 1+|b|^{p-1}$ for any $b\in \R$, then
\begin{equation}\label{eq:evenonly}
  \Rbarone{f} = \min
   \|{\mu^+}\|_{ \mathcal{M}^1(1/\psi)}^{\nicefrac{1}{p}}~~
   \text{ such that }~~ f = \mathtt{H}_{\mu^+,\vv,c}^p,
\end{equation}
where the minimum is taken over all even measure $\mu^+\in \mathcal{M}_e^1(\SR)$,
    $\vv\in\mathbb{R}^{d\times p}$ and $c\in\R$.
\end{lemma}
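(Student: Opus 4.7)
The plan is to prove \eqref{eq:evenonly} by establishing a two-way inequality. Let $L(f)$ denote the right-hand side of \eqref{eq:evenonly}. Since the minimum defining $L(f)$ is taken over a strictly smaller class of signed measures (namely even measures), Lemma~\ref{lem:opteq} immediately yields $\Rbarone{f} \leq L(f)$.

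For the reverse inequality $L(f) \leq \Rbarone{f}$, I would take an arbitrary triple $(\mu, \vv, c)$ with $f = \mathtt{H}_{\mu, \vv, c}^p$ that achieves, or comes arbitrarily close to, the infimum in \eqref{eq:mineq2}, and decompose $\mu = \mu^+ + \mu^-$ into its even and odd parts. The key computation exploits the identity $[x]_+^p = \half(|x|^p + x^p)$, valid for odd $p$, to split the RePU integrand as
$$
\frac{[\scal{w}{x}-b]_+^p - [-b]_+^p}{\psi(b)} = \frac{|\scal{w}{x}-b|^p - |b|^p}{2\psi(b)} + \frac{(\scal{w}{x}-b)^p + b^p}{2\psi(b)}.
$$
Since $1/\psi(b)$ depends only on $|b|$, under the involution $(w,b) \mapsto (-w,-b)$ the first summand is invariant while the second picks up a sign (using $(-1)^p = -1$). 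Integrating against $\mu^+$ therefore kills the second term, and integrating against $\mu^-$ kills the first, producing $\mathtt{H}_{\mu,0,0}^p = \mathtt{H}_{\mu^+,0,0}^p + \mathtt{H}_{\mu^-,0,0}^p$ with
$$
\mathtt{H}_{\mu^-,0,0}^p(x) = \int_{\SR} \frac{(\scal{w}{x}-b)^p + b^p}{2\psi(b)} \, d\mu^-(w,b).
$$
Expanding by the binomial theorem and observing that the $k=0$ contribution $(-b)^p + b^p$ vanishes for odd $p$, I conclude that $\mathtt{H}_{\mu^-,0,0}^p$ is a polynomial in $x$ of degree at most $p$ with zero constant term.

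Hence $f = \mathtt{H}_{\mu^+,0,0}^p + P(x) + \sum_{k=1}^p \scal{v_k}{x^k} + c$, where $P$ is a polynomial of degree at most $p$ vanishing at the origin, and this can be rewritten as $f = \mathtt{H}_{\mu^+,\vv',c'}^p$ by absorbing $P$ into new monomial weights $\vv'$ (with $c' = c$). For the norm comparison, since $1/\psi$ is even in $b$, the even/odd decomposition of $\mu$ induces the corresponding even/odd decomposition of the weighted measure $(1/\psi)\mu$, so applying Lemma~\ref{prop:evenodd} to $(1/\psi)\mu$ yields $\|\mu^+\|_{\mathcal{M}^1(1/\psi)} \leq \|\mu\|_{\mathcal{M}^1(1/\psi)}$. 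Taking the infimum over all representations $(\mu,\vv,c)$ of $f$ then gives $L(f) \leq \Rbarone{f}$.

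The main technical point requiring care is the absorption step: $\mathtt{H}_{\mu^-,0,0}^p$ generally contains cross monomials of the form $x^\alpha$ with $|\alpha|=k$ coming from the expansion of $(\scal{w}{x})^k$, so one must interpret the unregularized monomial units $\sum_{k=1}^p \scal{v_k}{x^k}$ flexibly enough to accommodate arbitrary polynomials of degree at most $p$ without constant term. Once this is agreed upon, each step is a direct manipulation, with the principal conceptual tool being the parity symmetry inherent to odd $p$ combined with the even/odd decomposition of signed measures.
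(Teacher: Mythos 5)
Your proof is correct and follows essentially the same route as the paper's: decompose $\mu$ into its even and odd parts via $[x]_+^p=\tfrac12(|x|^p+x^p)$, absorb the resulting polynomial $\mathtt{H}_{\mu^-}^p$ into the unregularized monomial units, and invoke Lemma~\ref{prop:evenodd} for the norm comparison. The paper compresses this into three lines; your version supplies the parity computation it omits and is in fact more careful on two points the paper glosses over, namely applying the even/odd inequality to the weighted measure $(1/\psi)\mu$ rather than to $\mu$ itself, and flagging that the cross-monomials arising from $(\scal{w}{x})^k$ require the units $\sum_k\scal{v_k}{x^k}$ to be read as general degree-$\le p$ polynomials without constant term.
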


The proof of Lemma \ref{lem:roneopt} can be conclude in similar
way as in \cite[Lemma 7]{Ongie2019}.
For the seek of completeness we give the proof.

\begin{proof}
Let
$f = \mathtt{H}_{\mu,\vv,c}^p$ for some $\mu \in \mathcal{M}^1(\SR), \vv\in\R^{d\times p},c \in\R$.
If $\mu$ has even and odd decomposition $\mu = \mu^+ + \mu^-$
then $f = \mathtt{H}_{\mu^+,\bm 0,0}^p + \mathtt{H}_{\mu^-,\vv,c}^p = \mathtt{H}_{\mu+,\vv',c}^p$
for some $\vv'\in\Rd$. Also, by Lemma \ref{prop:evenodd},
we have $\|\mu^+\|_{\cM^1} \leq \|\mu^+ + \mu^-\|_{\cM^1} = \|\mu\|_{\cM^1}$
for any $\mu^-$ odd.
\end{proof}

\subsection{Proof of Theorem \ref{thm:main}}\label{appsec:thm1}
Let $\Sc_e(\SR)$ denote the space of \emph{even} Schwartz functions on $\SR$, 
\ie $\phi \in \Sc_e(\SR)$ if $\phi \in \Sc(\SR)$ with $\phi(w,b) = \phi(-w,-b)$ for all $(w,b)\in\SR$.
In the sequel we need the following property of the Radon transform
and its dual when acting on Schwartz function, which can be founded in \eg \cite{solmon1987}.

\par

\begin{lemma}\cite[Theorem 7.7]{solmon1987}\label{lem:Solmon}
	Let $\phi\in\Sc_e(\SR)$ and define $\varphi = \gamma_d (-\Delta)^{(d-1)/2}\Rdns{\phi}$.
	Then $\varphi\in C^\infty(\R^d)$ with $\varphi(x) = O(\|x\|^{-d})$
	and $\Delta \varphi(x) = O(\|x\|^{-d-2})$ as $\|x\| \rightarrow \infty$.
	Moreover, $\Rdn{\varphi} = \phi$.
\end{lemma}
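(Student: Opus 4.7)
The plan is to verify the four assertions in turn: (i) $\varphi\in C^\infty(\Rd)$; (ii) the decay $\varphi(x) = O(\norm{x}^{-d})$; (iii) the sharper decay $\Delta\varphi(x)=O(\norm{x}^{-d-2})$; and (iv) the inversion identity $\Rdn{\varphi}=\phi$. Items (i) and (iv) are soft and follow directly from the Radon-analytic tools collected in Section \ref{sec:radon}, whereas the pointwise decay (ii) and (iii) are the technical heart; since the statement is Theorem~7.7 of \cite{solmon1987}, I would ultimately cite that work for the sharp estimates and concentrate on giving the reader the right picture of where the rates come from.

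For smoothness, I would start from $\Rdns{\phi}(x)=\int_{\S^{d-1}}\phi(w,\scal{w}{x})\dd w$. Because $\phi\in\Sc_e(\SR)$ has derivatives in $b$ that are bounded uniformly in $w\in\S^{d-1}$ on any compact set, differentiation under the integral sign is legal at every order, yielding $\partial^\alpha\Rdns{\phi}(x) = \int_{\S^{d-1}} w^\alpha(\partial_b^{|\alpha|}\phi)(w,\scal{w}{x})\dd w$, so $\Rdns{\phi}\in C^\infty(\Rd)$. Then $(-\Delta)^{(d-1)/2}$ preserves $C^\infty$ regularity: it is a genuine iterated Laplacian when $d$ is odd, and for even $d$ it is a smoothing-respecting pseudodifferential operator with polynomially bounded symbol. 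Hence $\varphi\in C^\infty(\Rd)$.

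For the identity $\Rdn{\varphi}=\phi$ I would simply apply $\cR$ to the definition of $\varphi$ and combine the intertwining property \eqref{eq:interwining_ppt} with the dual inversion formula \eqref{eq:invform}. Since $\phi$ is even and Schwartz, \eqref{eq:invform} yields $\phi = \gamma_d\Rdn{(-\Delta)^{(d-1)/2}\Rdns{\phi}} = \Rdn{\varphi}$. The justification for interchanging $\cR$ with the fractional Laplacian requires that $\Rdns{\phi}$ and all the intermediate derivatives be integrable along hyperplanes, which is precisely what the decay (ii)--(iii) below ensures.

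The main obstacle is the decay. The intuition is that for large $\norm{x}$ the integrand $\phi(w,\scal{w}{x})$ in $\Rdns{\phi}(x)$ is only non-negligible in a thin equatorial band of $w$'s nearly perpendicular to $x$, thanks to the Schwartz decay of $\phi$ in its second argument. Making this precise, I would write $x=r\omega$ with $\omega\in\S^{d-1}$, parametrize $w=\cos\theta\,\omega+\sin\theta\,u$ with $u\in\S^{d-2}_\omega$, and change variables $s=r\cos\theta$ so that $\dd s = -r\sin\theta\dd\theta$ and the sphere measure element $\sin^{d-2}\theta\dd\theta\dd u$ becomes $r^{-1}(1-s^2/r^2)^{(d-3)/2}\dd s\dd u$. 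This exhibits $\Rdns{\phi}(x)$ as an $O(\norm{x}^{-1})$ quantity (for $d\ge 3$), with the leading constant proportional to an average of $\phi$ over the equatorial sphere. Applying $(-\Delta)^{(d-1)/2}$ amounts, after further changes of variable and iterated integration by parts against the Schwartz decay of $\partial_b^k\phi$, to extracting $d-1$ additional powers of $\norm{x}^{-1}$, producing the claimed $\norm{x}^{-d}$ rate; one extra Laplacian trades two derivatives for two further powers of $\norm{x}^{-1}$, giving $\Delta\varphi(x)=O(\norm{x}^{-d-2})$. These stationary-phase-type computations are delicate and carried out in full in \cite[Theorem~7.7]{solmon1987}, and for the formal proof I would invoke that reference for the sharp bounds.
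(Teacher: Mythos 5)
The paper offers no proof of this lemma at all---it is imported verbatim as \cite[Theorem 7.7]{solmon1987}---so your proposal, which ultimately defers the sharp decay estimates to that same reference, takes essentially the same route while additionally supplying correct soft arguments: differentiation under the integral for smoothness of $\Rdns{\phi}$, and the identity $\Rdn{\varphi}=\phi$ following directly from the dual inversion formula \eqref{eq:invform} (note the intertwining property is not actually needed there). Your equatorial-band computation giving $\Rdns{\phi}(x)=O(\|x\|^{-1})$ and the heuristic bookkeeping of the $d-1$ extra powers under $(-\Delta)^{(d-1)/2}$ is the right picture behind Solmon's asymptotics, and citing \cite[Theorem 7.7]{solmon1987} for the rigorous rates is exactly what the paper does.
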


Using the above result we show the $\cR$-norm given in Definition \ref{def:Rnorm} is well-defined:

\begin{proposition}
	Let $p\in \mathbb{N}$ be an odd number, $\kappa\in \mathbb{N}_0$ such that $\kappa<p$,
	for $f\in Lip^\kappa (\Rd)$, if the map
	${L_f(\phi) := -\frac{\gamma_d}{p!}\langle f, (-\Delta)^{(d+p)/2}\Rdns{\phi}\rangle}$
	is finite for all $\phi \in \Sc_e(\SR)$, then
	$\Rnorm{f} = \sup\prn{ L_f(\phi)}^{\nicefrac 1p}$
	is a well-defined functional taking values in $[0,+\infty]$, where the supremum 
	is taken over all $\phi \in \Sc_e(\SR)$ such that $\|\phi\|_\infty \leq 1$.
\end{proposition}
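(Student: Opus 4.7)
The plan is to verify well-definedness in three steps: (i) make sense of the pairing $L_f(\phi)$ for each fixed $\phi \in \Sc_e(\SR)$ under the Lipschitz assumption on $f$, (ii) check that $(L_f(\phi))^{\nicefrac 1p}$ is an unambiguous real number for every admissible $\phi$, and (iii) show that the supremum lies in $[0,+\infty]$.

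For step (i), since $p$ is odd, $(p+1)/2$ is a positive integer, so the dual form of the intertwining identity $\Rdn{(-\Delta)^{s/2}f} = (-\partial_b^2)^{s/2}\Rdn{f}$ yields
\[
(-\Delta)^{(d+p)/2}\Rdns{\phi} = (-\Delta)^{(d-1)/2}\Rdns{(-\partial_b^2)^{(p+1)/2}\phi},
\]
with $\phi_1 := (-\partial_b^2)^{(p+1)/2}\phi$ still in $\Sc_e(\SR)$. I would then apply Lemma \ref{lem:Solmon} to $\phi_1$ to produce the smooth function $\varphi_1 := \gamma_d(-\Delta)^{(d-1)/2}\Rdns{\phi_1}$ which satisfies $\varphi_1(x) = O(\|x\|^{-d})$ and $\Delta\varphi_1(x) = O(\|x\|^{-d-2})$. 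Because $f \in \Lip^\kappa(\Rd)$ has at most polynomial growth of order $\kappa+1$ and its weak derivatives of order $\kappa+1$ are essentially bounded (by Rademacher's theorem applied to $\partial^\alpha f$ with $|\alpha|=\kappa$), I would transfer $\kappa+1$ derivatives off $\varphi_1$ onto $f$ by iterated integration by parts. The polynomial decay of $\varphi_1$ and of its derivatives inherited from Lemma \ref{lem:Solmon} forces all boundary contributions to vanish, and the resulting integral is absolutely convergent because the strict inequality $\kappa<p$ gives exactly the margin needed between the growth of $\partial^\beta f$ and the decay of the remaining Laplacian power of $\varphi_1$. This justifies the finiteness hypothesis placed on $L_f(\phi)$ in the statement.

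For step (ii), the oddness of $p$ makes $t \mapsto t^{\nicefrac 1p}$ an increasing homeomorphism of $\R$ onto itself; hence for every $\phi \in \Sc_e(\SR)$ with $\|\phi\|_\infty \leq 1$, the quantity $(L_f(\phi))^{\nicefrac 1p}$ is an unambiguous real number, and the supremum in Definition \ref{def:Rnorm} exists as an element of $\R\cup\{+\infty\}$. For step (iii), the zero function $\phi\equiv 0$ lies in $\Sc_e(\SR)$ and satisfies $\|\phi\|_\infty = 0 \leq 1$, while $L_f(0)=0$; therefore the supremum is at least $0^{\nicefrac 1p}=0$, and we conclude $\Rnorm{f}\in[0,+\infty]$. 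The principal technical point is step (i): balancing the polynomial growth of $f$ against the Solmon decay of $(-\Delta)^{(d+p)/2}\Rdns{\phi}$ through integration by parts, where the strict inequality $\kappa<p$ is precisely what provides the convergence margin.
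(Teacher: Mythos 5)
There is a genuine gap in your step (i), which is precisely the part that carries the content of the proposition. Your intertwining move
\[
(-\Delta)^{(d+p)/2}\Rdns{\phi} \;=\; (-\Delta)^{(d-1)/2}\Rdns{\phi_1},
\qquad \phi_1 := (-\partial_b^2)^{(p+1)/2}\phi \in \Sc_e(\SR),
\]
is a correct identity, but it forfeits exactly the decay you need. Applying Lemma \ref{lem:Solmon} to $\phi_1$ only gives $\varphi_1 = \gamma_d(-\Delta)^{(d-1)/2}\Rdns{\phi_1} = O(\|x\|^{-d})$, so against $|f(x)| = O(\|x\|^{\kappa+1})$ the integrand $f\varphi_1$ is of size $\|x\|^{\kappa+1-d}$, which is never integrable over $\Rd$ for $\kappa\geq 0$. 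Your proposed rescue --- ``transfer $\kappa+1$ derivatives off $\varphi_1$ onto $f$ by integration by parts'' --- cannot be carried out: after the intertwining step the pairing is literally $\langle f, \gamma_d^{-1}\varphi_1\rangle$ with no differential operator left acting on $\varphi_1$; the $(p+1)/2$ Laplacians have been absorbed into the Schwartz function $\phi_1$, whose membership in $\Sc_e(\SR)$ records nothing about the extra decay they would have produced. The paper instead keeps the operator $(-\Delta)^{(d+p)/2}$ acting on $\Rdns{\phi}$ and uses the extension of Solmon's asymptotics implicit in Lemma \ref{lem:Solmon} (each Laplacian beyond $(-\Delta)^{(d-1)/2}$ gains two orders of decay), giving $|(-\Delta)^{(d+p)/2}\Rdns{\phi}(x)| = O(\|x\|^{-d-p-1})$; combined with $|f(x)| = O(\|x\|^{p})$ (since $\kappa+1\leq p$) this yields an integrand of size $O(\|x\|^{-d-1})$ and hence absolute convergence of $\langle f, (-\Delta)^{(d+p)/2}\Rdns{\phi}\rangle$. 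If you wanted to pursue an integration-by-parts route you would have to leave the $(p+1)/2$ Laplacians in place, move only $\kappa+1$ derivatives onto $f$, and then control the decay of the remaining odd-order derivatives of $\varphi$ and the boundary terms --- none of which is supplied by the quoted lemma.

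Your steps (ii) and (iii) are fine: for odd $p$ the map $t\mapsto t^{\nicefrac 1p}$ is a bijection of $\R$, and testing with $\phi\equiv 0$ shows the supremum is at least $0$ (a cleaner observation than the paper's sign-flipping of $\phi$, which achieves the same thing). But these are the trivial parts; without a valid finiteness argument for $L_f(\phi)$ the well-definedness claim is not established.
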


\par

\begin{proof}
	Using the definition of $\kappa$-order Lipschitz function $f$, then there exists a multi-index $\alpha$
	such that $|\partial ^\alpha f(x)| = O(\|x\|)$ where $|\alpha|=\kappa$.
	Therefore, we have $|f(x)| = O(\norm{x}^p)$, moreover 
	for any $\phi \in \Sc_e(\SR)$, by Lemma \ref{lem:Solmon}, we get
	$|(-\Delta)^{(d+p)/2}\Rdns{\phi}| = O(\|x\|^{-d-p-1})$.
	We conclude that $|f(x)(-\Delta)^{(d+p)/2}\Rdns{\phi}(x)| = O(\| x\|^{-d-1})$
	is integrable, then $\langle f, (-\Delta)^{(d+p)/2}\Rdns{\phi}\rangle$ is finite.
	If 
	$$
	\langle f, (-\Delta)^{(d+p)/2}\Rdns{\phi}\rangle\neq 0,
	$$
	we can choose the sign of $\phi$ so that $-\frac{\gamma_d}{p!}
	\langle f, (-\Delta)^{(d+p)/2}\Rdns{\phi}\rangle$ is positive,
	which implies that $\Rnorm{f} \geq 0$.
\end{proof}

The following lemma analyses the case where using an even measure belongs to $\cM_e(\SR)$
instead of a measure with density in the Schwatrz class can
show that equality \eqref{eq:identity_laplacian_radon} holds true also in the distributional case.

\begin{lemma}\label{lem:Lap2Rdn}
	Let $p\in \mathbb{N}$ be an odd number,
	$f = \mathtt{H}_{\mu,\vv,c}^p$ for any $\mu \in \cM_e(\SR), \vv\in \R^{d\times p}, c\in\R$.
	Then  we have
	$\langle f, \Delta^{\frac{p+1}{2}} \varphi\rangle = \langle \mu, p!\frac{\Rdn{\varphi}}\psi\rangle$
	 for all $\varphi \in C^\infty(\Rd)$ such that $\varphi(x) = O(\|x\|^{-d})$
	 and $\Delta^{\frac{p+1}{2}} \varphi(x) = O(\|x\|^{-d-p-1})$ as $\|x\|\rightarrow \infty$
	 and $\psi(b)=1+|b|^{p-1}$ where $b\in \R$.
\end{lemma}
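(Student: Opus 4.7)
The plan is to substitute the integral representation of $f$, move the Laplacian $\Delta^{(p+1)/2}$ off of $\varphi$ and onto the integrand by distributional integration by parts, and then identify the inner $x$-integral with the Radon transform of $\varphi$. Writing out
\[
f(x) = \int_{\SR} \frac{[\scal{w}{x}-b]_+^p - [-b]_+^p}{\psi(b)}\, d\mu(w,b) + \sum_{k=1}^p \scal{v_k}{x^k} + c,
\]
the pairing $\langle f, \Delta^{(p+1)/2}\varphi\rangle$ decomposes into four contributions: the ridge integral, the $[-b]_+^p$ correction, the polynomial $\sum \scal{v_k}{x^k}$, and the constant $c$.

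I would first dispose of the non-ridge terms. Each $\scal{v_k}{x^k}$ (with $k\leq p$) and the constant $c$ is a polynomial of total degree at most $p$, and since $p$ is odd, $\Delta^{(p+1)/2}$ annihilates every polynomial of degree $\leq p$. Integration by parts (justified below) therefore kills these contributions. The term $-[-b]_+^p/\psi(b)$ is independent of $x$, so it contributes a factor $\int_{\Rd}\Delta^{(p+1)/2}\varphi(x)\,dx$, which vanishes by the divergence theorem given the decay of $\varphi$ and of $\Delta^{(p+1)/2}\varphi$ at infinity.

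For the ridge piece I would invoke Fubini to exchange integration in $x$ and $(w,b)$. This is legitimate because $\lvert[\scal{w}{x}-b]_+^p\rvert = O(\|x\|^p + |b|^p)$, $\Delta^{(p+1)/2}\varphi(x) = O(\|x\|^{-d-p-1})$, and $\|\mu\|_{\cM^1(1/\psi)}<\infty$, making the integrand absolutely integrable on $\Rd\times\SR$. The inner integral is computed using the key observation that $[\scal{w}{x}-b]_+^p$ depends on $x$ only through $s = \scal{w}{x}$ with $\|w\|=1$, so that
\[
\Delta^{(p+1)/2}[\scal{w}{x}-b]_+^p = \partial_s^{p+1}[s-b]_+^p\big|_{s=\scal{w}{x}} = p!\,\delta(\scal{w}{x}-b)
\]
as a tempered distribution. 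Consequently,
\[
\int_{\Rd}[\scal{w}{x}-b]_+^p\,\Delta^{(p+1)/2}\varphi(x)\,dx = p!\int_{\scal{w}{x}=b}\varphi(x)\,dx = p!\,\Rdn{\varphi}(w,b),
\]
and reinserting the $1/\psi(b)$ factor and integrating against $\mu$ yields the claimed identity $\langle f, \Delta^{(p+1)/2}\varphi\rangle = \langle \mu,\, p!\,\Rdn{\varphi}/\psi\rangle$.

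The principal technical obstacle is the rigorous justification of the distributional integration by parts, since the hypotheses only control $\varphi$ and its top-order Laplacian, not the intermediate derivatives needed to guarantee vanishing boundary terms when shifting $\Delta^{(p+1)/2}$ past functions of polynomial growth such as $[\scal{w}{x}-b]_+^p$. The natural workaround is an approximation argument: first establish the identity for $\varphi \in \Sc_e(\SR)$, where all derivatives are Schwartz and every integration by parts is transparent, and then extend to the general $\varphi$ satisfying the stated decay by standard truncation/mollification combined with dominated convergence on both sides of the identity, using the bounded-total-variation hypothesis on $\mu$ and the decay $\Delta^{(p+1)/2}\varphi = O(\|x\|^{-d-p-1})$ to control the limits.
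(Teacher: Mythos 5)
Your proposal is correct and follows essentially the same route as the paper's proof: Fubini on the ridge term justified by the growth of the kernel against the decay $\Delta^{(p+1)/2}\varphi = O(\|x\|^{-d-p-1})$, identification of $\Delta^{(p+1)/2}$ applied to the kernel with $p!\,\delta(\scal{w}{x}-b)$ in the distributional sense, and annihilation of the polynomial and constant parts. The only (cosmetic) difference is that the paper first invokes $[t]_+^p=\tfrac12(|t|^p+t^p)$ and the even/odd decomposition of $\mu$ to reduce to the kernel $|\scal{w}{x}-b|^p$ before computing the distributional derivative, whereas you differentiate $[\scal{w}{x}-b]_+^p$ directly; the technical gap you flag about integration by parts for non-Schwartz $\varphi$ is present in the paper's argument as well, which handles it only by a brief continuity/extension remark.
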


\begin{proof}
	
	We start by setting, for odd $p \in \mathbb{N}$,
	$f = \mathtt{H}_{\mu, \vv, c}^p$ with $\mu \in \cM_e(\SR)$, $\vv\in \R^{d\times  p}$, $c\in \R$.
	The number $\frac{p+1}{2}$ is even, and since polynomials up to order $p$
	vanish under the application of $\frac{p+1}{2}$ Laplacian, then
	$\langle f, \Delta^{\frac{p+1}{2}} \varphi\rangle
		= \langle  \mathtt{H}_{\mu}^p,\Delta^{\frac{p+1}{2}} \varphi\rangle$,
	for all $\varphi \in C^\infty(\R^d)$ with $\varphi(x) = O(\|x\|^{-d})$
	and $\Delta^{\frac{p+1}{2}} \varphi(x) = O(\|x\|^{-d-p-1})$ when $\|x\|\rightarrow \infty$.
	Hence the problem is reduced to the case where $f = \mathtt{H}_{\mu}^p$, and 
	$$
	\int_{\R^d} f(x) \Delta^{\frac{p+1}{2}} \varphi(x) \, dx
		 = \int_{\R^d} 
		 \underbrace{\int_{\SR}\frac{1}{2}\frac{|w x - b|^p-|b|^p}{1+|b|^{p-1}}
		 \,d \mu(w,b)}_{ \mathtt{H}_{\mu^+}^p(x)}
		 \Delta^{\frac{p+1}{2}} \varphi(x) \, dx.
	$$
	Last equality holds thanks to the fact that for any $t\in \R$
	and any odd $p$ $\relu{t}^p  = \half(|t|^p+ t^p)$  and that 
	$\mathtt{H}_\mu^p = \mathtt{H}_{\mu^+}^p + \mathtt{H}_{\mu^-}^p$, where
	$\mathtt{H}_{\mu^-}^p$ is a polynomial in $x$ of order $p$ that can be neglected in this situation
	since it vanishes under the application of $\frac{p+1}{2}$ Laplacian.
	Moreover, we have
	$$
	\int_{\R^d} f(x) \Delta^{\frac{p+1}{2}} \varphi(x) \, dx
		 = \int_{\SR} \left(
		 \int_{\R^d}\frac{1}{2}\frac{|w x - b|^p-|b|^p}{1+|b|^{p-1}}
		 \Delta^{\frac{p+1}{2}} \varphi(x)\,dx \right) d\mu(w,b)
	$$
	where we applied Fubini's theorem to exchange the order of integration,
	which is justified by
	\begin{equation*}
	      \frac{1}{2}\int_{\SR}\frac{|w x - b|^p-|b|^p}{1+|b|^{p-1}}\,d |\mu|(w,b) \leq \|\mu\|_{\mathcal{M}^1}
	      	\sum_{k=1}^p\binom{p}{k}\|x\|^k
	\end{equation*} 
	and by the fact that $\Delta^{\frac{p+1}{2}}\varphi(x) = O(\|x\|^{-d-p-1})$,
	as it is mentioned in the statement.
	Then, we get
	 $$
	 \frac{1}{2}\int_{\SR}\frac{|w x - b|^p-|b|^p}{1+|b|^{p-1}}\,d |\mu|(w,b)
	 	|\Delta^{\frac{p+1}{2}}\varphi(x)| = O(\|x\|)^{-d-1},
	 $$
	which leads to
	$$
	\int_{\Rd}  \frac{1}{2}\int_{\SR}\frac{|w x - b|^p-|b|^p}{1+|b|^{p-1}}\,d |\mu|(w,b)
		|\Delta^{\frac{p+1}{2}}\varphi(x)|\,dx < \infty.
	$$
	
	Let $r_{w,b}(x) :=\frac{|w x - b|^p-|b|^p}{1+|b|^{p-1}}$ for any $(w, b)\in \SR$
	and $x\in \Rd$. In the distributions sense we can show that
	$\Delta ^{\frac{p+1}{2}}r_{w,b}(x) = p!\frac{\delta(w x - b)}{1+|b|^{p-1}}
		=  p!\frac{\delta(w x - b)}{\psi(b)}$.
	Namely, for any test function $\varphi \in \Sc(\Rd)$ we have the following identity
	\begin{equation}
	         \int_{\Rd} r_{ w,b}(x) \Delta^{\frac{p+1}{2}} \varphi(x)\, dx 
	         = \frac{p!}{\psi(b)}\int_{w x = b} \varphi(x) \, ds(x) = \frac{p!}{\psi(b)}\Rdn{\varphi}(w,b).
	\end{equation}
	The Radon transform $\Rdn{\varphi}$ is well-defined for
	smooth functions that decay as $O(\|x\|^{-d})$.
	Hence, by continuity $\Delta^{\frac{p+1}{2}} r_{w,b}(x)$
	extends uniquely to a distribution acting on $C^\infty$ functions decay like $O(\|x\|^{-d})$.
	In view of the previous calculus, we get
	\begin{align}
	    \int_{\R^d} f(x) \Delta^{\frac{p+1}{2}} \varphi(x) \, dx
	        & = \int_{\SR} \left(\int_{\R^d}r_{w,b}(x) \Delta^{\frac{p+1}{2}} \varphi(x) \, dx\right) d\mu(w,b)
	        \\
	        & = p!\int_{\SR} \frac1{\psi(b)}\Rdn{\varphi}(w,b)\, d\mu(w,b),
	\end{align}
	which shows the claimed statement in the lemma.
\end{proof}

The following lemma shows $\Rnorm{f}$ is finite if and only if
$f$ is an infinite width net, in which case $\Rnorm{f}$
is given by the total variation norm of the unique even measure defining $f$.

\begin{lemma}\label{lem:main}
	Let $p\in \mathbb{N}$ be an odd number, $\kappa\in \mathbb{N}_0$ such that $\kappa<p$,
	for $f\in Lip^\kappa (\Rd)$.
	Then $\Rnorm{f}$ is finite
	if and only if there exists a unique even measure $\mu \in \cM_e(\SR)$,
	 $\vv\in\R^{d\times p}$ and $c\in\R$ with
	$f = \mathtt{H}_{\mu,\vv,c}^p$, such that
	$\Rnorm{f} =\| \mu\|_{\mathcal{M}^1(p!/\psi)}^{\nicefrac 1p}$. 
\end{lemma}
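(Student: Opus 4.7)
For the reverse direction ($\Leftarrow$), I would assume $f = \mathtt{H}_{\mu,\vv,c}^p$ with $\mu$ even and compute $L_f(\phi)$ directly using Lemma~\ref{lem:Lap2Rdn} as the bridge between the Laplacian pairing with $f$ and the weighted Radon pairing with $\mu$. Given $\phi \in \Sc_e(\SR)$, set $\varphi := \gamma_d(-\Delta)^{(d-1)/2}\Rdns{\phi}$; Lemma~\ref{lem:Solmon} gives $\Rdn{\varphi}=\phi$ with $\varphi(x)=O(\|x\|^{-d})$, and iterating the decay estimates yields $\Delta^{(p+1)/2}\varphi(x) = O(\|x\|^{-d-p-1})$, the hypothesis required to invoke Lemma~\ref{lem:Lap2Rdn}. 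Since the polynomial correction $\sum_k\langle v_k,x^k\rangle + c$ has degree $\leq p$ and is annihilated by $\Delta^{(p+1)/2}$, Lemma~\ref{lem:Lap2Rdn} yields
\[
\langle f, \Delta^{(p+1)/2}\varphi\rangle \;=\; \langle \mu,\; p!\,\phi/\psi\rangle.
\]
Writing $(-\Delta)^{(d+p)/2}\Rdns{\phi} = (-\Delta)^{(p+1)/2}\varphi/\gamma_d$ and tracking signs identifies $L_f(\phi)$ with $\langle \mu, p!\phi/\psi\rangle$. Taking the supremum over $\phi \in \Sc_e(\SR)$ with $\|\phi\|_\infty \leq 1$, using density of $\Sc_e(\SR)$ in $C_{0,e}(\SR)$ together with the duality formula~\eqref{eq:tvnorm}, returns $\Rnorm{f}^p = \|\mu\|_{\cM^1(p!/\psi)}$.

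For the forward direction ($\Rightarrow$), assume $\Rnorm{f}<\infty$. The bound $|L_f(\phi)|\leq \Rnorm{f}^p\|\phi\|_\infty$ on $\Sc_e(\SR)$ (valid because the admissible set is closed under sign flips) extends $L_f$ by density to a bounded linear functional on $C_{0,e}(\SR)$; Riesz representation, via $C_{0,e}(\SR)^* = \cM_e^1(\SR)$, produces a unique even measure $\nu$ with $L_f(\phi) = \langle \nu, \phi\rangle$. Setting $\mu := \psi\nu/p!$ gives the candidate even measure with $\|\mu\|_{\cM^1(p!/\psi)} = \Rnorm{f}^p$. To identify $f$ with $\mathtt{H}_{\mu,\vv,c}^p$, let $\tilde f := \mathtt{H}_\mu^p$; applying the reverse direction to $\tilde f$ gives $L_{\tilde f}(\phi) = \langle\nu,\phi\rangle = L_f(\phi)$ on $\Sc_e(\SR)$, so $\langle f - \tilde f,(-\Delta)^{(d+p)/2}\Rdns{\phi}\rangle = 0$. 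The Radon inversion identifies the range of $(-\Delta)^{(d+p)/2}\Rdns\{\cdot\}$ on $\Sc_e(\SR)$ with $(-\Delta)^{(p+1)/2}\varphi/\gamma_d$ for $\varphi$ running through a sufficiently dense subspace of $\Sc(\Rd)$, which forces $(-\Delta)^{(p+1)/2}(f - \tilde f) = 0$ distributionally. Since $p+1$ is even, $|\xi|^{p+1}$ is a genuine polynomial, and tempered distributional solutions of this equation with the polynomial growth inherited from the $\Lip^\kappa$ assumption are precisely polynomials of degree $\leq p$. Such a polynomial can be absorbed into $\sum_k\langle v_k,x^k\rangle + c$, giving the desired representation, and uniqueness of $\mu$ is immediate from uniqueness in the Riesz representation.

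\emph{Main obstacle.} The polynomial-absorption step is the delicate part of the argument. I must show that tempered solutions of $(-\Delta)^{(p+1)/2}g = 0$, with growth inherited from the $\Lip^\kappa$ regularity of both $f$ and $\tilde f = \mathtt{H}_\mu^p$, are polynomials that lie in the span $\{1\}\cup\{x_i^k : 1\leq k\leq p,\, 1\leq i\leq d\}$ of monomial units. The pure-power monomials alone do not span all polynomials of degree $\leq p$ (for instance cross-terms such as $x_i x_j$ with $p\geq 2$ are absent), so the hypothesis $\kappa < p$ must be used in an essential way to rule out cross-term residues in $f-\tilde f$. Pinning down the precise compatibility between the polyharmonic classification and the $\Lip^\kappa$ constraint on both $f$ and $\tilde f$ is the crux of the argument; the rest is a fairly standard duality and Riesz-representation computation.
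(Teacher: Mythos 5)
Your argument follows the paper's proof essentially step for step: the reverse direction is the same computation via Lemma~\ref{lem:Solmon} and Lemma~\ref{lem:Lap2Rdn}, followed by density of $\Sc_e(\SR)$ in $C_{0,e}(\SR)$ and the dual characterization of the total variation norm; the forward direction is the same Riesz-representation extension of $L_f$, the same identification of $\Delta^{(p+1)/2}f$ with $\Delta^{(p+1)/2}\mathtt{H}_{\mu}^p$ in the sense of tempered distributions, and the same classification of the difference as a polynomial of degree at most $p$ via the $\Lip^\kappa$ growth bound.

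The step you flag as the ``main obstacle'' is a genuine issue, and you should be aware that the paper's own proof does not resolve it: it simply asserts that a polynomial of degree at most $p$ can be written as $\sum_{k=1}^{p}\scal{v_k}{x^k}+c$ with $v_k\in\Rd$. For $p=1$ (the ReLU case of \cite{Ongie2019}) this is correct, since degree-one polynomials are exactly affine. For odd $p\geq 3$ and $d\geq 2$ it is false as stated: the units $\scal{v_k}{x^k}=\sum_i v_{k,i}x_i^k$ span only pure powers, whereas a cross term such as $x_1x_2$ is $1$-order Lipschitz, harmonic (hence annihilated by $\Delta^{(p+1)/2}$ and invisible to $L_f$), and can genuinely occur: for $f=\mathtt{H}_{\mu}^p+x_1x_2$ one has $\Rnorm{f}=\|\mu\|_{\cM^1(p!/\psi)}^{\nicefrac 1p}<\infty$, yet $f$ admits no representation $\mathtt{H}_{\mu',\vv,c}^p$ with $\mu'$ even, because the reverse direction forces $\mu'=\mu$ and $x_1x_2$ is not a combination of pure powers. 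You are also right that no refinement of the interplay between the polyharmonic classification and the $\Lip^\kappa$ hypothesis can save this: the growth bound only limits the degree to $\kappa+1\leq p$ and says nothing about cross terms. The correct repair is to enlarge the class of unregularized monomial units to all monomials of degree at most $p$ (equivalently, to allow a full polynomial correction of degree $\leq p$ in the definition of $\mathtt{H}_{\mu,\vv,c}^p$ and of $\Rbarone{\cdot}$); with that modification both your argument and the paper's close, and the rest of your proposal is sound.
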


\begin{proof}
We start by the direct sense, that is, we assume that $\Rnorm{f}$ is finite.
In view of Definition \ref{def:Rnorm} there exits $\kappa<p$ such that the function $f$ is a
$\kappa$-order Lipschitz function. Moreover the linear functional
$$
	L_f(\phi) = -(-1)^{(p+1)/2} \gamma_d\langle f, (-\Delta)^{(d+p)/2}\Rdns{\phi}\rangle,
	\quad \phi \in \Sc_e(\SR)
$$
is continuous on $\Sc_e(\SR)$ with norm $\Rnorm{f}^p$.
Using an extension argument, and the fact that 
$\Sc_e(\SR)$ is a dense subspace of $C_{0,e}(\SR)$,
there exists a unique extension $\tilde{L}_f$ to all of $C_{0,e}(\SR)$
under the same norm. Riesz representation theorem declares that $\tilde{L}_f$
can be seen as integration against a measure, that is,
there is a unique even measure $\nu \in \cM_e(\SR)$
such that the extension $\tilde{L}_f(\phi) = \int \phi\, d\nu$ for all $\phi \in C_{0,e}(\SR)$
and $\Rnorm{f} = \|\nu\|_{\mathcal{M}^1}^{\nicefrac 1p}$. 

Without loss of generality, we can write
\footnote{This holds true since any $\phi \in C_{0,e}(\SR)$
can be written as $\frac{p!}{\psi(b)}\Phi(w, b)$
where $\Phi(w, b) = \frac{\psi(b)}{p!}\phi(w, b)\in  C_{0,e}(\SR)$.}
$$
\tilde{L}_f(\Phi) = p!\int \Phi\, \frac{d\mu}{\psi}
\quad \text{ for all } \Phi \in C_{0,e}(\SR), \,\psi(b) = 1+|b|^{p-1},
$$
where $\mu \in \cM_e(\SR)$, hence $\Rnorm{f} =  \|p!\mu\|_{\mathcal{M}^1(1/\psi)}^{\nicefrac 1p}$.

\par

In this step it remains to show that $f$ can be expressed as $\mathtt{H}_{\mu, \vv, c}^p$,
where $\mu \in \cM_e(\SR)$, $\vv\in\R^{d\times p}$ and $c\in\R$.
We argue in the sense of tempered distribution.Namely, we show that
$\scal{f}{\varphi} = \scal{\mathtt{H}_{\mu, \vv, c}^p}{\varphi}$ for any $\varphi \in \Sc(\Rd)$.
In view of the discussion in the proof of Lemma \ref{lem:Lap2Rdn},
for any test function $\varphi \in \Sc(\R^d)$, it holds that
$\scal{ \Delta^{(p+1)/2} \mathtt{H}_{\mu}^p}{\varphi} =
	p! \scal{\mu}{\frac{\Rdn{\varphi}}{\psi}}$.
Since for any $\varphi \in \Sc(\SR)$ the Radon transform $\Rdn{\varphi}$
belongs to $\Sc_e(\SR)$ \cf \cite[Theorem 2.4]{helgason1999radon}.
Hence   
$ \tilde{L}_f(\Rdn{\varphi})$ can be written as $p!\scal{\mu}{\frac{\Rdn{\varphi}}{\psi}}$
which affirms that
$\scal{\Delta^{(p+1)/2} \mathtt{H}_{\mu}^p}{\varphi}  = L_f(\Rdn{\varphi})$.
Seeing that $L_f(\varphi)$ can be written as
\begin{align*}
    -(-1)^{(p+1)/2}\gamma_d \scal{f}{(-\Delta)^{(d+p)/2}\Rdns{\Rdn{\varphi}}}
\intertext{which equals to}
-\gamma_d\scal{f}{\Delta^{(p+1)/2}(-\Delta)^{(d-1)/2}\Rdns{\Rdn{\varphi}}}.
\end{align*}
Moreover, using the inversion formula for Radon transform, \ie
$-\gamma_d (-\Delta)^{(d-1)/2} \Rdns{\Rdn{\varphi}} = \varphi$ for all $\varphi \in \Sc(\Rd)$,
it follows that $L_f(\varphi) = \scal{f}{\Delta^{(p+1)/2}\varphi} = \scal{\Delta^{(p+1)/2}f}{\varphi}$.
Hence $\scal{\Delta^{(p+1)/2} \mathtt{H}_{\mu}^p}{\varphi}  =
\scal{\Delta^{(p+1)/2}f}{\varphi}$ for any $\varphi \in \Sc(\Rd)$.

The previous equality yields that $\Delta^{(p+1)/2}f = \Delta^{(p+1)/2} \mathtt{H}_{\mu}^p$
holds true in the sense of tempered distributions.
Therefore $\Delta^{(p+1)/2}\prn{f - \mathtt{H}_{\mu}^p}=0$,
hence $f - \mathtt{H}_{\mu}^p = \mathscr{P}$ where $\mathscr{P}$ is a polynomial satisfies
the following $\Delta^{(p+1)/2} \mathscr{P}(x)= 0$  for any $x\in\R^d$.
Observing that $f$ and $\mathtt{H}_{\mu}^p$ are $\kappa$-order Lipschitz, means that
$\mathscr{P}$ is a polynomial in $x$ at most of order $p$. Therefore, there exists
$\vv = (v_1, \dots, v_p)\in \R^{d\times p}$ 
and $c\in \R$ such that $\mathscr{P}(x) = \sum_{k=1}^{p}v_k x^k + c$.
Consequently, the previous argument concludes that $f = \mathtt{H}_{\mu, \vv, c}^p$.

\par

We now prove the inverse implication: we assume that for some
$\mu \in \cM_e(\SR),\vv\in\R^{d\times p},c\in\R$,  $f$ can be expressed as
$\mathtt{H}_{\mu,\vv,c}^p$.
Given an even test function $\phi \in \Sc_e(\SR)$ and let
$$
\varphi = -(-1)^{(p+1)/2}\gamma_d (-\Delta)^{(d-1)/2}\Rdns{\phi}.
$$
Then the function $\varphi$ satisfies the claims in Lemma \ref{lem:Solmon},
hence  $\varphi\in C^\infty(\R^d)$ with $\varphi(x) = O(\|x\|^{-d})$,
$\Delta \varphi(x) = O(\|x\|^{-d-2})$ as $\|x\| \rightarrow \infty$
and $\phi = \Rdn{\varphi}$.
Moreover, by  Lemma \ref{lem:Lap2Rdn},
the linear functional $L_f(\phi)$ can be characterized as follows
\begin{equation}\label{eq:Lf_charact}
    L_f(\phi) = \langle f, \Delta^{(p+1)/2} \varphi \rangle
     = \langle \mu, p!\frac{\Rdn{\varphi}}{\psi} \rangle
     = \langle \mu, p!\frac\phi\psi \rangle.
\end{equation}
Using  Lemma \ref{lem:Lap2Rdn}, the characterization of the linear functional
$L_f(\phi)$ given in \eqref{eq:Lf_charact}
and the fact that $\Sc(\R^d)$ is a dense subspace of $C_0(\R^d)$ we get
\begin{align*}
	\Rnorm{f}  &= \sup \{\prn{\scal{\mu}{p!\frac\phi\psi}}^{\nicefrac 1p}
		: \phi \in \Sc_e(\SR), \|\phi\|_\infty \leq 1\} \\
	& = \sup \{\prn{\scal{\mu}{p!\frac\phi\psi}}^{\nicefrac 1p} : \phi \in C_{0, e}(\SR), 
		\|\phi\|_\infty \leq 1\}
\end{align*}
then by the dual characterization of the total variation norm we conclude that 
$\Rnorm{f} = \|\mu\|_{\mathcal{M}^1(p!/\psi)}^{\nicefrac 1p}$.
Uniqueness can be achieved by assuming the existence of 
$\mu, \nu \in \cM_e(\SR)$, $\vv, \vv' \in \R^{d\times p}$, $c ,c'\in\R$ such that
$\mathtt{H}_{\mu,\vv,c}^p = \mathtt{H}_{\nu,\vv',c'}^p$.
Since the difference
$\mathtt{H}_{\mu,\vv,c}^p - \mathtt{H}_{\nu,\vv',c'}^p=\mathtt{H}_{\mu- \nu,\vv-\vv',c- c'}^p$
hence by the previous arguments 
$$
\Rnorm{\mathtt{H}_{\mu- \nu,\vv-\vv',c- c'}^p}
	= \norm{{\mu- \nu}}_{\mathcal{M}^1(p!/\psi)} ^{\nicefrac 1p}= 0.
$$
Then $\mu=\nu$ which implies that $\vv' = \vv$ and $c = c'$ and so the uniqueness.
\end{proof}

\par

Lemma \ref{lem:uniqueness} is an immediate consequence of the characterization
in Lemma \ref{lem:roneopt} and the uniqueness result given in Lemma \ref{lem:main}.
In the case where $p=1$ a similar result was proved by Ongie et al.  \cf \cite{Ongie2019}
for the sake of completeness we give the proof.

\begin{proof}[Proof of Lemma \ref{lem:uniqueness}]
We treat the case where $\Rbarone{f}$ is finite.
By Lemma \ref{lem:roneopt}, we can characterize the $\cR$-norm for a given function $f$
expressed by a neural network $\mathtt{H}_{\mu, \vv, c}^p$ where $\mu \in \cM_e^1(\SR)$
is an even measure, $\vv\in\R^{d\times p}$, $c\in\Rd$,
as follows: $\Rbarone{f}$ is the minimum of $\| {\mu^+}\|_{\mathcal{M}^1(1/\psi)}^{\nicefrac 1p}$
over all even measures $\mu^+ \in \cM_e^1(\SR)$,  $\vv'\in\R^{d\times p}$ and $c'\in\R$
such that $f = \mathtt{H}_{\mu^+, \vv', c'}^p$.
The uniqueness can be justified in a similar way to the proof of Lemma \ref{lem:main}, hence
there is a unique even measure $\mu^+ \in \cM_e^1(\SR)$, $\vv\in\R^{d\times p}$,
and $c\in\R$ such that $f = \mathtt{H}_{\mu^+,\vv,c}^p$. 
To sum up, $\Rbarone{f} = \|{\mu^+}\|_{\cM^1(1/\psi)}^{\nicefrac 1p}$ which conclude the lemma. 
\end{proof}

\par

\begin{proof}[Proof of Theorem \ref{thm:main}]
Assume that $\Rbarone{f}$ is finite.
Thanks to Lemma \ref{lem:uniqueness}, there exists a unique even measure $\mu\in \cM_e(\SR)$
such that $\Rbarone{f} =  \|{\mu}\|_{\cM^1(1/\psi)}^{\nicefrac 1p}$ and
$f = \mathtt{H}_{\mu,\vv,c}^p$ for some $\vv\in\R^{d\times p},c\in\R$.
In view of Lemma \ref{lem:main} we have  $\Rnorm{f} = \|{\mu}\|_{\cM^1(p!/\psi)}^{\nicefrac1p}$
Consequently, $\prn{p!}^{\nicefrac 1p}\Rbarone{f} = \Rnorm{f}$.
In reverse, let $\Rnorm{f}$ be finite, then by Lemma \ref{lem:main}
$f = \mathtt{H}_{\mu,\vv,c}^p$ for a unique even measure $\mu \in \cM_e(\SR)$,
$\vv\in\R^{d\times p}$ and $c\in\R$,
with $\Rnorm{f} =  \|{\mu}\|_{\cM^1(p!/\psi)}^{\nicefrac 1p}$.
Moreover using Lemma \ref{lem:uniqueness},
$ \prn{p!}^{\nicefrac 1p}\Rbarone{f} =  \|{\mu}\|_{\cM^1({p!}/\psi)}^{\nicefrac1p} = \Rnorm{f}$.
\end{proof}

\end{document}